\documentclass{article}

\usepackage{microtype}
\usepackage{graphicx}
\usepackage{booktabs} 
\usepackage{enumitem}

\usepackage{cancel} 
\usepackage{mathrsfs}
\usepackage{float}
\usepackage{tikz}
\usetikzlibrary{arrows.meta}

\usepackage{hyperref}

\usepackage[accepted]{icml2026}

\usepackage{amsmath}
\usepackage{amssymb}
\usepackage{mathtools}
\usepackage{amsthm}

\DeclareMathOperator{\Var}{Var}
\DeclareMathOperator{\Cov}{Cov}

\newcommand{\E}{\mathbb{E}}

\newcommand{\R}{\mathbb{R}}

\usepackage[capitalize,noabbrev]{cleveref}

\theoremstyle{plain}
\newtheorem{theorem}{Theorem}[section]

\newtheorem{lemma}[theorem]{Lemma}

\theoremstyle{definition}

\theoremstyle{remark}

\usepackage[textsize=tiny]{todonotes}

\icmltitlerunning{Machine-Learning-Assisted Generalized Entropy Calibration for Prediction-Powered Inference}

\begin{document}
	
	\twocolumn[
	\icmltitle{MEC: Machine-Learning-Assisted Generalized Entropy Calibration for Semi-Supervised Mean Estimation}
	
	
	
	\icmlsetsymbol{equal}{*}
	
	\begin{icmlauthorlist}
		\icmlauthor{Se Yoon Lee}{yyy}
		\icmlauthor{Jae-Kwang Kim}{sch}
	\end{icmlauthorlist}
	
	\icmlaffiliation{yyy}{Texas A\&M University}
	\icmlaffiliation{sch}{Iowa State University}
	
	\icmlcorrespondingauthor{Se Yoon Lee}{seyoonlee.stat.math@gmail.com}
	
	\icmlkeywords{Machine Learning, ICML}
	
	\vskip 0.3in
	]
	
	
	
	\printAffiliationsAndNotice{}  
	
	\begin{abstract}
		Obtaining high-quality labels is costly, whereas unlabeled covariates are often abundant, motivating semi-supervised inference methods with reliable uncertainty quantification. Prediction-powered inference (PPI) leverages a machine-learning predictor trained on a small labeled sample to improve efficiency, but it can lose efficiency under model misspecification and suffer from coverage distortions due to label reuse. We introduce Machine‑Learning‑Assisted Generalized Entropy Calibration (MEC), a cross‑fitted, calibration‑weighted variant of PPI. MEC improves efficiency by reweighting labeled samples to better align with the target population, using a principled calibration framework based on Bregman projections. This yields robustness to affine transformations of the predictor and relaxes requirements for validity by replacing conditions on raw prediction error with weaker projection‑error conditions. As a result, MEC attains the semiparametric efficiency bound under weaker assumptions than existing PPI variants. Across simulations and a real‑data application, MEC achieves near‑nominal coverage and tighter confidence intervals than CF‑PPI and vanilla PPI.
	\end{abstract}

	\section{Introduction}\label{sec:Introduction}
	
	In many inference problems, high-quality labels are scarce while unlabeled covariates are abundant. Recent advancements in machine learning (ML) offer substantial potential to mitigate the reliance on gold-standard data while ensuring valid scientific discovery. Semi-supervised inference capitalizes on this setting by combining a small labeled sample with a large unlabeled sample to estimate population quantities with valid uncertainty quantification~\citep{zhang2019semi,motwani2023revisiting,wang2020methods}.
	
	A powerful recent approach to this problem is prediction-powered inference (PPI)~\citep{angelopoulos2023prediction}. The core idea is elegant: use an ML predictor to impute outcomes on all unlabeled samples, then correct for bias using labeled residuals. PPI is broadly applicable to many scientific problems, as it can leverage flexible ML models, including deep neural networks~\citep{lecun2015deep}, gradient-boosted trees~\citep{chen2016xgboost}, random forests~\citep{breiman2001random}, and BART~\citep{Chipman2010}.

	
	However, in practice, the efficiency of PPI can deteriorate when the predictive model is imperfect, the labeled fraction is small, or the PPI is applied incautiously. In response, multiple variants have been proposed. \citet{angelopoulos2024} develop PPI++, an efficiency-enhanced version of PPI that optimizes how predictions and labels are combined, potentially yielding tighter confidence intervals than the original PPI. \citet{zrnic2024cross} introduce a cross-prediction technique that trains the predictor out of fold via sample splitting to avoid overfitting from label reuse; in this paper, we refer to their method as cross-fitted PPI (CF--PPI). Other variants of PPI include \citet{
		fisch2024stratified, 
		gu2024local,einbinder2025semi,luo2024federated}, reflecting the growing popularity and adaptability of the PPI framework in modern semi-supervised inference.
	
	
	In this paper, we present a new PPI variant inspired by weight calibration~\citep{deville1992calibration}: \textbf{Machine-Learning-Assisted Generalized Entropy Calibration (ML-GEC; hereafter, MEC)}. MEC adapts generalized entropy calibration (GEC)~\citep{kwon2025} to the PPI framework, utilizing Bregman projections to calibrate weights for enhanced efficiency. This approach parallels long-standing practices in survey sampling—where carefully chosen weights mitigate bias and improve efficiency~\citep{fuller2002regression,hainmueller2012entropy}—but with a crucial distinction. While classical GEC is typically applied to finite-population mean estimation with prediction rules restricted to the linear span of a preset basis, MEC targets superpopulation mean estimation for semi-supervised inference. Crucially, MEC accommodates essentially arbitrary ML predictors to construct the calibration basis, thus fully leveraging the expressive power of modern ML.
	
	
	We prove that the MEC estimator enjoys desirable asymptotic properties under standard regularity conditions. Theoretically, MEC relaxes the requirements for consistency and asymptotic normality by replacing assumptions on the raw prediction error with weaker assumptions on its projection error. This is enabled by using an out-of-fold predictor for both cross-prediction and calibration-basis construction. We further show that MEC is robust to affine transformations of the predictor, thereby reducing misspecification-driven variance inflation and attaining the semiparametric efficiency bound under weaker conditions than CF-PPI/PPI. We also show that MEC generalizes PPI++ in a dual sense (calibration form vs.\ regression form), and they are asymptotically equivalent when the generator is quadratic. MEC is computationally stable and fast because the calibration basis is always two-dimensional, regardless of the covariate dimension. Simulations corroborate these guarantees, demonstrating improved coverage and robustness to moderate misspecification across diverse scenarios and varying labeled data fractions.

	The article is organized as follows. Section \ref{sec:setup} introduces notation and basic setup. Section \ref{sec:Related work} reviews the related works. Section \ref{sec:Weight calibration for prediction-powered inference} develops the MEC formulation and algorithm. Section \ref{sec:Statistical properties} presents theoretical guarantees. Section \ref{sec:Simulation experiments} reports simulation experiments. Section~\ref{sec:extensions_practical}
	discusses practical considerations and extensions of MEC, and
	Section~\ref{sec:Conclusion} concludes with broader implications and future
	directions.
	
	\section{Basic setup}\label{sec:setup}
	We study statistical inference for semi-supervised mean estimation, where collecting
	high-quality labels is challenging but feature observations are abundant. We posit the
	outcome-regression model \(Y = m_0(X) + \varepsilon\) for a generic draw \((X,Y)\sim P\), with
	\(\mathbb{E}[\varepsilon \mid X]=0\) and \(\mathrm{Var}[\varepsilon \mid X]<\infty\), where
	\(m_0(x):=\mathbb{E}[Y \mid X=x]\) denotes the true regression function. The inferential target is the superpopulation mean outcome $\theta_0 = \mathbb{E}[Y]$.
	
	We work in an assumption-lean framework, following recent developments in semiparametric methods and the general PPI framework \cite{wang2020methods,zhang2019semi,chernozhukov2018double,van2011targeted,kennedy2024semiparametric,miao2025assumption}:
	(i) no parametric model is imposed on \(m_0\);
	(ii) no specific distributional assumptions are imposed on the joint law \(P\)—the only
	assumptions are standard regularity conditions (e.g., finite second moments) for asymptotic
	results; and
	(iii) no parametric or explicit convergence rate conditions on the ML predictor used downstream.
	
	We introduce the semi-supervised data structure. We observe two independent samples: an unlabeled i.i.d.\ covariate sample of size \(N\), \(\{X_i\}_{i=1}^N \overset{\text{i.i.d.}}{\sim} P_X\), and an independent labeled i.i.d.\ sample of size \(n\), \(\{(X_j,Y_j)\}_{j\in S} \overset{\text{i.i.d.}}{\sim} P\). Here \(S\) is an index set with \(|S|=n\), \(P_X\) is the \(X\)-marginal of \(P\), and
	\(X\in\mathcal{X}\subset\mathbb{R}^d\). We focus on settings where the unlabeled covariate sample is typically much
	larger than the labeled sample, as in applications where collecting features is
	far cheaper than acquiring labels. Let \(f_N := n/N\) denote the label fraction. Throughout,
	\(N \to \infty\), \(n=n(N)\to\infty\), and \(f_N \to f \in (0,1)\); for brevity we write
	\(f\) for \(f_N\).
	

	
	
	\section{Related works}\label{sec:Related work}
	\subsection{Prediction-powered inference}\label{subsec:Prediction-powered inference}
	\citet{angelopoulos2023prediction} proposed the PPI estimator of the
	superpopulation mean,
	\begin{align}\label{eq:ppi-mean-sum}
		\widehat\theta_{\mathrm{PPI},m}
		=
		\frac{1}{N}\sum_{i=1}^N m(X_i)
		\;+\;
		\frac{1}{n}\sum_{j\in S}\{Y_j - m(X_j)\},
	\end{align}
	where \(m:\mathcal{X}\to\mathbb{R}\) is a (possibly misspecified) prediction rule. For the moment, we treat \(m\) as fixed—that is, not trained on the labeled data.
	
	In \eqref{eq:ppi-mean-sum}, the first term is a plug-in prediction component computed over the unlabeled covariates using \(m\), while the second term serves as a residual-correction based on the labeled data. By the linearity of expectation, we have $\mathbb{E}_P[\widehat\theta_{\mathrm{PPI},m}] = \mathbb{E}_{P}[m(X)] + \mathbb{E}_{P}[Y] - \mathbb{E}_{P}[m(X)] = \mathbb{E}_{P}[Y] = \theta_0$ for any fixed predictor \(m\). Consequently, \(\widehat\theta_{\mathrm{PPI},m}\) is an unbiased estimator of the population mean, irrespective of whether \(m\) is misspecified. The choice of \(m\) dictates the estimator's efficiency; specifically, the estimator is semiparametrically efficient if and only if the predictor coincides with the oracle regression function \(m(x) = m_0(x) = \mathbb{E}[Y \mid X = x]\). (See Section~\ref{sec:Semiparametric efficiency theory} in the Appendix for theoretical details.)
	
	
	\citet{angelopoulos2023prediction} show that, under mild regularity conditions, the \(100(1-\alpha)\%\) Wald-type
	confidence interval attains asymptotically nominal coverage (see Theorem~S1 in
	\cite{angelopoulos2023prediction} for details):
	\begin{align}
		\label{eq:aymptotic_validity}
		\liminf_{N,\,n(N)\to\infty}\,
		\mathbb{P}\!\left(\theta_0 \in C^{\mathrm{PPI},m}_{\alpha}\right)
		\;\ge\; 1-\alpha,
	\end{align}
	where
	\begin{align}
		\label{eq:confidence_interval}
		&C^{\mathrm{PPI},m}_{\alpha}
		:= 
		\Big[\,
		\widehat\theta_{\mathrm{PPI},m}
		\,\pm\,
		z_{1-\alpha/2}\,\widehat{\mathrm{se}}_{m}
		\Big],\\
		\nonumber
		&\widehat{\mathrm{se}}_{m}^{\,2}
		=
		\frac{\widehat{\Var}_N\!\big\{ m(X)\big\}}{N}
		\;+\;
		\frac{\widehat{\Var}_n\!\big\{Y- m(X)\big\}}{n},
	\end{align}
	where $z_{1-\alpha} = \Phi^{-1}(1-\alpha)$ denotes the critical value of the standard normal distribution corresponding to cumulative probability $1-\alpha$. Here \(\widehat{\Var}_N\) and \(\widehat{\Var}_n\) denote empirical variances over all \(N\) covariates and over the labeled set \(S\), respectively.
	
	
	In this paper, we consider the setting where no pre-trained predictive model is available; therefore, the predictor \(m\) must be trained using the labeled data \(\{(X_j, Y_j)\}_{j \in S}\). This framework aligns with the outcome-regression perspective in causal inference under a known, constant propensity score~\citep{kennedy2024semiparametric, chernozhukov2018double, van2006targeted}.
	
	Let \(\widehat{m}\) denote a model trained on the labeled sample \(\{(X_j,Y_j)\}_{j\in S}\). In practice, \(\widehat{m}\) is typically obtained using flexible ML algorithms such as random forests, gradient boosting, or deep neural networks. Replacing \(m\) with
	\(\widehat m\) yields the implemented PPI estimator
	\begin{align}\label{eq:ppi-mean-sum-fitted}
		\widehat\theta_{\mathrm{PPI}}
		&=
		\frac{1}{N}\sum_{i=1}^N \widehat m(X_i)
		\;+\;
		\frac{1}{n}\sum_{j\in S}\{Y_j - \widehat m(X_j)\}.
	\end{align}
	An associated confidence interval is obtained by substituting \(\widehat m\) for \(m\) in \eqref{eq:confidence_interval}. 
	
	We note that the vanilla PPI method \eqref{eq:ppi-mean-sum-fitted}—which uses the single-fitted
	predictor \(\widehat m\) learned from the labeled data without any additional
	safeguards or efficiency enhancements—has two major caveats:

	\begin{list}{}{%
			\setlength{\leftmargin}{0.5em}
			\setlength{\labelsep}{0pt}
			\setlength{\labelwidth}{0pt}
			\setlength{\itemsep}{0pt}
			\setlength{\topsep}{2pt}
			\setlength{\parsep}{0pt}
		}
		\item \textbf{I. Label reuse.} The rectifier term in
		\eqref{eq:ppi-mean-sum-fitted} reuses the labeled outcomes \emph{both} to train
		\(\widehat m\) and to evaluate residuals. When \(\widehat m\) is highly flexible, this
		reuse can induce overfitting bias and distort variance estimation.
		Consequently, the \(100(1-\alpha)\%\) Wald-type confidence interval may fail to achieve
		nominal coverage.
		\item \textbf{II. Efficiency shortfall.}
		PPI is not, in general, semiparametrically
		efficient. Its asymptotic variance is minimized only when the predictor coincides with
		the true regression function \(m_0(x)=\mathbb{E}[Y\mid X=x]\); otherwise,
		misspecification inflates variance and degrades efficiency.
	\end{list}
	
	
	The primary objective of this paper is to address these challenges. Briefly, we resolve the former issue (\textbf{I}) via cross-fitting (sample splitting) and address the latter issue (\textbf{II}) through weight calibration, with a careful selection of the basis function. Notably, this calibration approach constitutes our main methodological contribution to the PPI framework.
	
	
	\subsection{Cross-fitted prediction-powered inference}
	\citet{zrnic2024cross} proposed CF--PPI, which uses cross-prediction to address the label-reuse issue in the vanilla PPI estimator~\eqref{eq:ppi-mean-sum-fitted}. The central idea of cross-prediction is sample splitting, which is widely used in modern causal-inference workflows—including targeted learning and double ML—to mitigate overfitting of nuisance parameters and to ensure valid asymptotic inference \citep{newey2018cross, chernozhukov2017, ZhengVanDerLaan2010CVTMLE}.

	
	
	We illustrate the CF–PPI implementation. For a chosen integer \(K\) (typically
	\(K=5\) or $10$), partition the labeled set \(S\) into \(K\) folds
	\(S^{(1)},\ldots,S^{(K)}\). For each \(k\in\{1,\ldots,K\}\), fit
	\(\widehat m^{(k)}\) using the labels in \(S\setminus S^{(k)}\). Let
	\(\kappa(i)\in\{1,\ldots,K\}\) denote the fold index of unit \(i\in S\). Define the out-of-fold predictor
	\begin{align}
		\label{eq:out_of_fold_predictor}
		\widehat m^{(-)}(X_i) \;:=\;
		\begin{cases}
			\widehat m^{(\kappa(i))}(X_i), & i \in S,\\[3pt]
			\widehat m^\star(X_i),      & i \notin S,
		\end{cases}    
	\end{align}
	where \(\widehat m^\star\) is an aggregate model used for the plug-in term (e.g., the full-sample fit or the average \(K^{-1}\sum_{k=1}^K \widehat m^{(k)}\)). 
	
	The CF--PPI estimator for the mean $\theta_0=\mathbb{E}[Y]$ is then
	\begin{align}
		\label{eq:cf_ppi_estimator}
		\widehat\theta_{\mathrm{PPI}}^{\mathrm{cf}}
		&=
		\frac{1}{N}\sum_{i=1}^{N} \widehat m^{(-)}(X_i)
		\;+\;
		\frac{1}{n}\sum_{j\in S} \big\{ Y_j - \widehat m^{(-)}(X_j) \big\},
	\end{align}
	where the single-fit predictor $\widehat m$ in \eqref{eq:ppi-mean-sum-fitted}
	is replaced by the out-of-fold predictor $\widehat m^{(-)}$. An associated \(100(1-\alpha)\%\) Wald-type confidence interval is obtained by replacing \(m\) with \(\widehat m^{(-)}\) in the PPI variance formula, i.e.,
	\[
	\widehat{\mathrm{se}}_{\mathrm{cf}}^{\,2}
	=
	\frac{\widehat{\Var}_N\!\big\{\widehat m^{(-)}(X)\big\}}{N}
	+
	\frac{\widehat{\Var}_n\!\big\{Y-\widehat m^{(-)}(X)\big\}}{n},
	\]
	and using the usual Wald interval \(\widehat\theta_{\mathrm{PPI}}^{\mathrm{cf}} \pm z_{1-\alpha/2}\widehat{\mathrm{se}}_{\mathrm{cf}}\).
	
	To understand how cross-prediction avoids label reuse, we rewrite the rectifier term in
	\eqref{eq:cf_ppi_estimator} as
	\begin{align}
		\Delta_\theta^{\mathrm{cf}}
		&
		\label{eq:cross_fit_rectifier}
		= \frac{1}{N}\sum_{i=1}^{N} \frac{\delta_i}{f}\,\Big\{ Y_i - \widehat m^{(\kappa(i))}(X_i) \Big\},
	\end{align}
	where \(\delta_i=I(i\in S)\) indicates whether unit \(i\) is labeled.
	
	In \eqref{eq:cross_fit_rectifier}, each \(Y_i\) is paired with an out-of-fold
	prediction \(\widehat m^{(\kappa(i))}(X_i)\) from a model that did not use \((X_i,Y_i)\) in
	training. Conditional on the fold assignment \(\kappa\) and the fitted models
	\(\{\widehat m^{(k)}\}_{k=1}^K\), the summands \(Y_i - \widehat m^{(\kappa(i))}(X_i)\) are
	independent. This conditional independence eliminates ``\emph{double-dipping}''—that is, using the same data for training and evaluation—yields an empirical mean with the usual influence-function behavior, and obviates Donsker-type entropy restrictions for asymptotic normality \citep{kennedy2024semiparametric}. 
	
	In Subsection~\ref{subsec:Asymptotic theory of CF-PPI}, we distinguish our analysis by establishing the asymptotic properties of CF-PPI through empirical process theory—a framework not utilized in the original study by \citet{zrnic2024cross}. While their work is primarily limited to establishing a CLT, it does not address the convergence rates of the out-of-fold predictor \(\widehat m^{(-)}\) in \eqref{eq:out_of_fold_predictor}, nor does it identify the precise conditions required to attain semiparametric efficiency. Our analysis fills these theoretical gaps, which are central to semi-supervised inference.

	
	
	\section{Machine-learning-assisted generalized entropy calibration}\label{sec:Weight calibration for prediction-powered inference}
	\subsection{Bregman divergence}\label{subsec:Bregman divergence}
	We introduce a GEC framework \citep{gneiting2007strictly, kwon2025} that uses the Bregman divergence \citep{bregman1967relaxation}
	as the core distance, aligning with the Deville--S\"{a}rndal calibration paradigm \citep{deville1992calibration,deville1993generalized}. Unlike \citet{deville1992calibration}, which measures discrepancy in a multiplicative ratio space, the Bregman approach operates in the additive weight space, yielding an exact projection interpretation, a Pythagorean identity, and a clean primal--dual symmetry via convex conjugacy \citep{amari2000methods}. 
	
	

	Let \(G:\mathcal V\to\mathbb R\) be a prespecified generator that is strictly convex and twice continuously differentiable, with domain \(\mathcal V\subset\mathbb R\) an open interval. Table~\ref{tab:bregman-examples} in the Appendix lists representative choices of the generator \(G\). Write \(g(u)=G'(u)\). For \(u,v\in\mathcal V\), the scalar-valued Bregman divergence generated by \(G\) is $D_G(u\Vert v)\;:=\;G(u)-G(v)-g(v)\,(u-v).$ The quantity \(D_G(u\Vert v)\) is the gap between \(G(u)\) and the first–order Taylor
	approximation of \(G\) at \(v\); equivalently, it measures how far \(G(u)\) lies above the
	tangent line to \(G\) at \(v\). By strict convexity, \(D_G(u\Vert v)\ge 0\) with equality
	if and only if \(u=v\).

	We describe the use of Bregman divergence for weight calibration. Let $\omega=\{\omega_j:j\in S\}$ denote the calibration weights and $\omega^{(0)}=\{\omega^{(0)}_j:j\in S\}$ the baseline (design) weights. The former are the calibration targets, whereas the latter are typically known and fixed.
	Given a generator $G$ with derivative $g=G'$, we use the separable Bregman divergence anchored at $\omega^{(0)}$, defined by $D_G\!\big(\omega\Vert \omega^{(0)}\big)
	\;=\;
	\sum_{j\in S}\{
	G(\omega_j)-G\!\big(\omega^{(0)}_j\big)
	- g\!\big(\omega^{(0)}_j\big)\,\big(\omega_j-\omega^{(0)}_j\big)
	\}$. Recall that $S$ denotes the index set for the labeled sample, so the pair $(\omega_j,\omega^{(0)}_j)$ is attached to each labeled unit. In particular, in our setting, the baseline weights are constant: $\omega^{(0)}_j=d_j=N/n$ for all $j\in S$, which is the inverse of the labeling fraction $f = n/N$.
	
	\subsection{Weight calibration for prediction-powered inference}\label{subsec:Weight-calibrated prediction-powered estimator}
	We introduce a weight calibration framework for PPI, on which the MEC estimator is built. The central idea is to calibrate the rectifier in \eqref{eq:cf_ppi_estimator} using weights, and then construct an intermediate estimator with respect to a basis \(h\):
	\begin{align}
		\nonumber
		\widehat\theta_{\mathrm{PPI}}^{\mathrm{wc},h}
		=
		\frac{1}{N}\sum_{i=1}^N \widehat m^{(-)}(X_i)
		+
		\frac{1}{N}\sum_{j\in S}\widehat\omega_j\,\Big\{Y_j-\widehat m^{(-)}(X_j)\Big\},
	\end{align}
	where \(\widehat m^{(-)}\) is the out-of-fold predictor \eqref{eq:out_of_fold_predictor}, and
	\(\widehat\omega = \{\widehat\omega_j\}_{j\in S}\) are calibrated weights for the labeled units \(S\), obtained by solving the Bregman
	projection
	\begin{align}
		\label{eq:Bregman projection}
		\widehat\omega
		\;=\;
		\arg\min_{\omega\in(0,\infty)^{n}}
		D_G\!\big(\omega\Vert d\big)
	\end{align}
	subject to the calibration constraint
	\begin{align}
		\label{eq:Constraint}
		\sum_{j\in S}\omega_j\,h(X_j)
		\;=\;
		\sum_{i=1}^{N} h(X_i),
	\end{align}
	where \(h:\mathcal{X}\subset\mathbb{R}^d \to \mathbb{R}^p\) denotes a user-chosen \(p\)-dimensional calibration basis, and \(z_j:=h(X_j)=(h_1(X_j),\ldots,h_p(X_j))^\top\). The calibrated weights \(\widehat\omega\) are computed using the dual Newton solver, as described in Subsection~\ref{subsec:Efficient dual Newton solver for optimal weights} in the Appendix.
	
	
	
	We note that the rectifier term of $\widehat\theta_{\mathrm{PPI}}^{\mathrm{wc},h}$, denoted
	\begin{equation}
		\label{eq:rectifier_term_wc_ppi}
		\Delta_\theta^{\mathrm{wc},h}:= 
		\frac{1}{N}\sum_{j\in S} 
		\widehat\omega_j\big\{ Y_j - \widehat m^{(-)}(X_j) \big\}
	\end{equation}
	inherits the avoidance of label reuse via sample-splitting as in the rectifier term \eqref{eq:cross_fit_rectifier} of CF--PPI, provided that the calibrated weight $\widehat\omega_j$ is independent of $Y_j$ given $X_j$. 
	
	The proposed framework is a weight-calibrated generalization of CF-PPI. To see this, observe that the estimator \(\widehat\theta_{\mathrm{PPI}}^{\mathrm{wc},h}\) with respect to basis \(h\) reduces to the CF-PPI estimator \(\widehat\theta_{\mathrm{PPI}}^{\mathrm{cf}}\) in \eqref{eq:cf_ppi_estimator} when \(h\) is chosen as the constant function \(h(x) = 1\). In this case, the calibration constraint \eqref{eq:Constraint} simplifies to \(\sum_{j\in S}\omega_j\,h(X_j)=\sum_{j\in S}\omega_j=\sum_{i=1}^N h(X_i)=N\). Given baseline weights \(d_j=N/n\) and any strictly convex generator \(G\), the Bregman projection is equivalent to maximizing the Lagrangian \(\mathcal{L}(\omega,\lambda) = -\sum_{j\in S}\{G(\omega_j)-G(d_j)-g(d_j)(\omega_j-d_j)\} + \lambda(\sum_{j\in S}\omega_j-N)\). The first-order condition \(-g(\omega_j)+g(d_j)+\lambda=0\) implies that \(g(\omega_j)\) is constant across \(j\). Since \(g\) is strictly increasing, all \(\omega_j\) must be equal; combining this with \(\sum_{j\in S}\omega_j=N\) yields \(\omega_j=N/n=d_j\) for all \(j\in S\). Plugging these weights into \(\widehat\theta_{\mathrm{PPI}}^{\mathrm{wc},h}\) exactly recovers the CF-PPI estimator. 
	
	Consequently, we recommend including the intercept \(1\) as a component of the basis \(h\) \emph{by default} under the proposed weight calibration framework for PPI. This inclusion ensures that the proposed framework inherits the desirable property of avoiding label reuse from CF-PPI, automatically resolving the primary limitation of vanilla PPI (see \textbf{I. Label reuse} in Subsection~\ref{subsec:Prediction-powered inference}). Including an intercept is also standard practice in survey sampling for Horvitz--Thompson-type rectification~\citep{horvitz1952generalization,deville1992calibration}, as adopted in \(\widehat\theta_{\mathrm{PPI}}^{\mathrm{wc},h}\).
	
	
	One commonly used calibration basis \(h\) is a moment-feature basis \(h(X_j)=(1, X_{j1},\ldots,X_{jd}, X_{j1}^2,\ldots, X_{j\ell}X_{jk})\in\mathbb{R}^p\) (for selected \(\ell<k\)), which enforces the equality of selected moments-intercept, first- and second-order moments, and some interactions-between the labeled set and the population, thus shrinking the discrepancy \(N^{-1}\sum_{i=1}^{N} h(X_i) - n^{-1}\sum_{j\in S} h(X_j)\), as typically used in survey sampling \citep{deville1992calibration,breidt2017model,haziza2017construction}.  
	However, in semi-supervised inference, the analyst typically does not know \emph{a priori} which moments are most relevant for bias reduction; specifying a large, high-dimensional basis can be impractical or unstable (e.g., inducing high-variance weights), which motivates ML predictor-based basis, as discussed in the next subsection.

	\subsection{Machine-learning-assisted generalized
		entropy calibration estimator}\label{subsec:Optimal basis choice}

	We construct the MEC estimator using the weight calibration framework for PPI proposed in the previous subsection. We adopt an out-of-fold predictor basis with an intercept (hereafter, simply called the ``predictor basis''):
	\begin{align}
		\label{eq:predictor_basis}
		h(X_j) \;=\; \big(1,\;\widehat m^{(-)}(X_j)\big) \in \mathbb{R}^2.
	\end{align}
	The calibration constraints \eqref{eq:Constraint} thus take the form $\sum_{j\in S} \omega_j$$ = N$ and $\sum_{j\in S} \omega_j \widehat m^{(-)}(X_j)$ $ = \sum_{i=1}^N \widehat m^{(-)}(X_i)$. The first constraint ensures the avoidance of label reuse, as discussed in the previous subsection. The second constraint aligns the weighted predictor total in the labeled sample with the full population total; this explicitly addresses \textbf{II. Efficiency shortfall} in Subsection~\ref{subsec:Prediction-powered inference}, which we theoretically establish in Section~\ref{sec:Statistical properties}.

	Recall that \(\kappa(j)\) denotes the fold of unit \(j\) and \(\widehat m^{(\kappa(j))}\) the predictor trained without fold \(\kappa(j)\), thus, the second constraint after calibration can be written as
	\[
	\sum_{j\in S} \widehat\omega_j\,\widehat m^{(\kappa(j))}(X_j)
	=
	\sum_{i=1}^N \widehat m^{\star}(X_i),
	\]
	where the summand on the left-hand side satisfies
	\begin{align*}
		&\widehat\omega_j\,\widehat m^{(\kappa(j))}(X_j)
		= \widehat\omega_j(\widehat\lambda)\,\widehat m^{(\kappa(j))}(X_j) \\
		&\,\,= g^{-1}\!\big(g(d_j)+z_j^\top \widehat\lambda\big)\,\widehat m^{(\kappa(j))}(X_j) \\
		&\,\,= g^{-1}\!\big(g(d_j)+\widehat\lambda_{1}+\widehat\lambda_{2}\,\widehat m^{(\kappa(j))}(X_j)\big)\,
		\widehat m^{(\kappa(j))}(X_j),
	\end{align*}
	which depends on the covariate \(X_j\) but does not involve $Y_j$ directly through $\widehat{m}^{(\kappa(j))}(\cdot)$ for each \(j\in S\). The last equality follows from the calibration map \(\omega_j(\lambda)=g^{-1}\!\big(g(d_j)+z_j^\top\lambda\big)\) and the predictor basis \(z_j=h(X_j)=(1,\widehat m^{(\kappa(j))}(X_j))\) \eqref{eq:predictor_basis}. $\lambda$ denotes the dual variable (i.e., Lagrange multipliers) arising in the optimization; see Section \ref{subsec:Efficient dual Newton solver for optimal weights} of the Appendix for details. 
	
	In summary, the out-of-fold predictor \(\widehat m^{(-)}\) in \eqref{eq:out_of_fold_predictor}
	is used twice: (i) to form the difference \(Y_j-\widehat m^{(-)}(X_j)\) in the rectifier
	term \(\Delta_\theta^{\mathrm{wc},h}\)~\eqref{eq:rectifier_term_wc_ppi}, and (ii) to define the predictor basis
	\(h=(1,\widehat m^{(-)})\)~\eqref{eq:predictor_basis} for computing the calibrated
	weights \(\widehat\omega_j\). This safeguard--\emph{double label-reuse avoidance}--is essential to MEC's asymptotic validity and stable variance estimation developed in Subsection~\ref{subsec:Asymptotic theory of GEC estimator}.
	
	The MEC estimator is then expressed as
	\begin{align}
		\widehat\theta_{\mathrm{MEC}}
		:=
		\widehat\theta_{\mathrm{PPI}}^{\mathrm{wc},h = (1,\widehat m^{(-)})}
		&=
		\frac{1}{N}\sum_{j\in S} \widehat \omega_j Y_j.
		\label{eq:GEC_estimator}
	\end{align}
	The immediate benefit of the MEC estimator is computational stability. It intrinsically promotes weight stability because the basis is only two–dimensional (\(p=2\)), regardless of the covariate dimension \(d\), the labeled
	sample size \(n\), or the population size \(N\). Consequently, the dual Newton solver (Section \ref{subsec:Efficient dual Newton solver for optimal weights} in the Appendix) operates
	in a very low–dimensional space and is fast and numerically stable. Each iteration forms the
	\(p\times p\) Hessian \(H(\lambda)=\nabla^2\ell(\lambda)=Z^\top \mathrm{diag}\!\big(1/g'(\omega)\big)\,Z\),
	which costs \(O(np^2)\) to assemble and \(O(p^3)\) to solve for the Newton step; with \(p=2\),
	this cost is negligible even for large \(n\).
	
	\subsection{Dual expression of the MEC estimator}\label{subsec:Dual expression of GEC estimator}
	We show that the MEC estimator \(\widehat\theta_{\mathrm{MEC}}\)~\eqref{eq:GEC_estimator} admits a \emph{dual generalized regression estimator (GREG) representation} \citep{sarndal1980pi, deville1992calibration, wu2001model}; that is, it can be written as a prediction term plus a design-weighted residual correction.
	

	
	\begin{theorem}\label{thm:dual-greg}
		Assume the conditions of the basic setup in Section~\ref{sec:setup}. Consider the MEC estimator \eqref{eq:GEC_estimator} 
		\[
		\widehat\theta_{\mathrm{MEC}}
		=\frac{1}{N}\sum_{j\in S}\widehat\omega_j\,Y_j.
		\]
		Consider the GREG estimator
		\begin{align}
			\label{eq:GREG_estimator}
			\widehat\theta_{\mathrm{GREG}}
			=
			\frac{1}{N}\sum_{i=1}^N \widehat y_i
			+\frac{1}{N}\sum_{j\in S} d_j\,(Y_j-\widehat y_j)
		\end{align}
		where $\widehat y_i=\widehat\beta^\top z_i
		=\widehat \beta_0+\widehat \beta_1\,\widehat m^{(-)}(X_i)$,
		obtained by a weighted least squares regression of $Y_j$ on $z_j$
		with weights $q_j$, so that $\widehat\beta$ satisfies the weighted
		normal equations
		\begin{align}
			\label{eq:normal_equation_WLS}
			\bigg(\sum_{j\in S} q_j\, z_j z_j^\top\bigg)\widehat\beta
			=\sum_{j\in S} q_j\, z_j Y_j,
			\qquad
			q_j:=\frac{1}{g'(d_j)}.    
		\end{align}
		Then $\widehat\theta_{\mathrm{MEC}}
		=\widehat\theta_{\mathrm{GREG}}+R_N,
		$ with $
		R_N=\mathcal{O}_{\mathbb{P}}\big(\|\widehat\lambda\|^2\big).$ 
		
		In particular, if $\|\widehat\lambda\|=\mathcal{O}_{\mathbb{P}}(n^{-1/2})$ and standard moment conditions hold, then $R_N=o_{\mathbb{P}}(N^{-1/2})$, so the representation is asymptotically exact. 
		Moreover, when $G$ is quadratic, we have the exact identity $\widehat\theta_{\mathrm{MEC}} =  \widehat\theta_{\mathrm{GREG}}$ (i.e., $R_N = 0$).
	\end{theorem}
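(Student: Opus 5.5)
The plan is to work entirely from the dual (primal–dual) form of the calibrated weights, $\widehat\omega_j=g^{-1}\big(g(d_j)+z_j^\top\widehat\lambda\big)$, and Taylor expand the calibration map $\lambda\mapsto\omega_j(\lambda)$ around $\lambda=0$, where $\omega_j(0)=d_j$. Since $(g^{-1})'(g(d_j))=1/g'(d_j)=q_j$, this gives
\begin{equation*}
\widehat\omega_j=d_j+q_j\,z_j^\top\widehat\lambda+r_j,\qquad |r_j|\le \tfrac12\sup_{|t|\le |z_j^\top\widehat\lambda|}\big|(g^{-1})''\big(g(d_j)+t\big)\big|\,(z_j^\top\widehat\lambda)^2 .
\end{equation*}
This is valid because $g$ is $C^1$ with $g'>0$ (strict convexity; we also assume $g'$ is differentiable near $d_j$, which holds for the generators in Table~\ref{tab:bregman-examples}). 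Because $d_j=N/n$ is constant, $q_j=q$ is constant as well, which simplifies the bookkeeping.

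First, the linear term is pinned down by the calibration constraint \eqref{eq:Constraint}. Substituting the expansion into $\sum_{j\in S}\widehat\omega_j z_j=\sum_{i=1}^N z_i$ gives
\begin{equation*}
\Big(\sum_{j\in S}q_jz_jz_j^\top\Big)\widehat\lambda=\sum_{i=1}^N z_i-\sum_{j\in S}d_jz_j-\sum_{j\in S}r_jz_j .
\end{equation*}
Next, plug the expansion into $N\widehat\theta_{\mathrm{MEC}}=\sum_{j\in S}\widehat\omega_jY_j$ to get $\sum_{j\in S} d_jY_j+\widehat\lambda^\top\sum_{j\in S} q_jz_jY_j+\sum_{j\in S} r_jY_j$. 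By the normal equations \eqref{eq:normal_equation_WLS}, $\sum_{j\in S} q_jz_jY_j=\big(\sum_{j\in S} q_jz_jz_j^\top\big)\widehat\beta$. Combining this with the constraint identity, the middle term becomes
\begin{equation*}
\widehat\beta^\top\Big(\sum_{i=1}^N z_i-\sum_{j\in S} d_jz_j\Big)-\widehat\beta^\top\sum_{j\in S} r_jz_j .
\end{equation*}
Since $\widehat y_i=\widehat\beta^\top z_i$, the leading part is exactly $N\widehat\theta_{\mathrm{GREG}}$. Hence
\begin{equation*}
R_N=N^{-1}\sum_{j\in S} r_j\,(Y_j-\widehat y_j),
\end{equation*}
an exact algebraic identity with no approximation yet.

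For the quadratic generator, $g$ is affine, so $g^{-1}$ is affine, $(g^{-1})''\equiv0$, every $r_j=0$, and $R_N=0$ exactly. That settles the last claim.

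For the general case I would bound
\begin{equation*}
|R_N|\le C_N\,\|\widehat\lambda\|^2\,N^{-1}\sum_{j\in S}\|z_j\|^2|Y_j-\widehat y_j|,
\end{equation*}
where $C_N$ is the supremum of $|(g^{-1})''|$ over a neighborhood of $g(N/n)$. Because $N/n\to1/f$ and $\max_j|z_j^\top\widehat\lambda|=o_{\mathbb P}(1)$, $C_N$ is $O(1)$ by continuity. The average term is $O_{\mathbb P}(n/N)=O_{\mathbb P}(1)$ under moment conditions, namely finite fourth moments of $\widehat m^{(-)}(X)$ and second moments of $Y$ together with $\widehat\beta=O_{\mathbb P}(1)$, using Cauchy–Schwarz. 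This gives $R_N=O_{\mathbb P}(\|\widehat\lambda\|^2)$. If $\|\widehat\lambda\|=O_{\mathbb P}(n^{-1/2})$, then $R_N=O_{\mathbb P}(n^{-1})=o_{\mathbb P}(N^{-1/2})$, because $n\asymp N$.

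The main obstacle is justifying the uniform smallness $\max_{j\in S}|z_j^\top\widehat\lambda|=o_{\mathbb P}(1)$, which is needed so the Taylor remainder constants stay bounded and the weights stay in the domain $\mathcal V$. I would obtain it from $\|\widehat\lambda\|=O_{\mathbb P}(n^{-1/2})$ together with $\max_j|\widehat m^{(-)}(X_j)|=o_{\mathbb P}(n^{1/2})$, which follows from a finite second moment by a standard union/Borel–Cantelli argument applied fold-wise conditional on the training folds.
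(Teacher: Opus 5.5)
Your proposal is correct and follows essentially the paper's proof. The same second-order expansion $\widehat\omega_j=d_j+q_jz_j^\top\widehat\lambda+r_j$, the exact calibration constraint and the WLS normal equations yield the identical exact remainder $R_N=N^{-1}\sum_{j\in S}r_j(Y_j-\widehat y_j)$, with $r_j\equiv0$ for quadratic $G$; the only difference is ordering, since the paper first writes $\widehat\theta_{\mathrm{MEC}}-\widehat\theta_{\mathrm{GREG}}=N^{-1}\sum_{j\in S}(\widehat\omega_j-d_j)(Y_j-\widehat y_j)$ and kills the linear part by WLS orthogonality, whereas you substitute the expanded constraint into $\widehat\lambda^\top\sum_{j\in S}q_jz_jY_j$.

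Your remainder bound keeps $\|z_j\|^2$ inside the sum and explicitly secures $\max_{j\in S}|z_j^\top\widehat\lambda|=o_{\mathbb P}(1)$, which is more careful than the paper's uniform $|r_j|\le K\|\widehat\lambda\|^2$. One small correction: for the data-dependent fold-wise arrays, a bounded second moment only gives $\max_{j\in S}|\widehat m^{(-)}(X_j)|=O_{\mathbb P}(n^{1/2})$, so draw the needed $o_{\mathbb P}(n^{1/2})$ from the fourth-moment condition you already impose (or a uniform-integrability condition) rather than from Borel--Cantelli.
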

	

	
	Theorem~\ref{thm:dual-greg} reveals that the MEC estimator \(\widehat\theta_{\mathrm{MEC}}\) in \eqref{eq:GEC_estimator} is not merely a GEC-type estimator, but rather a geometric projection estimator that admits an efficient regression representation. In particular, under our setting, the GREG estimator \(\widehat\theta_{\mathrm{GREG}}\) in \eqref{eq:GREG_estimator} simplifies to $\widehat\theta_{\mathrm{GREG}}
	=(1/N)\sum_{i=1}^N \widehat\beta_1 \widehat m^{(-)}(X_i)
	+(1/n)\sum_{j\in S}(Y_j-\widehat\beta_1\,\widehat m^{(-)}(X_j))$, where the intercept $\widehat\beta_0$ cancels out. Furthermore, solving the normal equations \eqref{eq:normal_equation_WLS} yields the familiar covariance–variance form for the slope $\widehat\beta_1
	=\sum_{j\in S} q_j\,(m_j-\bar m_q)\,(Y_j-\bar Y_q)/\sum_{j\in S} q_j\,(m_j-\bar m_q)^2
	=\mathrm{Cov}\!\bigl(\widehat m^{(-)}(X),\,Y\bigr)/\mathrm{Var}\!\bigl(\widehat m^{(-)}(X)\bigr),$ where \(m_j=\widehat m^{(-)}(X_j)\),
	\(\bar m_q=\sum_{j\in S} q_j m_j\big/\sum_{j\in S} q_j\),
	and \(\bar Y_q=\sum_{j\in S} q_j Y_j\big/\sum_{j\in S} q_j\). (The second equality holds because \(q_j=1/g'(d_j)\) is constant in \(j\), so the weights cancel from both the numerator and the denominator.) 
	This result is asymptotically equivalent to the optimal-tuning result for PPI++ (see Example~6.1 in \cite{angelopoulos2024}). MEC generalizes PPI++ in a dual sense and the two are asymptotically equivalent when the generator is quadratic; thus, MEC includes PPI++ as a special case.
	
	Finally, we construct the \(100(1-\alpha)\%\) Wald-type confidence interval for the population mean 
	\(\theta_0\) as $\widehat{\theta}_{\mathrm{MEC}}
	\;\pm\;
	z_{1-\alpha/2}\,\widehat{\mathrm{se}}_{\mathrm{MEC}},$ where the standard error uses the GREG-style decomposition implied by the
	MEC–GREG duality
	\[
	\widehat{\mathrm{se}}_{\mathrm{MEC}}^{\,2}
	=
	\frac{1}{N}\,\widehat{\mathrm{Var}}_{U}\!\bigl(\widehat{\beta}_1\,\widehat{m}_U\bigr)
	\;+\;
	\frac{1}{n}\,\widehat{\mathrm{Var}}_{S}\!\bigl(Y-\widehat{\beta}_1\,\widehat{m}_S\bigr).
	\]
	Here,
	\(\widehat{m}_S=\{\widehat{m}^{(-)}(X_j): j\in S\}\) and
	\(\widehat{m}_U=\{\widehat{m}^{(-)}(X_i): i\in U\}\) denote the out-of-fold
	predictions for the labeled and unlabeled units, respectively.
	
	Additionally, Theorem~\ref{thm:dual-greg} implies that MEC generalizes the classical estimator by adaptively leveraging labeled data; if the cross-prediction carries no linear signal for \(Y\) on \(S\) (i.e., \(\mathrm{Cov}(\widehat m^{(-)}(X),Y)=0\)), then \(\widehat\beta_1=0\), so \(\widehat\theta_{\mathrm{MEC}}=(1/n)\sum_{j\in S} Y_j+R_N\) (with \(R_N=o_p(n^{-1/2})\) under standard conditions and \(\widehat{\mathrm{se}}_{\mathrm{MEC}}^{\,2}=(1/n)\widehat{\mathrm{Var}}_{S}(Y)\). This is intuitively desirable: if the predictor carries little information for the labeled outcomes, there is little to borrow from the unlabeled data, so a desirable estimator should shrink to the classical estimator. Thus, MEC induces data-adaptive post-prediction inference \citep{miao2025assumption}.
	
	\section{Statistical properties}\label{sec:Statistical properties}
	
	\subsection{Asymptotic theory of cross-fitted PPI estimator}\label{subsec:Asymptotic theory of CF-PPI}
	We first derive sufficient conditions for the asymptotic properties of the CF--PPI estimator \(\widehat\theta_{\mathrm{PPI}}^{\mathrm{cf}}\) given in \eqref{eq:cf_ppi_estimator}.
	\begin{theorem}\label{thm:cfppi_mean_unified}
		Assume the conditions of the basic setup in Section~\ref{sec:setup}. Let $\widehat m^{(-)}$ be the out-of-fold predictor \eqref{eq:out_of_fold_predictor}, and consider CF--PPI estimation for the mean
		\begin{equation*}
			\widehat\theta^{\mathrm{cf}}_{\mathrm{PPI}}
			=\frac{1}{N}\sum_{i=1}^N \widehat m^{(-)}(X_i)
			+\frac{1}{n}\sum_{j\in S}\{Y_j-\widehat m^{(-)}(X_j)\}.
		\end{equation*}
		Then:
		
		\noindent\textup{\textbf{(i) Consistency.}}
		If the out-of-fold error is stochastically bounded in $L_2(P_X)$,
		\[
		\|\widehat m^{(-)}-m_0\|_{L_2(P_X)} = O_p(1),
		\]
		then $\widehat\theta^{\mathrm{cf}}_{\mathrm{PPI}}\xrightarrow{p}\theta_0$.
		
		\noindent\textup{\textbf{(ii) Asymptotic normality.}}
		If, in addition, the cross-fit predictor is $L_2(P_X)$-consistent,
		\[
		\|\widehat m^{(-)}-m_0\|_{L_2(P_X)} = o_p(1),
		\]
		then $\sqrt{N}(\widehat\theta^{\mathrm{cf}}_{\mathrm{PPI}}-\theta_0)
		\xrightarrow{d} \mathcal{N}(0,\sigma_f^2)$ with 
		\begin{align}
			\label{eq:variance_of_mean_CF_PPI}
			\sigma_f^2 = \Var\!\big(m_0(X)\big) \;+\; \frac{1}{f}\,\Var\!\big(Y-m_0(X)\big).
		\end{align}
	\end{theorem}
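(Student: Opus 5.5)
The idea is to split the CF--PPI error into an oracle part, which is a sum of independent terms, and a remainder driven by the predictor error $\widehat\Delta := \widehat m^{(-)}-m_0$; the remainder is then controlled conditionally on the training folds.

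Write $m_i := m_0(X_i)$. Adding and subtracting $m_0$ gives
\[
\widehat\theta^{\mathrm{cf}}_{\mathrm{PPI}}-\theta_0
= \underbrace{\Big\{\tfrac1N\sum_{i=1}^N m_i-\theta_0\Big\}+\tfrac1n\sum_{j\in S}(Y_j-m_j)}_{=:A_N}
\;+\;\underbrace{\tfrac1N\sum_{i=1}^N \widehat\Delta(X_i)-\tfrac1n\sum_{j\in S}\widehat\Delta(X_j)}_{=:B_N}.
\]

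The oracle term $A_N$ is handled directly. Its two pieces come from independent samples. The first has variance $\Var(m_0(X))/N$. The second has mean zero, because $\E[\varepsilon\mid X]=0$, and variance $\Var(Y-m_0(X))/n$. For part (i), Chebyshev gives $A_N=o_p(1)$. For part (ii), the Lindeberg--L\'evy CLT applied to each independent piece gives
\[
\sqrt N A_N\xrightarrow{d}\mathcal N\big(0,\Var(m_0(X))+f^{-1}\Var(Y-m_0(X))\big),
\]
which is $\sigma_f^2$. Here we use $N/n\to 1/f$ together with Slutsky's lemma.

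For the remainder $B_N$, I would condition on the fold assignment and on the fitted models $\{\widehat m^{(k)}\}$ together with $\widehat m^\star$. I would treat $\widehat m^\star$ as fit on $S$ only, so it is independent of the unlabeled covariates.

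1. Decompose the labeled sum by folds. Within fold $k$, the points $X_j$ are i.i.d.\ and independent of $\widehat m^{(k)}$. Hence
\[
\tfrac{1}{n_k}\sum_{j\in S^{(k)}}\widehat\Delta(X_j)-P_X\widehat\Delta^{(k)}
\]
has conditional mean zero and conditional variance at most $\|\widehat\Delta^{(k)}\|^2_{L_2(P_X)}/n_k$.

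2. The same argument applies to the unlabeled average around $P_X\widehat\Delta^\star$, with conditional variance at most $\|\widehat\Delta^\star\|^2_{L_2(P_X)}/N$.

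3. What remains is the deterministic drift $P_X\widehat\Delta^\star - K^{-1}\sum_k P_X\widehat\Delta^{(k)}$ (with $n_k\approx n/K$).

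By a conditional Chebyshev argument (convert a conditional $O_p$ to an unconditional one via dominated convergence), the two empirical-process fluctuations are $O_p(\|\widehat\Delta\|_{L_2}/\sqrt n)$. For (ii), consistency makes these fluctuations $o_p(n^{-1/2})=o_p(N^{-1/2})$. For (i), stochastic boundedness makes them $O_p(n^{-1/2})=o_p(1)$.

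The main obstacle is the drift term. Even though each $P_X\widehat\Delta^{(k)}$ is only $o_p(1)$, it need not be $o_p(N^{-1/2})$. The drift must therefore cancel, and the cancellation is not automatic.

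First, I would use the natural choice $\widehat m^\star=K^{-1}\sum_k\widehat m^{(k)}$. With that choice $P_X\widehat\Delta^\star = K^{-1}\sum_k P_X\widehat\Delta^{(k)}$, and the drift is exactly zero provided the folds have equal sizes; if they don't, it becomes $\sum_k (1/K-n_k/n)P_X\widehat\Delta^{(k)}$, negligible under balanced folds. This reduces the mean-$\widehat\Delta$ bias to zero and is essentially the reason the limit involves $m_0$ without a rate condition.

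Second, for the full-sample choice of $\widehat m^\star$, I would expect to need either a stability assumption ($\|\widehat m^\star-\widehat m^{(k)}\|_{L_2}=o_p(n^{-1/2})$) or to state the result under the averaged aggregate.

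For consistency (i) none of this is needed. Since $|P_X\widehat\Delta|\le\|\widehat\Delta\|_{L_2}$, the drift is only $O_p(1)$ in general, so I would again rely on the exact cancellation under the averaged aggregate to get $B_N=o_p(1)$. Combining $A_N$ and $B_N$ then yields both claims.
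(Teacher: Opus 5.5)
Your argument is correct, and its skeleton is the paper's. The paper also writes $\widehat\theta^{\mathrm{cf}}_{\mathrm{PPI}}-\theta_0=(P_N-P)m_0+(P_n-P)(Y-m_0)+(P_N-P_n)\Delta$ with $\Delta=\widehat m^{(-)}-m_0$, handles the first two terms by the CLT, and bounds the third by conditional Chebyshev given the trained models.

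Where you differ, you are more careful than the paper, at exactly the step you flag. The paper splits $(P_N-P_n)\Delta=(P_N-P)\Delta-(P_n-P)\Delta$ using a single centering constant $P\Delta$. It concedes that in the labeled term $P\Delta$ is ``understood conditional on the model that scores each $j$''. That identity silently discards your drift $P_X\widehat\Delta^\star-\tfrac1n\sum_{j\in S}P_X\widehat\Delta^{(\kappa(j))}$.

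For the fold-averaged aggregator (the one the paper's simulations use), the drift is zero with equal folds and at most $(K/n)\max_k\|\widehat\Delta^{(k)}\|_{L_2(P_X)}$ otherwise, so your proof goes through. For the full-sample fit, which the statement also permits, it does not. Take a learner returning $\bar Y_{\mathrm{train}}+1$ when trained on all $n$ labels and $\bar Y_{\mathrm{train}}$ when trained on $K-1$ folds. It satisfies the $O_p(1)$ hypothesis, yet with equal folds $\widehat\theta^{\mathrm{cf}}_{\mathrm{PPI}}=\bar Y_S+1\to\theta_0+1$. Replacing $1$ by $n^{-1/4}$, with $m_0$ constant, gives an $L_2$-consistent learner for which $\sqrt N(\widehat\theta^{\mathrm{cf}}_{\mathrm{PPI}}-\theta_0)$ diverges. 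So your restriction to the averaged aggregate, or a stability condition like the one you propose, is genuinely needed. The theorem as stated is only justified in that case.

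Two small repairs. First, do not condition on all fold fits and $\widehat m^\star$ at once, as your opening sentence and the paper's $\mathcal T$ both do. The fits $\widehat m^{(k')}$ for $k'\neq k$, and $\widehat m^\star$, are trained on fold $k$, so given that $\sigma$-field the $X_j$, $j\in S^{(k)}$, are no longer i.i.d.\ $P_X$. Condition fold by fold on the data in $S\setminus S^{(k)}$ only, as your step 1 in effect does, and then add the $K$ bounds; this is harmless since $K$ is fixed. Second, your final paragraph's ``none of this is needed'' for (i) contradicts its own conclusion. The cancellation is needed for consistency as well; only the rate is not.
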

	
	
	We briefly compare Theorem~\ref{thm:cfppi_mean_unified} with the prior analysis of \citet{zrnic2024cross}. Their work establishes a CLT for CF--PPI for the mean under stability conditions on the fold-specific learners (see their Assumptions~1 and~2). While these conditions capture algorithmic stability, they are not standard within empirical process theory and do \emph{not} directly characterize the convergence behavior of the out-of-fold predictor or its implications for semiparametric efficiency.
	
	By contrast, our analysis of the semi-supervised mean proceeds under milder, classical assumptions. In particular, consistency of CF--PPI follows from the minimal requirement of $L_2$ stochastic boundedness, $\|\widehat m^{(-)} - m_0\|_{L_2(P_X)} = O_p(1)$, while asymptotic normality is obtained under mere $L_2$ consistency, $\|\widehat m^{(-)} - m_0\|_{L_2(P_X)} = o_p(1)$. These conditions are natural in modern ML theory and allow us to explicitly characterize the efficiency properties of CF--PPI through the convergence rate of the out-of-fold predictor.

	In Theorem~\ref{thm:cfppi_mean_unified}, we state the asymptotic distribution
	on the \(\sqrt N\) scale. Equivalently, since \(n/N \to f\in(0,1)\), the same
	result can be written on the \(\sqrt n\) scale as
	\(\sqrt n(\widehat\theta_{\mathrm{PPI}}^{\mathrm{cf}}-\theta_0)
	\xrightarrow{d}
	\mathcal{N}\left(0,\,
	f\operatorname{Var}\{m_0(X)\}
	+\operatorname{Var}\{Y-m_0(X)\}\right)\).
	This expression separates the two sources of uncertainty. The first term,
	\(f\operatorname{Var}\{m_0(X)\}\), comes from averaging the prediction component
	over the unlabeled covariates, whereas the second term,
	\(\operatorname{Var}\{Y-m_0(X)\}\), comes from the residual variation in the
	labeled sample. When \(f\) is small, meaning that the unlabeled sample is much
	larger than the labeled sample, the first component is small, so most remaining
	uncertainty comes from the labeled residuals. Thus, unlabeled covariates improve
	efficiency by reducing the limiting variance, but they do not remove the
	\(\sqrt n\)-rate dependence on the number of labeled observations.
	
	\subsection{Asymptotic theory of the MEC estimator}\label{subsec:Asymptotic theory of GEC estimator}
	We now present the main theoretical result of this paper. 
	\begin{theorem}\label{thm:cal-mean}
		Assume the conditions of the basic setup in Section~\ref{sec:setup}, and consider the MEC estimator \eqref{eq:GEC_estimator} 
		\[
		\widehat\theta_{\mathrm{MEC}}
		=\frac{1}{N}\sum_{j\in S}\widehat\omega_j\,Y_j.
		\]
		Define the calibration subspace
		\begin{align*}
			W &:= \mathrm{span}\{h\}= \mathrm{span}\{1,\widehat m^{(-)}(\cdot)\}\\
			&= \{\,a + b\,\widehat m^{(-)}(\cdot): a,b\in\mathbb{R}\,\}
			\subset L_2(P_X),    
		\end{align*}
		and let $\Pi_W$ denote the $L_2(P_X)$ projection onto $W$, $\Pi_W m_0 = \arg\min_{v\in W}\E\big[(m_0(X)-v(X))^2\big].$ Define the projection error $m_{0,\perp}:=m_0-\Pi_W m_0$, representing the component of the true regression function orthogonal to the calibration subspace $W$.
		
		\noindent \textbf{Assumptions}
		\begin{itemize}[label={}, leftmargin=2pt, itemindent=2pt, labelsep=2pt]
			\item[]A1.  $\E\|h(X)\|^2<\infty$. 
			\item[]A2. Calibrated weights satisfy the symmetric Bregman bound $D_G(\widehat\omega\|d)\;+\;D_G(d\|\widehat\omega)\;=\;O_p(1).$
			\item[]A3. With $A_n:=(1/n)\sum_{j\in S} q_j\, z_j z_j^\top$, $q_j:=1/g'(d_j),$ one has $A_n \stackrel{P}{\to} \Sigma_q \succ 0$, where ``$\succ 0$'' denotes positive definiteness.
			\item[]A4. Let $\widehat\lambda$ be the dual optimizer of the calibration program with $\|\widehat\lambda\|=O_p(n^{-1/2})$.
		\end{itemize}
		Then:
		
		\medskip
		\noindent\textup{\textbf{(i) Consistency.}}
		If A1--A2 hold and the projection error is stochastically bounded in $L_2(P_X)$,
		\[
		\|m_{0,\perp}\|_{L_2(P_X)}=\|m_0-\Pi_W m_0\|_{L_2(P_X)}=O_p(1),
		\]
		then $\widehat\theta_{\mathrm{MEC}}\xrightarrow{P}\theta_0$.
		
		\medskip
		\noindent\textup{\textbf{(ii) Asymptotic normality.}}
		If A1--A4 hold and the projection error is $L_2(P_X)$-consistent,
		\[
		\|m_{0,\perp}\|_{L_2(P_X)}=\|m_0-\Pi_W m_0\|_{L_2(P_X)}=o_P(1),
		\]
		then $\sqrt{N}(\widehat\theta_{\mathrm{MEC}}-\theta_0)
		\xrightarrow{d}
		\mathcal{N}(0,\ \sigma_f^2)$ with
		\begin{align*}
			\sigma_f^2=\Var\!\big(m_0(X)\big)+\frac{1}{f}\,\Var\!\big(Y-m_0(X)\big).
		\end{align*}
	\end{theorem}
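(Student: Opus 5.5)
Our plan is to reduce the MEC estimator to a CF--PPI-type estimator built on a projected predictor, and then reuse the arguments behind Theorem~\ref{thm:cfppi_mean_unified}. The first step is Theorem~\ref{thm:dual-greg}. Under A4 and the moment condition A1 it gives $\widehat\theta_{\mathrm{MEC}}=\widehat\theta_{\mathrm{GREG}}+o_P(N^{-1/2})$, so for part (ii) it suffices to study $\widehat\theta_{\mathrm{GREG}}$. In our setting the intercept cancels, and because $d_j=N/n$,
\[
\widehat\theta_{\mathrm{GREG}}=\frac1N\sum_{i=1}^N \widehat\beta_1\widehat m^{(-)}(X_i)+\frac1n\sum_{j\in S}\{Y_j-\widehat\beta_1\widehat m^{(-)}(X_j)\}.
\]
This is exactly a CF--PPI estimator with predictor $\widehat\beta_1\widehat m^{(-)}$, which is equivalently the predictor $\widehat y=\widehat\beta^\top z$ lying in $W$. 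Define the population coefficients $\beta^\ast$ (conditional on the out-of-fold fits) by $\Pi_W m_0=\beta^{\ast\top}h$. Then $\E[h(Y-\beta^{\ast\top}h)]=0$, because $Y-m_0$ is orthogonal to all functions of $X$ and $m_0-\Pi_Wm_0\perp W$. By A3 and the law of large numbers, applied conditionally on the training folds so that the summands are i.i.d. given the fitted models, we get $\widehat\beta-\beta^\ast=A_n^{-1}n^{-1}\sum_{j} q_j z_j(Y_j-\beta^{\ast\top}z_j)=O_P(n^{-1/2})$.

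Next we decompose the error as
\[
\widehat\theta_{\mathrm{GREG}}-\theta_0=\Big(\tfrac1N\textstyle\sum_i-\tfrac1n\sum_{j\in S}\Big)\{\widehat y(X)-\Pi_Wm_0(X)\}+\Big(\tfrac1N\sum_i-\tfrac1n\sum_{j\in S}\Big)\{\Pi_Wm_0(X)-m_0(X)\}+L_N,
\]
where $L_N=\tfrac1N\sum_i m_0(X_i)+\tfrac1n\sum_{j}\{Y_j-m_0(X_j)\}-\theta_0$ is the oracle linear term. The first term equals $(\widehat\beta-\beta^\ast)^\top$ multiplied by the difference of the empirical means of $h$. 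That difference is $O_P(N^{-1/2})$ by A1, applied conditionally on the folds, so the first term is $O_P(n^{-1})$. The second term is a difference of two centered empirical means of $m_{0,\perp}$, which are conditionally independent given the fitted models. Its conditional variance is bounded by $(1/N+1/n)\|m_{0,\perp}\|^2_{L_2}$, so Chebyshev's inequality makes it $o_P(N^{-1/2})$ under the hypothesis of (ii). The same bound gives $O_P(N^{-1/2})$, and hence consistency, under the boundedness hypothesis of (i). Finally, $L_N$ is a sum of two independent i.i.d. averages, so the CLT gives $\sqrt N L_N\to\mathcal N(0,\Var(m_0)+f^{-1}\Var(Y-m_0))$. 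Slutsky's lemma then finishes (ii).

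For part (i) we cannot use A4, so we will not pass through Theorem~\ref{thm:dual-greg}. Instead we write $\widehat\theta_{\mathrm{MEC}}-\theta_0$ using the constraints. Because $\widehat\omega$ calibrates $h$, it reproduces the totals of $\Pi_Wm_0=\beta^{\ast\top}h$ exactly. Therefore
\[
\widehat\theta_{\mathrm{MEC}}-\tfrac1N\textstyle\sum_i\Pi_Wm_0(X_i)=\tfrac1N\sum_{j\in S}\widehat\omega_j\{Y_j-\Pi_Wm_0(X_j)\}.
\]
We then split the right-hand side as $\tfrac1n\sum_j(\cdot)$ plus $\tfrac1N\sum_j(\widehat\omega_j-d_j)(\cdot)$. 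The first piece converges to $\E[Y-\Pi_Wm_0]=\theta_0-\E[\Pi_Wm_0]$, using the law of large numbers and the $O_P(1)$ bound on $\|m_{0,\perp}\|$. For the second piece we use Cauchy--Schwarz. The symmetric Bregman bound A2 equals $\sum_j(\widehat\omega_j-d_j)\{g(\widehat\omega_j)-g(d_j)\}$, which controls $\sum_j(\widehat\omega_j-d_j)^2$ up to the local curvature of $g$. This bound is multiplied by $\sum_j(Y_j-\Pi_Wm_0)^2=O_P(n)$, and combining with the normalization $1/N$ gives $o_P(1)$.

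The main obstacle is this last scaling in part (i). A2 as stated bounds the divergence by $O_P(1)$, and converting it into $\sum_j(\widehat\omega_j-d_j)^2=o_P(N^2/n)$ requires a lower bound on $g'$ near $d_j$. We would try to obtain this lower bound from $d_j=1/f$ being bounded together with the convexity of $G$, at the cost of a local-uniformity argument. A second delicate point is that $W$, $\beta^\ast$ and $\Pi_W$ are random through the fitted models. Every LLN and CLT step must therefore be done conditionally on the training folds, with the limiting variance of $L_N$ not depending on them.
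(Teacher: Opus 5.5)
Your proposal is correct. For part (ii) it takes a genuinely different route from the paper; part (i) is essentially the paper's argument.

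\textbf{Part (ii).} The paper never invokes Theorem~\ref{thm:dual-greg}. It works directly with
$\widehat\theta_{\mathrm{MEC}}-\theta_0=(P_N-P)m_0+(P_n-P)(Y-m_0)+(P_n^{\omega}-P_N)m_{0,\perp}+(P_n^{\omega}-P_n)(Y-m_0)$.
It bounds the weighted part of the $m_{0,\perp}$ term by a curvature-weighted Cauchy--Schwarz inequality, using A2 and Lemma~\ref{lem:curv_weighted_bd}. It removes the reweighted-noise term $(D)=(P_n^{\omega}-P_n)(Y-m_0)$ through the KKT linearization $\widehat\omega_j-d_j=q_jz_j^\top\widehat\lambda+r_{jN}$ with A3--A4, and strengthens A1 to fourth moments for the remainder.

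You instead push all of the nonlinearity of the weights into the $O_P(\|\widehat\lambda\|^2)$ remainder $R_N$ of the GREG representation, and then analyze a two-parameter regression estimator. Both routes handle the same quantity. With $\beta^{\ast\top}h=\Pi_Wm_0$, the calibration constraints give the exact identity $(P_n^{\omega}-P_n)(Y-\Pi_Wm_0)=(\widehat\beta-\beta^\ast)^\top(P_N-P_n)h+R_N$. So the paper's $(D)$ plus its piece $(P_n^{\omega}-P_n)m_{0,\perp}$ is precisely your first term plus the GREG remainder.

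What your organization buys:
\begin{itemize}
\item It makes plain that label reuse inside $\widehat\beta$ is harmless, because $\widehat\beta$ only multiplies $(P_N-P_n)h=O_P(N^{-1/2})$.
\item It shows the effective predictor is $\Pi_Wm_0$.
\item When $W$ stabilizes but $m_{0,\perp}$ does not vanish, the same computation gives the inflated variance $\sigma_f^2+(f^{-1}-1)\|m_{0,\perp}\|^2_{L_2(P_X)}$, which the main text asserts without proof.
\end{itemize}
What it costs is that you inherit the Taylor-remainder control from the proof of Theorem~\ref{thm:dual-greg}, which is the same linearization the paper uses for $(D)$. The route is also unavailable for (i).

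\textbf{Part (i).} Here you regroup the paper's argument around $\Pi_Wm_0$ instead of $m_0$. Your last piece $(P_n^{\omega}-P_n)(Y-\Pi_Wm_0)$ is the sum of the paper's $(D)$ and its term $(P_n^{\omega}-P_n)m_{0,\perp}$. The only difference is that you use plain rather than curvature-weighted Cauchy--Schwarz.

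The step you flag as the main obstacle in (i) closes with room to spare.
\begin{itemize}
\item Put $\delta_j=\widehat\omega_j-d_j$. Choose $\eta>0$ so that $[1/f-2\eta,1/f+2\eta]\subset\mathcal V$ and $c:=\min_{|u-1/f|\le 2\eta}g'(u)>0$. This holds for every generator in Table~\ref{tab:bregman-examples}, and is implicit in A3 through $q_j=1/g'(d_j)$.
\item For $N$ large, $|d_j-1/f|\le\eta$. The mean value theorem then gives $\delta_j\{g(d_j+\delta_j)-g(d_j)\}\ge c\,\delta_j^2$ when $|\delta_j|<\eta$.
\item When $|\delta_j|\ge\eta$, monotonicity of $g$ gives $\delta_j\{g(d_j+\delta_j)-g(d_j)\}\ge|\delta_j|\min\{g(d_j+\eta)-g(d_j),\,g(d_j)-g(d_j-\eta)\}\ge c\,\eta\,|\delta_j|$.
\item By the symmetric Bregman identity and A2, $\sum_{|\delta_j|<\eta}\delta_j^2=O_P(1)$ and $\sum_{|\delta_j|\ge\eta}|\delta_j|=O_P(1)$. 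Hence $\sum_{j\in S}\delta_j^2=O_P(1)$, and your second piece is $O_P(\sqrt n/N)=O_P(n^{-1/2})$.
\end{itemize}
The same two-regime estimate also justifies the paper's informal claim $\max_j 1/\tilde g_j'=O_P(1)$.

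Two smaller points.
\begin{itemize}
\item In (ii) you only need $\widehat\beta-\beta^\ast=o_P(1)$, which follows from A1, A3 and the conditional law of large numbers. Your stated rate $O_P(n^{-1/2})$ would require a bound on $\E[\|h(X)\|^2(Y-\beta^{\ast\top}h(X))^2]$ that A1 does not provide.
\item The hypothesis $\|m_{0,\perp}\|_{L_2(P_X)}=O_P(1)$ in (i) is automatic. Constants lie in $W$, so $\|m_{0,\perp}\|_{L_2(P_X)}\le\Var(m_0(X))^{1/2}$.
\end{itemize}
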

	
	We state regularity conditions A1--A4 for establishing consistency and asymptotic normality of the MEC estimator. Assumption A1 guarantees square-integrability of the calibration moments and the plug-in term. Assumption A2 prevents explosive or overly concentrated weights by keeping the calibrated weights $\widehat\omega$ in an $O_p(1)$ Bregman neighborhood of the baseline $d$, which justifies first-order linearization of the calibration map. Assumption A3 is an identifiability/curvature condition for the two-dimensional predictor basis: the $q_j$-weighted Gram matrix converges to a positive-definite limit, ensuring a well-conditioned Newton step for the dual and a valid quadratic expansion. Assumption A4 localizes the dual solution at the root-$n$ scale so that the perturbation $\widehat\omega-d$ is of order $n^{-1/2}$ along the feasible manifold, yielding an influence-function expansion and the stated CLT. Together, A1--A4 are standard regularity conditions for Bregman-projection calibration \citep{gneiting2007strictly,kwon2025}.
	
	
	An important advantage of MEC over CF-PPI is that it relaxes the predictor–related conditions required for consistency and asymptotic normality. To see this, note that the following inequality holds by orthogonal projection:
	\begin{align}
		\label{eq:benefit_of_wc}
		\|m_{0,\perp}\|_{L_2(P_X)}
		\le \|\widehat m^{(-)} - m_0\|_{L_2(P_X)},
	\end{align}
	since \(W=\operatorname{span}\{h\}=\operatorname{span}\{1,\widehat m^{(-)}\}\). Under A1–A2 (moment conditions and weight stability), the MEC estimator is consistent whenever the projection error is stochastically bounded in \(L_2(P_X)\), that is, \(\|m_{0,\perp}\|_{L_2(P_X)}=O_p(1)\).
	Under A1–A4 (adding a weighted–Gram limit and a root-\(n\) dual solution), asymptotic normality holds if the projection error vanishes, \(\|m_{0,\perp}\|_{L_2(P_X)}=o_p(1)\), yielding the same limit law as CF–PPI with the variance of efficient influence function (EIF) (see Equation \eqref{eq:semiparametric-bound} in the Appendix). By \eqref{eq:benefit_of_wc}, these projection-based requirements are weaker than conditions stated directly on the prediction error, showing how MEC relaxes the assumptions needed for large-sample validity relative to CF--PPI.

	We interpret the benefit of MEC through the lens of orthogonality learning \citep{foster2023orthogonal,chernozhukov2018double,mackey2018orthogonal}. In Section~\ref{sec:Semiparametric efficiency theory} in the Appendix, for any fixed predictor \(m\), we derived the misspecification-driven inflation $(N/n-1)\E[(m_0(X)-m(X))^2]$. Under MEC, the variance inflation contracts to
	\[
	\text{Variance Inflation (MEC)} \;=\; 
	\Big(\frac{N}{n}-1\Big)\,\E\!\big[m_{0,\perp}(X)^2\big].
	\]
	Thus, MEC removes all components of \(m_0\) captured by the ML predictor \(\widehat m^{(-)}\); only the orthogonal remainder can inflate the variance. In particular, if \(m_0\in W\) (i.e., \(m_0=a+b\,\widehat m^{(-)}\)), the variance inflation under MEC is exactly zero—demonstrating robustness of weight calibration to misspecification up to an \emph{affine rescaling of} \(\widehat m^{(-)}\). 
	
	\section{Simulation experiments}\label{sec:Simulation experiments}
	
	\begin{figure*}[!htbp]
		\vskip 0.2in
		\begin{center}
			\centerline{\includegraphics[width=\linewidth]{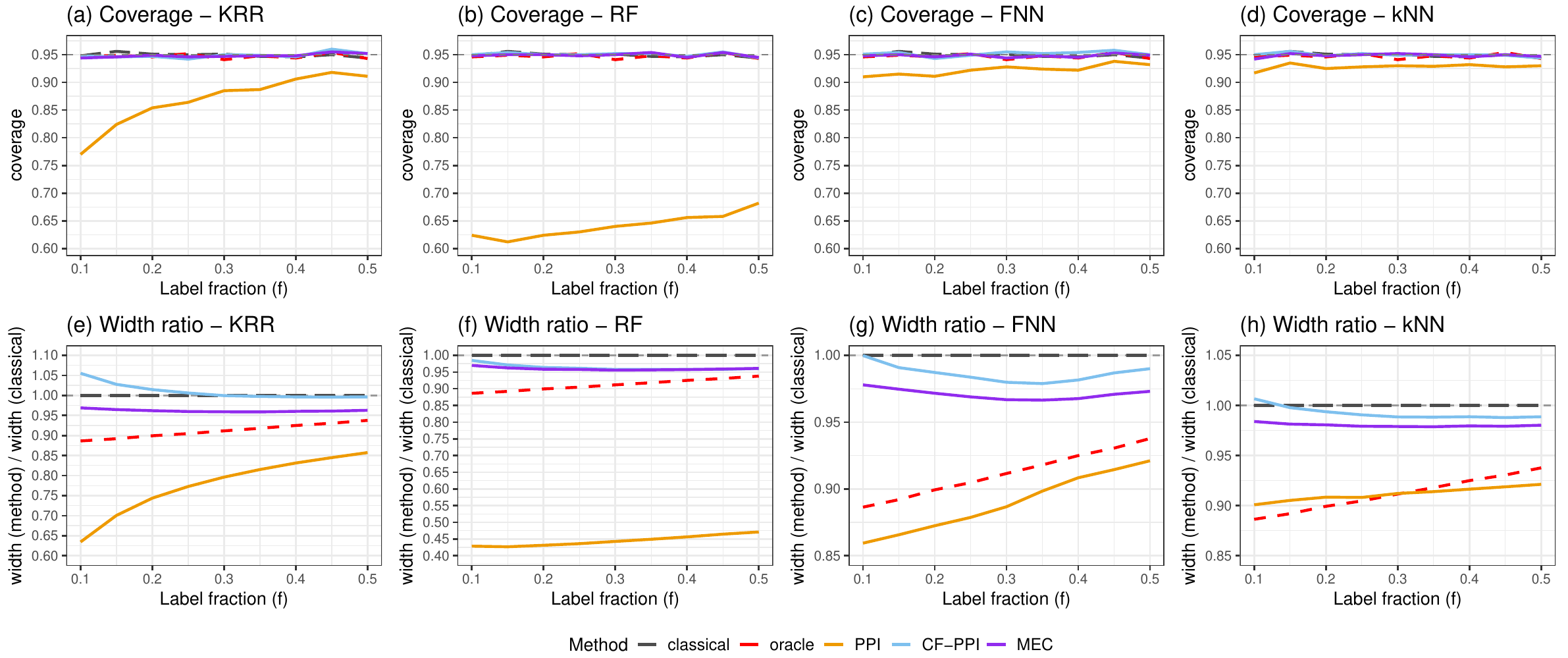}}    
			\caption{Coverage and width ratios of 95\% confidence intervals across label fractions \(f\) for four ML predictors. Each column corresponds to one predictor—KRR, RF, FNN, and kNN. MEC (quadratic generator) attains near-nominal coverage and the narrowest valid intervals, consistently improving efficiency over CF--PPI. Vanilla PPI undercovers, especially at small \(f\). Classical and oracle baselines are shown for reference. (See Figure~\ref{fig:S1_res_N1000_d10_sigma5_supp_plot} for the results with MEC using other generators; MEC performance is robust to the choice of generator.)}
			\label{fig:S1_res_N1000_d10_sigma5}
		\end{center}    
		\vskip -0.2in
	\end{figure*}
	
	
	We numerically illustrate the improved performance of our proposed MEC~\eqref{eq:GEC_estimator}
	against vanilla PPI~\eqref{eq:ppi-mean-sum-fitted} and
	CF--PPI~\eqref{eq:cf_ppi_estimator}. Implementations follow
	Sections~\ref{sec:Related work} and \ref{sec:Weight calibration for prediction-powered inference}.
	Following \cite{angelopoulos2023prediction,zrnic2024cross,miao2025assumption}, we report
	(i) empirical coverage of nominal 95\% Wald confidence intervals (CIs) and (ii) a width ratio,
	\[
	\mathrm{WR}_{\mathrm{method}}(f)
	=
	\frac{\mathrm{CI\ width}_{\mathrm{method}}(f)}{\mathrm{CI\ width}_{\mathrm{classical}}(f)},
	\]
	where the classical estimator is the label-only sample mean with its usual standard error, and
	\(f=n/N\) denotes the labeled fraction. We run \(R=2000\) Monte Carlo replications.  
	
	A desirable method should (i) achieve coverage close to 95\% uniformly over \(f\) and
	(ii) satisfy \(\mathrm{WR}_{\mathrm{oracle}}(f)\le \mathrm{WR}_{\mathrm{method}}(f) < 1\),
	indicating efficiency gains from using unlabeled covariates relative to the classical estimator. Here, $\mathrm{WR}_{\mathrm{oracle}}(f)$ is the width ratio of the oracle PPI estimator relative to the classical estimator; the oracle PPI plugs in the true regression function \(m_0\) and attains the semiparametric efficiency bound (see Section~\ref{sec:Semiparametric efficiency theory} in the Appendix). The best-performing method is the one that satisfies these two conditions and has
	\(\mathrm{WR}_{\mathrm{method}}(f)\) as close as possible to \(\mathrm{WR}_{\mathrm{oracle}}(f)\),
	indicating near-attainment of the semiparametric efficiency bound. Methods with coverage far from
	95\% are invalid, regardless of the width ratio.

	For MEC and CF--PPI, we use four learners with 5-fold cross-prediction ($K=5$): kernel ridge
	regression (KRR; Gaussian kernel; \citep{scholkopf2002learning}), random forest (RF;
	\citep{breiman2001random}), a feedforward neural network (FNN; \citep{goodfellow2016deep}), and
	\(k\)-nearest neighbors (kNN; \citep{cover1967nearest}). For vanilla PPI, we use the same learners
	with single-fit training (no sample splitting). All predictors across MEC, CF--PPI, and vanilla PPI
	follow the same hyperparameter-tuning protocol to ensure a fair comparison. See
	Subsection~\ref{subsec:Setting Machine learning predictors} in the Appendix for details.

	\paragraph{Synthetic data.} We draw covariates for an unlabeled set of size \(N=1000\) and a labeled set \(S\) of size \(n=fN\) with \(f\in[0.1,0.5]\) as i.i.d. samples with \(d=10\) features; \(x_1,\ldots,x_d \stackrel{\text{i.i.d.}}{\sim} \mathcal{N}(0,1)\). Outcomes are observed only for the labeled sample: for \(j\in S\), \(Y_j = m_0(X_j) + \varepsilon_j\) with \(\varepsilon_j \sim \mathcal{N}(0,\sigma_y^2)\) and \(\sigma_y=5\). Following \citet{li2017matching,ray2019debiased}, we use the true regression function \(m_0(x)=\sum_{k=1}^{d} g_k(x_k)\) with \(g_1(x)=e^{-x}\), \(g_2(x)=x^2\), \(g_3(x)=x\), \(g_4(x)=\mathbb{I}\{x>0\}\), \(g_5(x)=\cos x\), and \(g_k(x)\equiv 0\) for \(k=6,\ldots,d\). Thus, only the first five coordinates of \(X\) influence \(Y\); the remaining features are noise.
	
	\paragraph{Results.}
	Figure~\ref{fig:S1_res_N1000_d10_sigma5} reports coverage (panels~(a)--(d)) and width ratios (panels~(e)--(h)). We plot MEC with the quadratic generator only (see Subsection~\ref{subsec:Supplemental plots for the simulation experiment in the main document} in the Appendix for other generators, which exhibit similar qualitative behavior). MEC attains near-nominal 95\% coverage across all label fractions \(f\) while delivering tighter intervals than CF--PPI. Vanilla PPI often undercovers, especially when \(f\) is small, reflecting overfitting due to label reuse. CF--PPI generally restores validity but can yield width ratios above 1 at \(f=0.1\) with KRR and kNN, indicating intervals wider than the classical estimator (i.e., no efficiency gain). Across predictors, MEC is most efficient with KRR or RF; this is expected in our setting, where FNN and kNN are configured as relatively weak, fast learners. Overall, MEC exhibits the most stable and efficient performance across \(f\) and learners compared with PPI and CF--PPI. Additional simulation experiments in the Appendix show similar results, further demonstrating the superior performance of MEC.
	
	\section{Extensions and practical considerations}
	\label{sec:extensions_practical}
	\paragraph{Choice of calibration basis.}
	The choice of calibration basis \(h\) determines both the resulting estimator and
	its robustness properties. In this paper, we mainly focus on the two-dimensional
	out-of-fold predictor basis \(h(x)=(1,\widehat m^{(-)}(x))\) in
	\eqref{eq:predictor_basis}. One may also consider richer bases, such as
	\(h(x)=(1,\widehat m^{(-)}(x),\widehat m^{(-)}(x)^2,\widehat m^{(-)}(x)^3)\), or
	bases formed from multiple out-of-fold ML predictors, such as
	\(h(x)=(1,\widehat m_1^{(-)}(x),\widehat m_2^{(-)}(x),
	\widehat m_3^{(-)}(x))\). Technically, these choices enlarge the calibration
	subspace from \(W=\operatorname{span}\{1,\widehat m^{(-)}\}\) to richer spaces,
	such as \(\operatorname{span}\{1,\widehat m^{(-)},(\widehat m^{(-)})^2,
	(\widehat m^{(-)})^3\}\) for a single predictor, or
	\(\operatorname{span}\{1,\widehat m_1^{(-)},\widehat m_2^{(-)},
	\widehat m_3^{(-)}\}\) for multiple predictors. Thus, the affine-robustness
	argument for the two-dimensional basis extends, in the first case, to robustness
	against low-order nonlinear transformations of a single predictor, and in the
	second case, to calibration over several candidate prediction rules. Additionally, the latter choice is naturally connected to ensemble learning \citep{dietterich2000ensemble} and Super Learner \citep{vanderlaan2007super}, since the calibration constraints can balance several candidate prediction rules
	simultaneously, although this comes at the cost of increased computation and
	potentially less stable calibrated weights.

	\paragraph{MEC under weighted estimating equations.}
	The MEC framework can be formulated beyond semi-supervised mean estimation as a
	general calibration device for weighted estimating equations; see
	Section~\ref{sec:Machine-learning-assisted generalized entropy calibration} of
	the Appendix. Roughly speaking, when an estimator is defined through baseline
	weights in a weighted estimating equation, MEC updates these weights through a
	Bregman projection so that the weighted sample satisfies calibration constraints
	based on outcome-relevant ML predictions, while preserving the original
	estimating-equation structure. This perspective highlights the broad
	applicability of MEC to settings such as weighted regression, missing-data
	estimation \citep{robins1994estimation,lipsitz1999weighted,qi2005weighted},
	causal effect estimation \citep{hernan2000marginal,robins2000marginal}, and
	weighted Cox regression \citep{lin1989robust,binder1992fitting}.
	
	This contrasts with EIF-based or orthogonal-score methods, such as targeted
	maximum likelihood estimation (TMLE) \citep{van2006targeted,van2011targeted}
	and double/debiased machine learning (DML) \citep{chernozhukov2018double},
	which typically require deriving an EIF or constructing an orthogonal score for
	the specific problem at the outset. Although MEC can be analyzed from an
	EIF-based perspective, as shown in
	Subsection~\ref{subsec:Asymptotic theory of GEC estimator}, its implementation
	does not require deriving the EIF in advance.

	\paragraph{Settings where MEC may not yield efficiency gains.}
	The efficiency gains from MEC are not automatic for every target parameter. MEC
	is most useful when the auxiliary covariates contain information about the
	outcome- or score-relevant variation in the estimating equation. In
	semi-supervised mean estimation, this condition is natural because \(X\) helps
	predict \(Y\) through the regression function \(m_0(x)=\mathbb{E}[Y\mid X=x]\),
	so the unlabeled covariates can be used to reduce variability. In contrast, for
	a target functional whose estimating equation has little or no component
	predictable from \(X\), the unlabeled covariate distribution carries limited
	information for improving efficiency. In such settings, calibration has little
	room to reduce variance.

	\section{Conclusion}\label{sec:Conclusion}
	We develop MEC, a novel variant of PPI for semi-supervised mean estimation that extends PPI through principled weight calibration using a predictor-based basis. MEC reweights labeled residuals via a Bregman projection, yielding robustness to predictor misspecification up to affine transformations while maintaining numerical stability through a low-dimensional dual Newton solver. Under mild regularity conditions, MEC attains the semiparametric efficiency bound when the projection error vanishes. Compared with CF--PPI, MEC achieves efficiency under weaker assumptions by replacing conditions on raw prediction error with weaker projection-error requirements. Empirically, MEC maintains near-nominal coverage and produces tighter confidence intervals than both the vanilla PPI and CF--PPI across a range of data-generating scenarios, with a real-data application further supporting its practical utility. 
	
	Promising directions for future work include extensions to broader causal and
	semiparametric parameter-estimation problems, as well as deeper connections to
	orthogonal statistical learning and generalized moment-equation frameworks.
	Although we outline MEC in a general weighted estimating-equation setting in the
	Appendix, together with two additional examples illustrating its applicability, rigorous
	theoretical guarantees in such broader settings require further investigation.
	Among several important directions, a key open problem is to clarify the
	theoretical advantages of MEC across different estimands and outcome types. In
	particular, it remains to determine when MEC can attain semiparametric
	efficiency under weaker assumptions, when it provides relative efficiency gains
	over benchmark methods, and whether such guarantees must be established in a
	problem-specific manner.
	
	\section*{Impact Statement}
	This work advances reliable statistical inference under scarce labeling—a pervasive challenge in modern ML systems.
	By integrating machine-learning predictors with principled calibration, MEC enables valid confidence intervals even when models are misspecified. We foresee broader impact in enhancing reproducibility and trustworthiness of machine-learning-based decision systems.

	
	
	\nocite{langley00}
	
	\bibliography{ref1}
	\bibliographystyle{icml2026}

	\newpage
	\appendix
	\onecolumn
	
	\section{Setup and notation}\label{sec:Notations}
	\subsection{Asymptotic notation}
	Unless stated otherwise, limits are taken as the sample size \(N\to\infty\) (and \(n=n(N)\to\infty\) when present).
	For deterministic sequences \(a_N,b_N>0\), we write \(a_N=O(b_N)\) if \(\sup_N a_N/b_N<\infty\), and \(a_N=o(b_N)\) if \(a_N/b_N\to 0\).
	For random quantities \(X_N\) and positive scales \(a_N\), 
	\(X_N=O_p(a_N)\) means \(X_N/a_N\) is tight (bounded in probability), and \(X_N=o_p(a_N)\) means \(X_N/a_N \xrightarrow{p} 0\).
	We use \(\xrightarrow{p}\) and \(\xrightarrow{d}\) to denote convergence in probability and in distribution, respectively.
	

	\subsection{Empirical-process notation}
	We adopt standard empirical‐process notation \citep{geer2000empirical,pollard1989asymptotics,kennedy2016semiparametric}.
	Let $P$ denote the super–population distribution of $X$. Define the empirical measures
	\begin{align*}
		P_N &:= \frac{1}{N}\sum_{i=1}^N \delta_{X_i},\\
		P_n &:= \frac{1}{n}\sum_{j\in S} \delta_{X_j},
	\end{align*}
	where $\delta_z$ is the Dirac measure at $z$ and $S$ is an index set of labeled data with $|S|=n$.
	For any measurable $h$,
	\begin{align*}
		P_N h &:= \int h\,dP_N = \frac{1}{N}\sum_{i=1}^N h(X_i),\\
		P_n h &:= \int h\,dP_n = \frac{1}{n}\sum_{j\in S} h(X_j),\\
		P h &:= \int h\,dP = \mathbb{E}_P[h(X)].
	\end{align*}
	
	When a function depends on both covariates and outcomes, we use the same notation with the obvious modification; e.g.,
	\begin{align*}
		P_n h &= \frac{1}{n}\sum_{j\in S} h(X_j,Y_j),\\
		P h &= \mathbb{E}_P[h(X,Y)].
	\end{align*}
	
	For example, if $h(x)=\mathbf{1}\{x\in A\}$, then $P_n h$ is the empirical probability of the event $A$ within the labeled subset $S$.
	By the law of large numbers, $P_N h \to P h$ and (in our setting, $n/N\to f\in(0,1)$) $P_n h \to P h$ in probability. This notation streamlines decompositions of estimators and estimands and the statement of convergence results. 
	
	We define the weighted empirical measure
	\[
	P_n^{\omega} h \;:=\; \frac{1}{N}\sum_{j\in S} \omega_j\, h(X_j).
	\]
	
	By definition, if \(\omega_j = d_j \equiv N/n\) (the base weights), then
	\[
	P_n^{d} h
	\;=\; \frac{1}{N}\sum_{j\in S} d_j\, h(X_j)
	\;=\; \frac{1}{n}\sum_{j\in S} h(X_j)
	\;=\; P_n h.
	\]
	
	Thus, in our setting, \(P_n^{d} = P_n\).
	
	\newpage
	\section{Machine-learning-assisted generalized entropy calibration}\label{sec:Machine-learning-assisted generalized entropy calibration}
	\subsection{General framework}\label{subsec:General framework}
	We describe MEC as a calibration device for weighted estimating equations
	\citep{robins1994estimation,lipsitz1999weighted,qi2005weighted}. The
	semi-supervised mean-estimation problem studied in the main text is one special
	case of this framework. This broader formulation clarifies how MEC differs from
	other approaches that use flexible ML predictions as intermediate inputs for
	statistically principled estimation, including TMLE
	\citep{van2011targeted,van2018targeted}, DML
	\citep{chernozhukov2018double}, and prediction-assisted adjustment methods
	\citep{guo2023generalized,lin2013agnostic,cohen2024noharm}.
	
	\paragraph{Conceptual workflow.}
	Figure~\ref{fig:mec_conceptual} describes the conceptual workflow for
	constructing the MEC estimator; the notation in the figure will be introduced
	shortly. Starting from a well-defined weighted estimating equation for the target
	parameter, MEC first specifies baseline source-set weights
	\(\{\widehat d_i:i\in\mathcal I_S\}\). These weights may be design weights
	\citep{horvitz1952generalization}, inverse-probability weights
	\citep{seaman2013review,li2018balancing}, generalizability or transportability
	weights \citep{stuart2011use,pressler2013use}, or normalized constant weights
	corresponding to unweighted estimating equations. In causal settings, this step
	often corresponds to propensity-score modeling
	\citep{rosenbaum1983central}. Separately, MEC constructs a cross-fitted
	calibration basis \(\widehat h_i\) using ML predictions. In causal settings, it is conceptually related to prognostic-score
	modeling \citep{hansen2008prognostic}. Given a convex generator, MEC then
	computes a minimal Bregman perturbation of the baseline weights, following
	generalized entropy calibration \citep{kwon2025}, to obtain MEC-updated weights
	\(\{\widehat w_i:i\in\mathcal I_S\}\) that balance the calibration basis between
	the weighted source sample and the target sample. The MEC estimator
	\(\widehat\theta_{\mathrm{MEC}}\) is obtained by plugging these updated weights
	into the original weighted estimating equation. Under suitable calibration
	regularity, this update preserves the target population estimating equation
	while modifying its finite-sample weighted approximation in outcome-relevant
	directions, which may improve efficiency.
	\begin{figure}[h!]
		\centering
		\begin{tikzpicture}[
			scale=0.95,
			transform shape,
			>=Stealth,
			font=\scriptsize,
			boxmain/.style={
				draw=black, fill=gray!8,
				rounded corners=5pt, line width=0.6pt, align=center,
				inner xsep=6pt, inner ysep=3pt, minimum height=1.0cm
			},
			boxweight/.style={
				draw=black, fill=gray!15,
				rounded corners=5pt, line width=0.6pt, align=center,
				inner xsep=6pt, inner ysep=3pt, minimum height=1.0cm
			},
			boxest/.style={
				draw=black, fill=white,
				rounded corners=5pt, line width=0.8pt, align=center,
				inner xsep=6pt, inner ysep=3pt, minimum height=1.0cm
			},
			boxbasis/.style={
				draw=black, fill=gray!15,
				rounded corners=5pt, line width=0.6pt, align=center,
				inner xsep=6pt, inner ysep=3pt, minimum height=1.0cm
			},
			boxbregman/.style={
				draw=black, fill=white, text=black,
				rounded corners=5pt, line width=0.8pt, align=center,
				inner xsep=8pt, inner ysep=4pt, minimum height=1.1cm
			},
			flow/.style={
				draw=black!75,
				line width=0.75pt,
				-{Stealth[length=2.5mm,width=1.8mm]}
			},
			flowthick/.style={
				draw=black,
				line width=1.0pt,
				-{Stealth[length=2.5mm,width=1.8mm]}
			},
			lab/.style={
				font=\scriptsize,
				align=center,
				text=black
			}
			]
			
			
			\node[boxmain, minimum width=4.0cm] (top) at (0, 5.1) {%
				Weighted estimating equation\\[0.15em]
				$\displaystyle n^{-1}U_n^\omega(\theta)\to U(\theta)$\\[0.25em]
				$\displaystyle \omega=(\omega_1,\dots,\omega_n)^\top$
			};
			
			\node[boxest, minimum width=2.5cm] (center) at (0, 3.3) {%
				MEC estimator\\[0.15em]
				$\widehat\theta_{\mathrm{MEC}}$
			};
			
			\node[boxweight, minimum width=2.5cm] (left) at (-4.5, 3.3) {%
				Baseline weights\\[0.15em]
				$\{\widehat d_j:j\in I_S\}$
			};
			
			\node[boxbasis, minimum width=2.5cm] (right) at (4.5, 3.3) {%
				Calibration basis\\[0.25em]
				$\widehat h_i=\widehat h(X_i)$
			};
			
			\node[boxweight, minimum width=4.0cm] (updated) at (0, 1.6) {%
				MEC-updated weights\\[0.15em]
				$\{\widehat w_j:j\in I_S\}$
			};
			
			\node[boxbregman, minimum width=6.5cm] (bregman) at (0, -0.2) {%
				Bregman projection\\[0.4em]
				$\displaystyle \min_{\{w_i>0:i\in\mathcal I_S\}} \sum_{i\in\mathcal I_S}D_G(w_i\|\widehat d_i) \quad \text{subject to} \quad \sum_{i\in\mathcal I_S}w_i\widehat h_i = \sum_{i\in\mathcal I_T}\widehat h_i$
			};

			
			\draw[flow] (top.south) -- (center.north)
			node[midway, right=1mm, lab] {Solve};
			
			\draw[flow, rounded corners=8pt] (top.west) -- 
			node[below=2mm, lab] {} 
			(top.west -| left.north) -- (left.north);
			
			\draw[flow, rounded corners=8pt] (top.east) -- 
			node[below=2mm, lab] {} 
			(top.east -| right.north) -- (right.north);
			
			\draw[flowthick, rounded corners=8pt] (left.south) -- 
			(left.south |- bregman.west) -- (bregman.west);
			
			\draw[flowthick, rounded corners=8pt] (right.south) -- 
			(right.south |- bregman.east) -- (bregman.east);
			
			\draw[flowthick] (bregman.north) -- (updated.south);
			
			\draw[flowthick] ([xshift=-1.6cm]updated.north) -- ([xshift=-1.6cm]top.south)
			node[midway, left=1mm, lab] {Plug-in};
			
		\end{tikzpicture}
		\caption{Conceptual diagram for constructing the MEC estimator. The notation \(\omega\)
			denotes a generic set of weights in the weighted estimating function
			\(U_n^\omega(\theta)\). In the MEC construction, \(\widehat d_i\) denotes the
			baseline weight and \(\widehat w_i\) denotes the MEC-updated weight. When all
			weights in \(\omega\) are updated, we continue to use \(\omega\), rather than
			introducing separate \(w\)-notation, to avoid notational clutter. This is the
			case in the semi-supervised mean-estimation setting in the main paper, where all
			weights in the rectifier term are updated.}
		\label{fig:mec_conceptual}
	\end{figure}
	
	\subsection{MEC implementation under weighted estimating-equations}\label{subsec:MEC implementation under a weighted estimating-equation}
	\paragraph{Weighted estimating-equation setup.} Let \(O_i\), \(i=1,\ldots,n\), denote independent observed data units in the
	estimating-equation problem under consideration. The
	components of \(O_i\) are application-specific and may include covariates,
	outcomes, treatment or source indicators, sampling indicators, and variables
	needed to define the estimating equation and baseline weights. Let \(\theta_0\) denote the target parameter, characterized as the unique root of the population estimating equation
	\[
	U(\theta_0)=0.
	\]
	We suppose that \(\theta_0\) is a low-dimensional Euclidean parameter, such as a
	mean, regression coefficient, marginal log-hazard ratio, or average treatment
	effect.
	
	Suppose that \(U(\theta)\) is the population limit of a weighted sample
	estimating function \(U_n^\omega(\theta)\). Here, \(\omega\) denotes the
	collection of weights attached to the weighted contributions in
	\(U_n^\omega(\theta)\); these need not correspond one-to-one to all observed
	units when the estimating equation contains both weighted and unweighted
	components. For example, in semi-supervised mean estimation (Example 1 in Subsection \ref{subsec:Examples}), the weights are
	attached only to the labeled residual-correction terms, whereas the target
	covariate plug-in terms are unweighted. 
	
	For an admissible choice of estimating-equation weights, assume that
	\[
	n^{-1}U_n^\omega(\theta)\overset{p}{\longrightarrow}U(\theta)
	\]
	uniformly over \(\theta\in\mathcal N(\theta_0)\), where \(\mathcal N(\theta_0)\)
	is a neighborhood of \(\theta_0\). The limit may be a positive scalar multiple
	of \(U(\theta)\), which has the same unique root and is therefore immaterial for
	consistency. Then, under standard \(Z\)-estimation regularity conditions
	\citep{van2000asymptotic} and, in causal-inference settings, the
	relevant identification assumptions \citep{imbens2015causal}, any solution
	\(\widehat\theta\) to
	\begin{align}
		\label{eq:sample_weighting_equation}
		U_n^\omega(\theta)=0
	\end{align}
	is consistent for \(\theta_0\). We note that the notation \(U_n^\omega(\theta)\) is somewhat abstract;
	therefore, in Subsection~\ref{subsec:Examples}, we provide concrete examples of
	MEC applications for clarity. 
	
	
	
	\paragraph{Step 1: Specify the initial weights and the source and target sets.}
	MEC starts from baseline weights that define a valid weighted estimating
	equation for the target parameter. Let \(\mathcal I_S\) denote the source set
	whose weights may be updated, and let \(\mathcal I_T\) denote the target set to
	which the source set is calibrated. For \(i\in\mathcal I_S\), let
	\(\widehat d_i\) denote the baseline weight. When these calibrated source
	weights form only part of a larger estimating-equation weight vector \(\omega\),
	we denote their MEC-updated versions by \(\widehat w_i\). Thus, MEC updates the
	relevant entries of \(\omega\) by replacing \(\widehat d_i\) with
	\(\widehat w_i\); see Example~3 in Subsection~\ref{subsec:Examples}. If MEC
	updates all weights appearing in \(\omega\), then no separate \(w\)-notation is
	needed; we simply write \(\widehat\omega_i\) for the MEC-updated weights; see
	Examples~1 and~2 in Subsection~\ref{subsec:Examples}.
	
	The baseline weights \(\{\widehat d_i:i\in\mathcal I_S\}\) encode the original
	sampling, treatment-assignment, or transport structure of the problem. When
	\(O_i\) includes a treatment, sampling, or source indicator, constructing
	\(\widehat d_i\) may require estimating a propensity score or source-selection
	probability using either a parametric logistic regression model or a flexible ML
	method, as is standard in causal inference
	\citep{rosenbaum1983central,imbens2015causal}. What is essential is that the
	solution to \eqref{eq:sample_weighting_equation} based on the baseline weights
	be a valid estimator of the target parameter; at a minimum, consistency should
	hold under the assumed identification conditions.
	
	Because MEC is fundamentally a weight-updating technique, the scaling of the
	baseline weights \(\{\widehat d_i:i\in\mathcal I_S\}\) is important. In the
	total-scale calibration formulation, which we use by default, we recommend
	normalizing the baseline weights as
	\begin{align}
		\label{eq:baseline_normalization}
		\sum_{i\in\mathcal I_S}\widehat d_i = |\mathcal I_T|,
	\end{align}
	so that the baseline source weights are on the same total scale as the target
	sample, where \(|\cdot|\) denotes set cardinality. This normalization can often be achieved by rescaling the initial weights as
	\(\widehat d_i \leftarrow |\mathcal I_T|\widehat d_i/
	\sum_{j\in\mathcal I_S}\widehat d_j\), for \(i\in\mathcal I_S\).
	
	\paragraph{Step 2: Construct the calibration basis.}
	The distinctive feature of MEC is that the baseline weights
	\(\{\widehat d_i:i\in\mathcal I_S\}\) are updated using an
	outcome-predictive calibration basis constructed from cross-fitted ML
	predictions. We use the superscript \((-)\) when we wish to emphasize the
	out-of-fold construction. Specifically, define the \(p\)-dimensional basis
	\begin{align}
		\label{eq:predictor_basis_abstract}
		\widehat h_i
		=
		\widehat h(X_i)
		=
		\left(
		1,
		\widehat b_1^{(-)}(X_i),
		\widehat b_2^{(-)}(X_i),
		\ldots,
		\widehat b_{p-1}^{(-)}(X_i)
		\right)
		\in\mathbb R^p,
	\end{align}
	where \(\widehat b_\ell^{(-)}(\cdot)\), \(\ell=1,\ldots,p-1\), denotes an
	out-of-fold ML prediction used to form the calibration basis. In the main paper, we write \(h_i=h(X_i)\), without a hat, to avoid notational
	clutter, as in \eqref{eq:predictor_basis}; here, however, we keep the hat
	explicit.
	

	The choice of basis in \eqref{eq:predictor_basis_abstract} is
	problem-specific and is left to the analyst; see
	Section~\ref{sec:extensions_practical} for a related discussion. The term
	\emph{``machine-learning-assisted''} reflects the use of the predictive power
	of ML to guide the weight update along directions that may be informative for
	efficiency. Cross-fitting is central to this construction: each basis value is
	evaluated out of sample, so that, conditional on the training folds, the
	ML-based basis can be treated as a covariate-dependent feature basis, analogous
	to fixed moment features in survey calibration. This separation between
	nuisance training and validation observations helps avoid empirical-process
	restrictions, such as Donsker-type conditions, as in double machine learning and
	cross-validated targeted maximum likelihood estimation
	\citep{chernozhukov2018double,van2011targeted}. This distinguishes MEC from
	classical weight calibration, where auxiliary variables are typically
	prespecified and the target is often a finite-population total or mean
	\citep{deville1992calibration,haziza2017construction}. Model-assisted
	calibration is closer in spirit because it may use prediction models to form
	calibration features, but cross-fitting is not central in the classical
	design-based finite-population setting
	\citep{breidt2017model,sarndal2003model}.

	
	
	\paragraph{Step 3: Solve the Bregman calibration problem.} MEC obtains calibrated weights
	\(\{\widehat w_i:i\in\mathcal I_S\}\), which we refer to as
	MEC-updated weights, by solving
	\begin{align}
		\label{eq:MEC_GEC_calibration_abstract}
		\min_{\{w_i>0:i\in\mathcal I_S\}}
		\sum_{i\in\mathcal I_S}D_G(w_i\|\widehat d_i)
		\quad
		\text{subject to}
		\quad
		\sum_{i\in\mathcal I_S}w_i\widehat h_i
		=
		T_h,
	\end{align}
	where \(D_G(\cdot\|\cdot)\) is the Bregman divergence generated by a strictly
	convex function \(G\). Specifically, letting \(g=G'\), $D_G(u\|v)
	=
	G(u)-G(v)-g(v)(u-v).$ Since we assumed a separable Bregman divergence in Subsection~\ref{subsec:Bregman divergence} of the main paper, the formulation in \eqref{eq:MEC_GEC_calibration_abstract} is equivalent to the Bregman projection in \eqref{eq:Bregman projection}.

	Table~\ref{tab:bregman-examples} lists representative Bregman entropies for the MEC framework, reporting the generator \(G(u)\), its derivative \(g(u)=G'(u)\), and the closed-form divergence \(D_G(u\Vert v)\) for several classical choices, including quadratic, Kullback--Leibler, empirical likelihood, Hellinger, log--log, inverse, and R\'enyi.


	\begin{table}[h!]
		\centering
		\caption{Representative Bregman generators.}
		\label{tab:bregman-examples}
		\renewcommand{\arraystretch}{1.25}
		\begin{tabular*}{\textwidth}{@{\extracolsep{\fill}}llll@{}}
			\toprule
			\textsc{Entropy} & \(G(u)\) & \(g(u)=G'(u)\) & \(D_G(u\|v)\) \\
			\midrule
			Quadratic
			&
			\(\frac{1}{2}u^2\)
			&
			\(u\)
			&
			\(\frac{1}{2}(u-v)^2\)
			\\
			
			Kullback--Leibler
			&
			\(u\log u\)
			&
			\(\log u+1\)
			&
			\(u\log(u/v)-u+v\)
			\\
			
			Empirical likelihood
			&
			\(-\log u\)
			&
			\(-u^{-1}\)
			&
			\(-\log(u/v)+u/v-1\)
			\\
			
			Squared Hellinger
			&
			\((\sqrt{u}-1)^2\)
			&
			\(1-u^{-1/2}\)
			&
			\(\sqrt{v}\left(1-\sqrt{u/v}\right)^2\)
			\\
			
			Inverse
			&
			\(\frac{1}{2u}\)
			&
			\(-\frac{1}{2}u^{-2}\)
			&
			\(\frac{1}{2}\left(\frac{1}{u}-\frac{1}{v}\right)
			+\frac{u-v}{2v^2}\)
			\\
			
			R\'enyi \((\alpha>0)\)
			&
			\(\frac{u^{\alpha+1}}{\alpha+1}\)
			&
			\(u^\alpha\)
			&
			\(\frac{u^{\alpha+1}-v^{\alpha+1}}{\alpha+1}
			-v^\alpha(u-v)\)
			\\
			\bottomrule
		\end{tabular*}
	\end{table}
	
	Figure~\ref{fig:Bregman_divergence_entropy} shows the shape of the Bregman divergences 
	\(D_G(u\|v)\) for six representative entropy generators in Table \ref{tab:bregman-examples}. The figure illustrates how the choice of generator \(G\) determines the local geometry of the divergence: while the quadratic entropy produces a symmetric parabola, 
	the others can exhibit asymmetric or heavy-tailed growth, reflecting distinct penalty 
	structures for deviations between \(u\) and \(v\). These geometric differences underlie the varying weighting behavior in entropy-based calibration methods.
	
	\begin{figure}[h!]
		\centering
		\includegraphics[width=\linewidth]{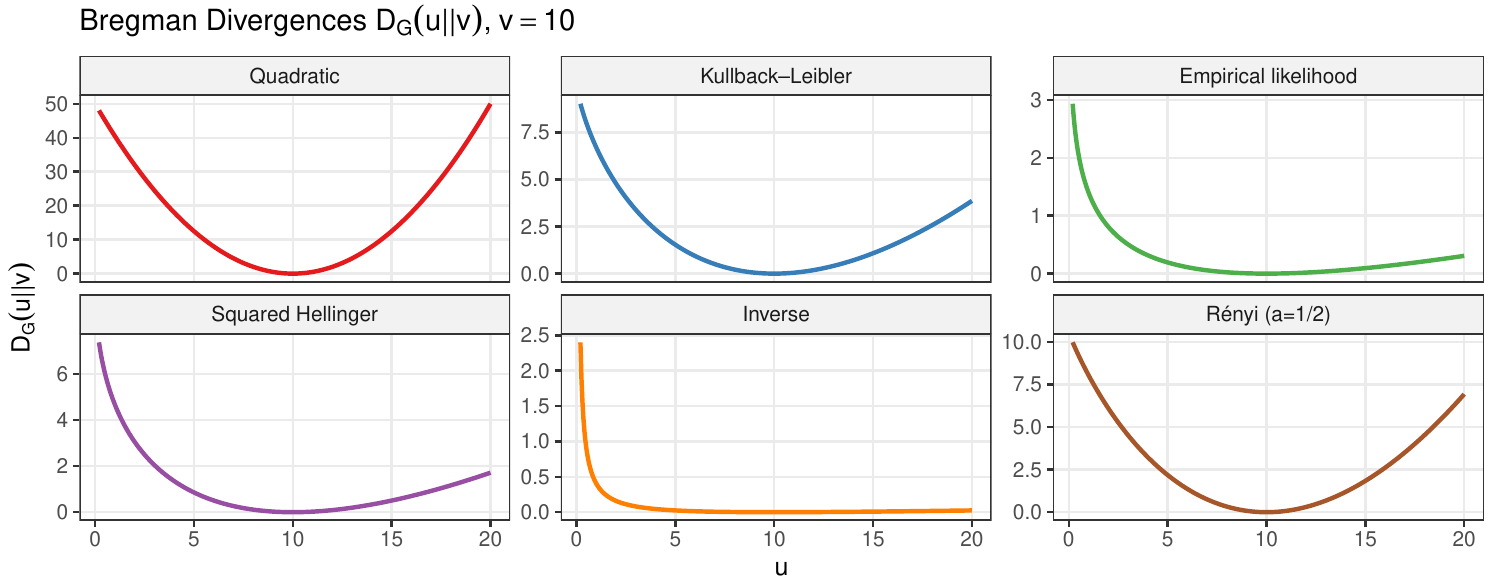}
		\caption{Bregman divergences \(D_G(u\|v)\), $v=10$ for six representative entropy generators (Quadratic, Kullback--Leibler, Empirical likelihood, Squared Hellinger, Inverse, and Rényi with \(\alpha = 1/2\)); see Table~\ref{tab:bregman-examples}.}
		\label{fig:Bregman_divergence_entropy}
	\end{figure}
	
	In the total-scale formulation, which we use by default, one natural choice is
	\(T_h=\sum_{j\in\mathcal I_T}\widehat h_j\). Here, because the calibration basis in \eqref{eq:predictor_basis_abstract}
	includes an intercept as its first component, the total-scale constraint in
	\eqref{eq:MEC_GEC_calibration_abstract} implies
	\[
	\sum_{i\in\mathcal I_S}w_i=|\mathcal I_T|.
	\]
	Therefore, both the baseline weights \(\{\widehat d_i:i\in\mathcal I_S\}\)
	and the MEC-updated weights \(\{\widehat w_i:i\in\mathcal I_S\}\) are on the
	same total scale. It is important to note that, with \(\alpha_i=w_i/|\mathcal I_T|\), the same
	constraint can be equivalently written as $\sum_{i\in\mathcal I_S}\alpha_i\widehat h_i
	=(1/|\mathcal I_T|) \sum_{j\in\mathcal I_T}\widehat h_j$ and $\sum_{i\in\mathcal I_S}\alpha_i=1.$ Thus, the calibration step matches the empirical target mean of the basis.
	

	\paragraph{Step 4: Form the MEC estimator.}
	Finally, the MEC-updated weights
	\(\{\widehat w_i:i\in\mathcal I_S\}\) are inserted into the weighted estimating equation \eqref{eq:sample_weighting_equation}, yielding the MEC
	estimator \(\widehat\theta_{\mathrm{MEC}}\).

	\subsection{Examples of MEC applications}\label{subsec:Examples}
	
	In this paper, our main application of MEC is semi-supervised mean estimation. Here, we present two additional examples illustrating how MEC can be applied to
	different weighted estimating equations.
	
	\paragraph{Example 1. Semi-supervised mean estimation (this paper).}
	Let $O_i=(X_i,\delta_i,\delta_iY_i),$ $i\in\mathcal I_T=\{1,\ldots,N\}$, denote the observed data unit, where \(X_i\in\mathcal X\) is the covariate
	vector, \(Y_i\in\mathbb R\) is the response, and
	\(\delta_i=I(i\in\mathcal I_S)\) is the label indicator. Thus, covariates are
	observed for all units in the target sample \(\mathcal I_T\), whereas outcome
	information is available only for the labeled subset
	\(\mathcal I_S=\{i:\delta_i=1\}\), with \(|\mathcal I_S|=n\). Equivalently, the
	observed data consist of \(\{X_i:i\in\mathcal I_T\}\) and
	\(\{(X_j,Y_j):j\in\mathcal I_S\}\). In typical semi-supervised settings,
	\(N\) is much larger than \(n\).
	
	The target parameter is the superpopulation mean
	\[
	\theta_0=\mathbb E(Y),
	\]
	which is characterized as the unique solution to
	\(U(\theta_0)=\mathbb E(Y-\theta_0)=0\).
	
	Given a prediction rule \(m:\mathcal X\to\mathbb R\) for the response \(Y\)
	given covariates \(X\), the underlying weighted estimating equation can be
	written as
	\[
	U_{n,N}^\omega(\theta)
	=
	U_{n,N}^\omega(\theta; m)
	=
	\sum_{i\in\mathcal I_T}
	\{m(X_i)-\theta\}
	+
	\sum_{j\in\mathcal I_S}
	\omega_j\{Y_j-m(X_j)\}
	=
	0,
	\]
	where \(\omega_j\) denotes the weight assigned to the labeled-sample
	residual-correction, or rectifier, term in PPI
	\citep{angelopoulos2023prediction,angelopoulos2024}. 
	
	In this example, the target-sample plug-in terms have fixed unit weights, while
	the labeled residual-correction terms carry weights \(\omega_j\). MEC updates
	only these labeled-sample weights, leaving the target-sample unit weights fixed.
	Thus, no separate \(w\)-notation is needed; we continue to write
	\(\widehat\omega_j\) for the MEC-updated labeled-sample weights. This is
	consistent with the notation used in the main paper when illustrating MEC for
	PPI.
	
	
	In the baseline semi-supervised estimator,
	\(\omega_j=\widehat d_j=d_j=N/n\), so
	\(\sum_{j\in\mathcal I_S}\widehat d_j=N=|\mathcal I_T|\). Thus, the baseline
	weights are already on the target-sample scale.
	
	Let \(n,N\to\infty\) with \(n/N\to f\in(0,1)\). Then
	\[
	\frac{1}{N}U^\omega_{n,N}(\theta;m)
	=
	\frac{1}{N}\sum_{i\in\mathcal I_T}\{m(X_i)-\theta\}
	+
	\frac{1}{n}\sum_{j\in\mathcal I_S}\{Y_j-m(X_j)\}
	\overset{p}{\longrightarrow}
	\mathbb E(Y-\theta)
	=
	U(\theta).
	\]
	Equivalently,
	\(n^{-1}U^\omega_{n,N}(\theta;m)\overset{p}{\longrightarrow}f^{-1}U(\theta)\),
	which has the same unique root as \(U(\theta)\). Thus,
	\(U^\omega_{n,N}(\theta;m)=0\) is a valid sample analogue of the population
	equation \(U(\theta)=0\).
	
	For MEC, we first construct a cross-fitted ML predictor \(\widehat m^{(-)}\),
	where \(\widehat m^{(-)}(X_j)\) denotes the out-of-fold prediction for
	\(j\in\mathcal I_S\). We then use the predictor basis
	\(\widehat h_j=(1,\widehat m^{(-)}(X_j))\) to update the baseline weights
	\(\widehat d_j=N/n\) through the MEC calibration step. The resulting
	MEC-updated weights \(\widehat\omega_j\) satisfy
	\[
	\sum_{j\in\mathcal I_S}\widehat\omega_j=N,
	\qquad
	\sum_{j\in\mathcal I_S}\widehat\omega_j\widehat m^{(-)}(X_j)
	=
	\sum_{i\in\mathcal I_T}\widehat m^{(-)}(X_i).
	\]
	Solving \(U_{n,N}^{\widehat\omega}(\theta;\widehat m^{(-)})=0\) gives
	\(\widehat\theta_{\mathrm{MEC}}
	=N^{-1}\sum_{i\in\mathcal I_T}\widehat m^{(-)}(X_i)
	+N^{-1}\sum_{j\in\mathcal I_S}\widehat\omega_j
	\{Y_j-\widehat m^{(-)}(X_j)\}\). By the second calibration constraint, this
	reduces to
	\(\widehat\theta_{\mathrm{MEC}}
	=N^{-1}\sum_{j\in\mathcal I_S}\widehat\omega_jY_j\), as derived in
	\eqref{eq:GEC_estimator}.
	
	\paragraph{Example 2. Average treatment-effect (ATE) estimation and a doubly robust (DR) representation.}
	Let \(O_i=(X_i,A_i,Y_i)\), \(i=1,\ldots,n\), denote the observed data unit,
	where \(X_i\in\mathcal X\) is the covariate vector, \(A_i\in\{0,1\}\) is the
	treatment indicator, and \(Y_i\) is the observed outcome. Let \(Y_i^1\) and
	\(Y_i^0\) denote the potential outcomes under treatment and control,
	respectively, so that, under consistency,
	\(Y_i=A_iY_i^1+(1-A_i)Y_i^0\). The average treatment effect is
	\[
	\theta_0=\mathbb E(Y^1-Y^0).
	\]
	
	Under consistency, positivity, and no unmeasured confounding
	\citep{imbens2015causal}, \(\theta_0\) can be characterized as the unique
	solution to the inverse-probability-weighted (IPW) population estimating equation
	\[
	U_{\mathrm{ipw}}(\theta_0)
	=
	\mathbb E\left\{
	\frac{AY}{\pi(X)}
	-
	\frac{(1-A)Y}{1-\pi(X)}
	-
	\theta_0
	\right\}
	=
	0,
	\]
	where \(\pi(X)=\mathbb P(A=1\mid X)\) denotes the true propensity score. Let
	\(\mathcal I_{S,1}=\{i:A_i=1\}\) and
	\(\mathcal I_{S,0}=\{i:A_i=0\}\) denote the treated and control source sets,
	respectively, and let \(\mathcal I_T\) denote the target sample representing
	the covariate distribution of interest. For the ATE in the observed population,
	a natural default choice is
	\(\mathcal I_T=\mathcal I_{S,1}\cup\mathcal I_{S,0}=\{1,\ldots,n\}\).
	
	The weighted estimating equation can be written as
	\[
	U_n^{\omega_1,\omega_0}(\theta)
	=
	\sum_{i\in\mathcal I_{S,1}}\omega_{1i}Y_i
	-
	\sum_{i\in\mathcal I_{S,0}}\omega_{0i}Y_i
	-
	|\mathcal I_T|\theta
	=
	0,
	\]
	where \(\omega_{1i}\) and \(\omega_{0i}\) are treatment-specific
	estimating-equation weights. At the population level, the corresponding
	inverse-probability weights are \(\omega_{1i}=1/\pi(X_i)\) for
	\(i\in\mathcal I_{S,1}\) and
	\(\omega_{0i}=1/\{1-\pi(X_i)\}\) for \(i\in\mathcal I_{S,0}\). Since
	\(|\mathcal I_T|=n\) under the default ATE target, under the identification
	conditions above,
	\[
	\frac{1}{n}U_n^{\omega_1,\omega_0}(\theta)
	\overset{p}{\longrightarrow}
	\mathbb E\left\{
	\frac{AY}{\pi(X)}
	-
	\frac{(1-A)Y}{1-\pi(X)}
	-
	\theta
	\right\}
	=
	\mathbb E(Y^1-Y^0-\theta)
	=
	U(\theta).
	\]
	Thus, \(U_n^{\omega_1,\omega_0}(\theta)=0\) is a valid sample analogue of the
	population estimating equation \(U(\theta)=0\).
	
	We first define the unnormalized treatment-specific baseline weights as
	\(\widehat d_{1i}=1/\widehat\pi^{(-)}(X_i)\) for
	\(i\in\mathcal I_{S,1}\) and
	\(\widehat d_{0i}=1/\{1-\widehat\pi^{(-)}(X_i)\}\) for
	\(i\in\mathcal I_{S,0}\), where \(\widehat\pi^{(-)}(\cdot)\) denotes the
	cross-fitted propensity-score estimator. For exact finite-sample total-scale
	normalization, we use
	\[
	\widehat d_{1i}
	\leftarrow
	\frac{|\mathcal I_T|\widehat d_{1i}}
	{\sum_{j\in\mathcal I_{S,1}}\widehat d_{1j}},
	\quad i\in\mathcal I_{S,1},
	\qquad
	\widehat d_{0i}
	\leftarrow
	\frac{|\mathcal I_T|\widehat d_{0i}}
	{\sum_{j\in\mathcal I_{S,0}}\widehat d_{0j}},
	\quad i\in\mathcal I_{S,0}.
	\]
	Then
	\(\sum_{i\in\mathcal I_{S,1}}\widehat d_{1i}
	=
	\sum_{i\in\mathcal I_{S,0}}\widehat d_{0i}
	=
	|\mathcal I_T|\), placing each treatment-specific source sample on the same
	total scale as the target sample.
	
	For MEC, the baseline treatment-specific weights \(\widehat d_{1i}\) and
	\(\widehat d_{0i}\) are replaced by MEC-updated weights
	\(\widehat\omega_{1i}\), \(i\in\mathcal I_{S,1}\), and
	\(\widehat\omega_{0i}\), \(i\in\mathcal I_{S,0}\), respectively. These updated
	weights remain close to the baseline IPW weights, in the Bregman sense, while
	balancing an outcome-predictive calibration basis. For example, one may use
	\[
	\widehat h_i
	=
	(1,\widehat m_0^{(-)}(X_i),\widehat m_1^{(-)}(X_i)),
	\]
	where \(\widehat m_a^{(-)}(\cdot)\), \(a=0,1\), denotes an out-of-fold estimator
	of the treatment-specific outcome regression
	\[
	m_a(x)=\mathbb E(Y\mid A=a,X=x),\qquad a=0,1.
	\]
	The estimators \(\widehat m_1^{(-)}(\cdot)\) and
	\(\widehat m_0^{(-)}(\cdot)\) are obtained by cross-fitting outcome-regression
	learners within the treated and control samples, respectively. This is analogous
	to cross-fitted outcome nuisance estimation in DML
	\citep{chernozhukov2018double} and CV-TMLE
	\citep{van2011targeted,van2018targeted}.
	
	The MEC-updated treated and control weights are constructed to satisfy
	\begin{align}
		\label{eq:ATE_constraints}
		\sum_{i\in\mathcal I_{S,1}}\widehat\omega_{1i}\widehat h_i
		=
		\sum_{i\in\mathcal I_T}\widehat h_i,
		\qquad
		\sum_{i\in\mathcal I_{S,0}}\widehat\omega_{0i}\widehat h_i
		=
		\sum_{i\in\mathcal I_T}\widehat h_i .
	\end{align}
	The resulting MEC-type ATE estimator is the solution to
	\(U_n^{\widehat\omega_1,\widehat\omega_0}(\theta)=0\), namely
	\begin{align}
		\label{eq:MEC_estimator_ATE}
		\widehat\theta_{\mathrm{MEC}}
		&=
		\frac{1}{|\mathcal I_T|}
		\left[
		\sum_{i\in\mathcal I_{S,1}}\widehat\omega_{1i}Y_i
		-
		\sum_{i\in\mathcal I_{S,0}}\widehat\omega_{0i}Y_i
		\right].
	\end{align}
	Thus, in contrast to Example~1, where there is a single set of
	estimating-equation weights, MEC here updates two treatment-specific sets of
	weights corresponding to the treated and control mean components of the ATE.
	
	The same estimator can also be written through a doubly robust
	orthogonal-score representation \citep{bang2005doubly}. The Neyman-orthogonal
	ATE score is
	\[
	U_{\mathrm{orth}}(\theta_0;\eta_0)
	=
	\mathbb E\left[
	m_1(X)-m_0(X)
	+
	\frac{A}{\pi(X)}\{Y-m_1(X)\}
	-
	\frac{1-A}{1-\pi(X)}\{Y-m_0(X)\}
	-
	\theta_0
	\right]
	=0,
	\]
	where \(\eta_0=\{\pi,m_0,m_1\}\). Using the same MEC-updated
	treatment-specific weights \(\widehat\omega_{1i}\) and
	\(\widehat\omega_{0i}\), the orthogonal-score estimator is obtained by solving
	\(U_{n,\mathrm{orth}}^{\widehat\omega_1,\widehat\omega_0}
	(\theta;\widehat\eta)=0\), which yields
	\begin{align*}
		\widehat\theta_{\mathrm{MEC\text{-}DR}}
		&=
		\frac{1}{|\mathcal I_T|}
		\Bigg[
		\sum_{i\in\mathcal I_T}
		\{\widehat m_1^{(-)}(X_i)-\widehat m_0^{(-)}(X_i)\}
		+
		\sum_{i\in\mathcal I_{S,1}}
		\widehat\omega_{1i}\{Y_i-\widehat m_1^{(-)}(X_i)\}
		-
		\sum_{i\in\mathcal I_{S,0}}
		\widehat\omega_{0i}\{Y_i-\widehat m_0^{(-)}(X_i)\}
		\Bigg]  \\
		&=
		\frac{1}{|\mathcal I_T|}
		\left[
		\sum_{i\in\mathcal I_{S,1}}\widehat\omega_{1i}Y_i
		-
		\sum_{i\in\mathcal I_{S,0}}\widehat\omega_{0i}Y_i
		\right]
		=
		\widehat\theta_{\mathrm{MEC}} .
	\end{align*}
	The second equality follows from the calibration constraints on
	\(\widehat m_0^{(-)}\) and \(\widehat m_1^{(-)}\) in
	\eqref{eq:ATE_constraints}. Therefore, when the treatment-specific outcome
	regressions are included in the calibration basis, the MEC-type IPW estimator
	\(\widehat\theta_{\mathrm{MEC}}\) in \eqref{eq:MEC_estimator_ATE} is
	algebraically equivalent to the corresponding MEC-type DR estimator
	\(\widehat\theta_{\mathrm{MEC\text{-}DR}}\). This equivalence is a special
	feature of the linear mean-estimation problem, where calibration on the
	treatment-specific outcome regressions makes the IPW representation coincide
	with the Neyman-orthogonal representation.
	
	\paragraph{Example 3. Marginal log-hazard ratio estimation via weighted Cox regression.}
	In the externally controlled single-arm trial setting, let
	\(\mathcal I_T=\{i:A_i=1\}\) denote the treated trial target set and
	\(\mathcal I_S=\{i:A_i=0\}\) denote the external-control source set, with
	\(n_1=|\mathcal I_T|\). For each subject, the observed data are
	\(O_i=(Y_i,\delta_i,A_i,X_i)\), where \(Y_i=\min(T_i,C_i)\) is the observed
	follow-up time, \(\delta_i=I(T_i\le C_i)\) is the event indicator, \(A_i\in\{0,1\}\)
	is the treatment/source indicator, and \(X_i\) is the baseline covariate vector.
	Define the counting process and at-risk process by $\mathcal N_i(t)=I(Y_i\le t,\delta_i=1)$ and $\mathcal Y_i(t)=I(Y_i\ge t).$ The target parameter is the ATT marginal log-hazard ratio \(\theta_{ATT}\).
	Under the marginal proportional hazards model \citep{hernan2000marginal},
	\(\theta_{ATT}\) is characterized as the root of the population limit of the
	ATT-weighted Cox score \citep{lin1989robust,binder1992fitting},
	\[
	U_n^\omega(\theta)
	=
	\sum_{i\in\mathcal I_T\cup\mathcal I_S}
	\int
	\omega_i\{A_i-\bar A_\omega(t;\theta)\}\,d\mathcal N_i(t)
	=
	0,
	\]
	where
	\[
	\bar A_\omega(t;\theta)
	=
	\frac{
		\sum_{i\in\mathcal I_T\cup\mathcal I_S}
		\omega_i\mathcal Y_i(t)\exp(\theta A_i)A_i
	}{
		\sum_{i\in\mathcal I_T\cup\mathcal I_S}
		\omega_i\mathcal Y_i(t)\exp(\theta A_i)
	},
	\quad
	\omega_i=A_i+(1-A_i)q(X_i),
	\quad
	q(X_i)=\frac{\pi(X_i)}{1-\pi(X_i)}
	=
	\frac{\mathbb{P}(A=1\mid X)}{\mathbb{P}(A=0\mid X)}
	.
	\]
	For \(i\in\mathcal I_S\), the baseline ATT transport weight is normalized as
	\(\widehat d_i=n_1\widehat q_i/\sum_{j\in\mathcal I_S}\widehat q_j\), where
	\(\widehat q_i=\widehat\pi_i/(1-\widehat\pi_i)\),  with $\widehat\pi_i = \widehat\pi^{(-)}(X_i)$, and \(n_1=|\mathcal I_T|\). Thus, \(\sum_{i\in\mathcal I_S}\widehat d_i=n_1=|\mathcal I_T|\), so the
	baseline external-control weights are on the same total scale as the treated
	trial sample. MEC replaces the normalized baseline external-control weights
	\(\widehat d_i\), \(i\in\mathcal I_S\), by MEC-updated weights
	\(\widehat w_i\) that remain close to \(\widehat d_i\) and satisfy
	\[
	\sum_{i\in\mathcal I_S}\widehat w_i\widehat h_i
	=
	\sum_{i\in\mathcal I_T}\widehat h_i,
	\qquad
	\widehat h_i
	=
	\{1,\widehat S_0^{(-)}(t_1\mid X_i),\ldots,
	\widehat S_0^{(-)}(t_{p-1}\mid X_i)\}^{\top}.
	\]
	Here, \(\widehat S_0^{(-)}(t\mid X_i)\) denotes a cross-fitted estimate of the
	external-control survival function. Because \(\widehat h_i\) includes an intercept, the calibration constraint
	implies \(\sum_{i\in\mathcal I_S}\widehat w_i=n_1\), so the calibrated
	external-control weights remain on the same total scale as the treated trial
	sample. The final Cox estimating-equation weight is $\widehat\omega_i
	=
	A_i+(1-A_i)\widehat w_i.$ The MEC estimator \(\widehat\theta_{\mathrm{MEC}}\) is then obtained by solving
	the weighted Cox estimating equation \(U_n^{\widehat\omega}(\theta)=0\).
	
	
	
	\subsection{Relationship with TMLE, DML, and prediction-assisted adjustment}
	\label{subsec:relationship_prediction_assisted_adjustment}
	
	MEC is related to several modern frameworks that use flexible ML predictions as
	intermediate inputs for statistically principled estimation. In TMLE, ML
	estimates of nuisance functions are first obtained and then updated through a
	fluctuation step so that the resulting estimator solves an EIF estimating
	equation \citep{van2011targeted,van2018targeted}. In DML, ML estimates of
	nuisance functions are inserted into a Neyman-orthogonal score
	\citep{chernozhukov2018double,mackey2018orthogonal,foster2023orthogonal}. Thus,
	in both TMLE and DML, ML predictions enter \emph{directly} through
	nuisance-function components of an EIF-based or orthogonal-score
	representation.
	
	MEC uses ML predictions differently: \emph{implicitly}, through the construction
	of the calibration basis in \eqref{eq:predictor_basis_abstract}. Starting from a
	weighted estimating equation, which need not be EIF-based or
	Neyman-orthogonal, MEC uses cross-fitted outcome-predictive features to define
	calibration constraints and obtains the updated weights as a minimal Bregman
	perturbation of the baseline weights. Thus, the prediction rule determines
	outcome- or score-relevant directions along which the weighted source sample is
	balanced to the target sample, rather than being used to form a plug-in
	estimator, specify a fluctuation submodel, or construct an orthogonal score.

	It is important to emphasize that not starting from an EIF or an orthogonal
	score does not mean that MEC is disconnected from orthogonality or efficiency
	considerations. In fact, a carefully chosen calibration basis may recover an
	orthogonal or doubly robust representation; see Example~2 of
	Subsection~\ref{subsec:Examples}. Moreover, as discussed in this paper,
	semiparametric efficiency can still be studied theoretically through the
	asymptotic properties of the resulting MEC estimator, even though the
	implementation of MEC does not require deriving an EIF or orthogonal score in
	advance; see Theorem \ref{thm:cal-mean}.

	MEC is also related in spirit to second-stage adjustment strategies for ML
	predictions, such as generalized Oaxaca--Blinder (GOB) estimators
	\citep{guo2023generalized} and no-harm calibration in randomized experiments
	\citep{lin2013agnostic,cohen2024noharm}. GOB estimators use fitted outcome
	models to construct covariate-adjusted treatment-effect estimators, while
	no-harm calibration modifies such nonlinear adjustment procedures to avoid
	efficiency loss relative to simpler benchmarks. MEC follows a similar broad
	principle of using ML predictions as intermediate adjustment information, but it
	implements the adjustment through Bregman calibration of weights.

	In summary, the broad applicability of MEC comes from its formulation at the
	level of weighted estimating equations. MEC is not tied to a specific
	EIF-targeting step, as in TMLE, a Neyman-orthogonal score construction, as in
	DML, or a randomized-experiment mean-estimation framework, as in GOB and
	no-harm calibration. Instead, MEC may be applied whenever the problem provides
	three ingredients: a valid weighted estimating equation, baseline weights
	to be perturbed, and a source--target calibration structure based on
	outcome-predictive features. This includes semi-supervised inference
	\citep{zhang2019semi,angelopoulos2023prediction,zrnic2024cross},
	missing-data problems \citep{robins1995analysis,little2019statistical},
	causal-inference settings \citep{imbens2015causal,rosenbaum1983central}, and
	weighted survival estimating equations
	\citep{lin1989robust,binder1992fitting,robins2000marginal},
	among many other examples.

	\section{Semiparametric efficiency lower bound}\label{sec:Semiparametric efficiency theory}
	We study the semiparametric efficiency lower bound for estimating the superpopulation mean
	\(\theta_0=\mathbb{E}[Y]\) under our basic setup. For any fixed (possibly misspecified) predictor \(m:\mathcal{X}\to\mathbb{R}\), the PPI estimator
	solves the estimating equation $$\widehat U_{\mathrm{PPI}}(\theta)
	= \frac{1}{N}\sum_{i=1}^N \{m(X_i)-\theta\}
	+ \frac{1}{n}\sum_{j\in S}\{Y_j-m(X_j)\}=0,$$ whose solution is \(\widehat\theta_{\mathrm{PPI},m}\) in \eqref{eq:ppi-mean-sum}.
	A standard linearization \citep{yuan1998asymptotics,van2000asymptotic},
	with inclusion indicators \(\delta_i=\mathbb{I}\{i\in S\}\), yields the following
	asymptotically linear representation:
	\[
	\widehat\theta_{\mathrm{PPI},m}-\theta_0
	= \frac{1}{N}\sum_{i=1}^N \phi(O_i) + o_p(N^{-1/2}),
	\]
	where \(O_i=(X_i,Y_i,\delta_i)\) with $\delta \perp (X,Y)$ and the influence function is $\phi(O_i)=m(X_i)-\theta_0+(\delta_i/f)\{Y_i-m(X_i)\}.$ Consequently, asymptotic normality holds:
	\begin{align}\label{eq:ppi-asymp-var}
		\sqrt{N}\,(\widehat\theta_{\mathrm{PPI},m}-\theta_0) \ &\overset{d}{\longrightarrow}\ 
		\mathcal{N}(0,\sigma_f^2),
	\end{align}
	where $\sigma_f^2  = \Var(\phi(O_i))= \Var(Y) + (f^{-1}-1)\mathbb{E}[(Y-m(X))^2]$. By the law of total variance and $Y=m_0(X)+\varepsilon$ with $\mathbb{E}[\varepsilon\mid X]=0$, we can decompose  $\sigma_f^2$ as
	\begin{align}
		\sigma_f^2 
		&= \underbrace{\mathrm{Var}\!\big(m_0(X)\big) + f^{-1}\,\mathbb{E}\!\big[\mathrm{Var}(Y\mid X)\big]}_{\text{semiparametric efficiency lower bound}}
		\;+\;
		\underbrace{(f^{-1}-1)\,\mathbb{E}\!\big[(m_0(X)-m(X))^2\big]}_{\text{inflation due to misspecification }\ge 0}.
		\label{eq:ppi-eff-decomp}
	\end{align}
	
	In particular, the variance is minimized at the oracle predictor \(m(x) = m_0(x)=\mathbb{E}[Y \mid X=x]\), which yields the oracle PPI estimator 
	$$\widehat\theta_{\mathrm{PPI},m_0}
	=\frac{1}{N}\sum_{i=1}^N m_0(X_i)
	+
	\frac{1}{n}\sum_{j\in S}\{Y_j-m_0(X_j)\},$$ which minimizes the asymptotic variance of PPI, which is
	\begin{equation}\label{eq:semiparametric-bound}
		\sigma_{f,\mathrm{eff}}^{2}
		=
		\Var\big(\phi^{\mathrm{eff}}(O_i)\big)
		=
		\mathrm{Var}\!\big(m_0(X)\big) + \frac{1}{f}\mathbb{E}\!\big[\mathrm{Var}(Y\mid X)\big],
	\end{equation}
	where $\phi^{\mathrm{eff}}$ denotes the efficient influence function (EIF)
	\begin{align}
		\label{eq:EIF}
		\phi^{\mathrm{eff}}(O_i) = m_0(X_i)-\theta_0+\frac{\delta_i}{f}\{Y_i-m_0(X_i)\}.   
	\end{align}
	This result coincides with the semiparametric efficiency lower bound for regular, asymptotically linear (RAL) estimators of $\theta_0 = \mathbb{E}[Y]$ under missing-at-random sampling \citep{robins1994estimation}: for any RAL estimator $\widehat\theta$, it holds 
	$$\mathrm{Var}(\widehat\theta)\ \ge\ \frac{1}{N}\left\{\mathrm{Var}\!\big(\mathbb{E}[Y\mid X]\big)
	\;+\; \mathbb{E}\!\left[\frac{1}{\pi(X)}\,\mathrm{Var}(Y\mid X)\right]\right\},$$
	and in our setting $\pi(X) =  f = n/N$, which reduces exactly to \eqref{eq:semiparametric-bound}. Hence the oracle PPI estimator $\widehat\theta_{\mathrm{PPI},m_0}$ attains the semiparametric efficiency lower bound. 
	
	We briefly discuss the implications of this derivation. Throughout, the predictor \(m\) is treated as a nuisance parameter, while the superpopulation mean \(\theta_0 = \E[Y]\) is the target parameter. In practice, however, \(m\) must be estimated. To approach the semiparametric efficiency bound, the fitted predictor should be close to the oracle predictor, so that the inflation term in \eqref{eq:ppi-eff-decomp} is small; in this regime, the resulting PPI procedures are nearly semiparametrically efficient. Conversely, substantial misspecification—i.e., when the learned predictor is far from the oracle predictor—leads to an increase in variance of order \((f^{-1}-1)\,\E[(m_0(X)-m(X))^2]\), an effect that becomes more pronounced as the labeling fraction \(f\) decreases.

	\subsection{Dual Newton solver}\label{subsec:Efficient dual Newton solver for optimal weights}
	We describe the numerical optimization used to obtain the calibrated
	weights. The central idea is to convert the constrained Bregman projection into an
	unconstrained root–finding problem in the dual variables and to solve it by
	Newton–Raphson with backtracking.
	
	First, we state the primal problem. Define the labeled design matrix and the population totals as
	\begin{align}
		\label{eq:labeled_design_matrix_and_population_totals}
		Z 
		:= 
		\begin{bmatrix} z_j^\top \end{bmatrix}_{j \in S}
		=
		\begin{bmatrix} h(X_j)^\top \end{bmatrix}_{j \in S}
		\in \mathbb{R}^{n \times p},
		\qquad
		\mu 
		:= 
		\sum_{i=1}^N z_i
		=
		\sum_{i=1}^N h(X_i)
		\in \mathbb{R}^p,
	\end{align}
	where \(h : \mathcal{X} \subset \mathbb{R}^d \to \mathbb{R}^p\) denotes a user-chosen \(p\)-dimensional calibration basis, and \(z_j := h(X_j) = (h_1(X_j), \ldots, h_p(X_j))^\top \in \mathbb{R}^p\). The calibration constraint~\eqref{eq:Constraint},
	\(\sum_{j \in S} \omega_j\, h(X_j) = \sum_{i=1}^{N} h(X_i)\),
	can then be expressed in vector form as \(Z^\top \omega = \mu\).

	Given baseline weights $d=(d_j)_{j\in S}\in \mathbb{R}^n$ (i.e., $d_j\equiv N/n = 1/f$ in our setting) and a strictly convex generator $G$ with derivative $g=G'$, the calibrated weights are the Bregman
	projection of $d$ onto the affine subspace defined by the calibration constraint
	\eqref{eq:Constraint}:
	\begin{align}
		\label{eq:Bregman-proj-main}
		\widehat\omega
		\;=\;
		\arg\min_{\omega\in\mathcal{M}} D_G(\omega\Vert d)
		\quad\text{subject to}\quad
		\mathcal{M}:=\{\omega\in (0,\infty)^n: Z^\top\omega=\mu\}.    
	\end{align}
	
	Next, we derive the dual formulation of \eqref{eq:Bregman-proj-main}.
	Introduce Lagrange multipliers $\lambda\in\mathbb{R}^p$ for the calibration constraints
	and consider the Lagrangian
	\begin{align}
		\label{eq:Lagrangian}
		\mathcal{L}(\omega,\lambda)
		= -\sum_{j\in S}\!\Big\{G(\omega_j)-G(d_j)-g(d_j)(\omega_j-d_j)\Big\}
		+ \lambda^\top\!\big(Z^\top\omega-\mu\big).
	\end{align}
	
	Because $G$ is strictly convex, $D_G(\,\cdot\,\Vert d)$ is strictly convex in $\omega$,
	and the constraints $Z^\top\omega=\mu$ are affine; hence \eqref{eq:Bregman-proj-main}
	is a strictly convex program. Whenever the feasible set $\{\omega:Z^\top\omega=\mu\}$
	is nonempty, the Karush–Kuhn–Tucker (KKT) conditions are necessary and sufficient,
	and the optimizer is unique \citep{kuhn1951nonlinear}. In particular, at the optimum
	the stationarity condition holds.
	
	Solving the KKT stationarity condition $\partial \mathcal{L}/\partial \omega_j=0$ for each $j\in S$
	yields $g(\omega_j)=g(d_j)+z_j^\top\lambda$, and hence
	\begin{align}
		\label{eq:optimal_weight_with_lambda}
		\omega_j(\lambda)=g^{-1}\!\big(g(d_j)+z_j^\top\lambda\big).
	\end{align}
	
	Thus, the optimal weights can be expressed as functions of the Lagrange multiplier $\lambda$.
	
	Substituting \eqref{eq:optimal_weight_with_lambda} into
	\eqref{eq:Lagrangian} gives
	\begin{align}
		\nonumber
		\ell(\lambda)
		&= -\sum_{j\in S}\!\Big\{
		G\!\big(g^{-1}(\nu_j)\big)-G(d_j)-g(d_j)\big(g^{-1}(\nu_j)-d_j\big)
		\Big\}
		+ \lambda^\top\!\Big(\sum_{j\in S} \omega_j(\lambda) z_j - \mu\Big) \\
		\nonumber
		&= -\sum_{j\in S} G\!\big(g^{-1}(\nu_j)\big)
		+ \sum_{j\in S} g(d_j)\, g^{-1}(\nu_j)
		+ \sum_{j\in S} (z_j^\top\lambda)\, g^{-1}(\nu_j)
		- \lambda^\top \mu
		+ \text{const} \\
		\nonumber
		&= -\sum_{j\in S} G\!\big(g^{-1}(\nu_j)\big)
		+ \sum_{j\in S} \big(g(d_j)+z_j^\top\lambda\big)\, g^{-1}(\nu_j)
		- \lambda^\top \mu + \text{const}\\
		\label{eq:lagrangian_lambda}
		&= \sum_{j\in S}\!\Big[
		\nu_j\, g^{-1}(\nu_j) - G\!\big(g^{-1}(\nu_j)\big)
		\Big]
		- \lambda^\top \mu + \text{const},
	\end{align}
	where \(\nu_j := g(d_j)+z_j^\top\lambda\) and \(\omega_j(\lambda)=g^{-1}(\nu_j)\), and
	\(\text{const}\) denotes a term that is constant with respect to \(\lambda\).
	
	Let $F$ be the convex conjugate of $G$,
	\[
	F(\nu)=\sup_{\omega}\{\nu\omega-G(\omega)\}
	=\nu\,g^{-1}(\nu)-G\!\big(g^{-1}(\nu)\big).
	\]
	
	By elementary calculus, $F'(\nu)=g^{-1}(\nu)$. Then, from \eqref{eq:lagrangian_lambda},
	up to an additive constant (independent of $\lambda$), the profiled dual objective is
	\begin{equation}\label{eq:dual-objective}
		\ell(\lambda) \;=\; \sum_{j\in S} F\!\big(g(d_j)+z_j^\top\lambda\big)\;-\;\lambda^\top\mu .
	\end{equation}
	
	Since $F$ is convex, $\ell(\lambda)$ is convex, and
	\begin{align}
		\label{eq:derivative_of_dual_objective}
		\nabla \ell(\lambda)
		=\sum_{j\in S} F'\!\big(\nu_j\big)\,z_j-\mu
		=\sum_{j\in S} g^{-1}\!\big(\nu_j\big)\,z_j-\mu
		=\sum_{j\in S} \omega_j(\lambda)\,z_j-\mu.
	\end{align}
	
	The first–order condition \(\nabla \ell(\widehat\lambda)=0\) recovers the calibration constraint
	\(Z^\top \omega(\widehat\lambda)=\mu\). Because the primal objective is strictly convex and the
	constraints are affine, the dual is strictly convex and the minimizer is unique:
	\(\widehat\lambda=\arg\min_\lambda \ell(\lambda)\). Plugging \(\widehat\lambda\) into
	\eqref{eq:optimal_weight_with_lambda} yields the calibrated weights
	\(\widehat\omega_j=\omega_j(\widehat\lambda)=g^{-1}\!\big(g(d_j)+z_j^\top\widehat\lambda\big)\) for each \(j\in S\).
	
	To minimize the convex dual objective \(\ell\) in \eqref{eq:dual-objective}, we use (damped)
	Newton–Raphson \citep{BoydVandenberghe2004}. For illustration, we assume that \(g\) and \(g^{-1}\)
	act on vectors componentwise, and that the index set of labeled units is \(S=\{1,\ldots,n\}\).
	Let \(u:=g(d)\in\mathbb{R}^n\) with \(u_j=g(d_j)\) and \(\nu:=u+Z\lambda\in\mathbb{R}^n\) with
	\(\nu_j=g(d_j)+z_j^\top\lambda\). The \(n\)-dimensional weight vector is
	\(\omega(\lambda):=\big(g^{-1}(\nu_1),\ldots,g^{-1}(\nu_n)\big)^\top\), and the Hessian is
	\[
	\nabla^2 \ell(\lambda)
	= Z^\top \mathrm{diag}\!\big(1/g'(\omega_1(\lambda)),\ldots,1/g'(\omega_n(\lambda))\big)\,Z
	\;\in\; \mathbb{R}^{p\times p}.
	\]
	
	Since the generator \(G\) is strictly convex, \(g'=G''>0\); with \(Z\) of full column rank,
	the Hessian is positive definite. A Newton step solves
	\(\nabla^2 \ell(\lambda^{(t)})\,\Delta^{(t)}=-\,\nabla \ell(\lambda^{(t)})\) (e.g., via Cholesky)
	and updates
	\begin{align}
		\label{eq:newton_step}
		\lambda^{(t+1)}
		=\lambda^{(t)}+\eta_t\,\Delta^{(t)}
		=\lambda^{(t)}-\eta_t\,[\nabla^2 \ell(\lambda^{(t)})]^{-1}\nabla \ell(\lambda^{(t)}),
		\quad \eta_t\in(0,1].
	\end{align}
	
	Pure Newton uses \(\eta_t=1\); damping can be used to enforce monotone decrease of \(\ell\).
	Convergence to the optimal dual variable \(\widehat \lambda\) can be monitored with a stopping rule, $\|\nabla \ell(\lambda^{(t)})\|_2
	= \|Z^\top \omega(\lambda^{(t)}) - \mu\|_2 \le \varepsilon,$ for a small tolerance such as \(\varepsilon=10^{-10}\). The corresponding optimal weights are backtracked by \(\widehat \omega=\omega(\widehat \lambda)=g^{-1}\!\big(g(d)+Z \widehat\lambda\big)\in\mathbb{R}^n\).
	Under these conditions, Newton’s method converges rapidly when $n > p$.
	
	Figure~\ref{fig:Dual_Newton_Solver} illustrates the dual Newton solver’s iterations for a single realization of the synthetic data with \(f=0.2\) from Section~\ref{sec:Simulation experiments} of the main document. Panel~(a) displays the calibration residual $\lVert Z^{\top}\omega - \mu \rVert_2$ with tolerance $\varepsilon = 10^{-10}$, and panel~(b) displays the Bregman objective $D_G(\omega \,\Vert\, d)$ across iterations. For the quadratic divergence, a single Newton step attains the exact solution; for the other divergences, convergence typically occurs within 10 iterations. Panel~(b) further shows that the optimized objective $D_G(\widehat{\omega} \,\Vert\, d)$ remains bounded, supporting Assumption~A2 that $D_G(\widehat{\omega}\,\Vert\, d) + D_G(d \,\Vert\, \widehat{\omega}) = O_p(1)$, which is used in Theorem~\ref{thm:cal-mean}.
	
	\begin{figure}[h]
		\centering
		\includegraphics[width=\linewidth]{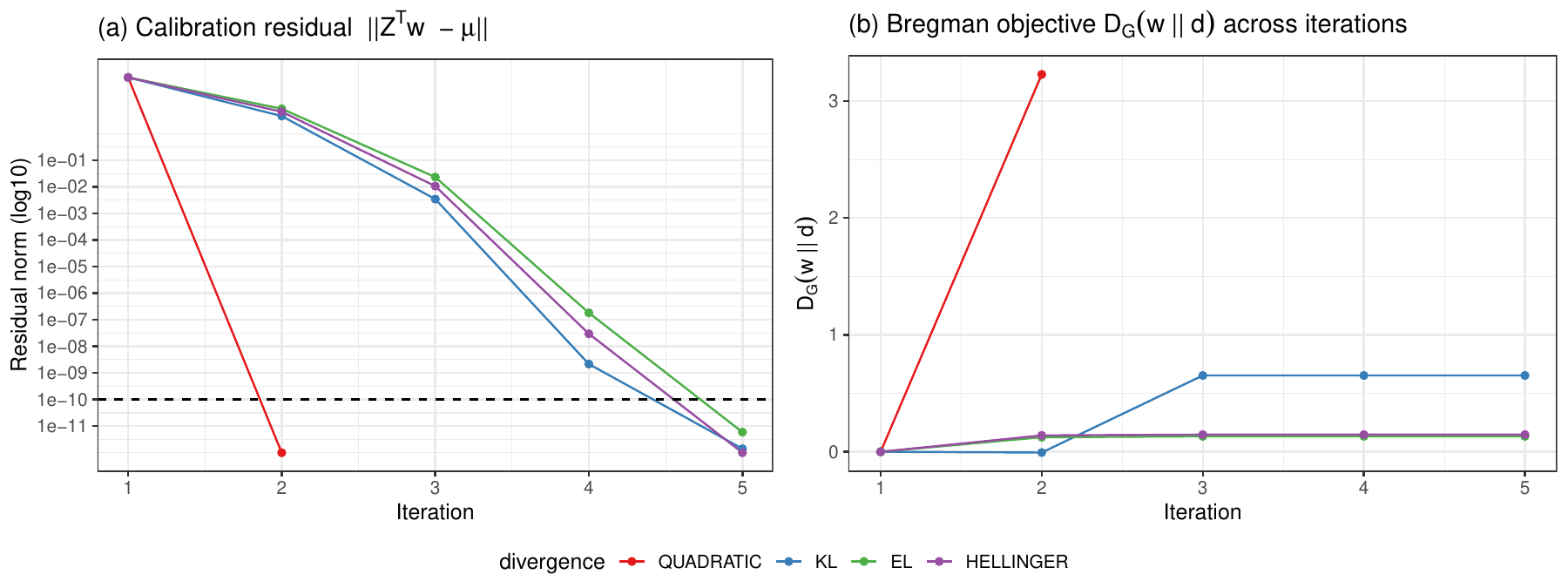}
		\caption{Iteration procedure of dual Newton solver on a single realization of the synthetic data with \(f=0.2\) in Section~\ref{sec:Simulation experiments} of the main document. Panel~(a) shows the calibration residual $\lVert Z^{\top}\omega-\mu\rVert_2$ (tolerance $\varepsilon=10^{-10}$); panel~(b) shows the Bregman objective $D_G(\omega\Vert d)$ by iteration. The quadratic case converges in one step; others converge within $\le 10$ steps, with objectives remaining bounded.}
		\label{fig:Dual_Newton_Solver}
	\end{figure}
	
	To summarize, the calibration–weighting optimization can be approached from a dual perspective, which often yields a more computationally efficient solution. The primal problem seeks the optimal weights by minimizing a Bregman divergence—an \(n\)-dimensional optimization. The dual formulation converts this into an unconstrained optimization over the Lagrange multiplier vector \(\lambda\), which is \(p\)-dimensional (with \(p\) equal to the dimension of the
	codomain of the basis function \(h\)). When \(n \gg p\), this reduction in dimensionality provides a substantial computational advantage.

	\section{Proof of Theorem \ref{thm:dual-greg} (Dual GREG representation of the MEC estimator)}\label{sec:Proof of Theorem: GREG_expression_of_calibration_estimator}
	
	Recall the definition of the Bregman divergence
	\[
	D_G(\omega\Vert d)=\sum_{j\in S}\Big\{G(\omega_j)-G(d_j)-g(d_j)(\omega_j-d_j)\Big\},
	\]
	and consider the Lagrangian
	\[
	\mathcal L(\omega,\lambda)
	= -\sum_{j\in S}\!\Big\{G(\omega_j)-G(d_j)-g(d_j)(\omega_j-d_j)\Big\}
	\;+\;\lambda^\top\!\Big(\sum_{j\in S}\omega_j z_j-\mu\Big),
	\]
	where $\mu$ is defined in 
	\eqref{eq:labeled_design_matrix_and_population_totals}, $\mu = \sum_{i=1}^N z_i$, with \(z_j := h(X_j) = (h_1(X_j), \ldots, h_p(X_j))^\top \in \mathbb{R}^p\).
	
	KKT stationarity gives, for each $j\in S$,
	\[
	\partial_{\omega_j}\mathcal L= -g(\widehat\omega_j)+g(d_j)+z_j^\top\lambda=0
	\quad\Longrightarrow\quad
	g(\widehat\omega_j)=g(d_j)+z_j^\top\lambda.
	\]
	
	By strict convexity and $C^2$–smoothness of $G$ on $(0,\infty)$, $g$ is strictly increasing and $g^{-1}$ is $C^2$.
	A second–order Taylor expansion of $g^{-1}$ around $g(d_j)$ yields, for some remainder $r_j(\lambda)$,
	\begin{equation}\label{eq:taylor-w}
		\widehat\omega_j
		= g^{-1}\!\big(g(d_j)+z_j^\top \widehat\lambda\big)
		= d_j+\frac{1}{g'(d_j)}\,z_j^\top \widehat\lambda \;+\; r_j(\widehat\lambda),
		\qquad
		|r_j(\widehat\lambda)|\le K\,\|\widehat\lambda\|^2,
	\end{equation}
	where $K<\infty$ depends on local bounds on $(g^{-1})''$ and $\{z_j\}$. Write $q_j:=1/g'(d_j)$.
	
	Summing \eqref{eq:taylor-w} after multiplying by $z_j$ gives
	\[
	\sum_{j\in S}\widehat\omega_j z_j
	= \sum_{j\in S} d_j z_j
	+ \Big(\sum_{j\in S} q_j z_j z_j^\top\Big)\widehat\lambda
	+ \sum_{j\in S} r_j(\widehat\lambda)\,z_j
	= \mu,
	\]
	so
	\begin{equation}\label{eq:lin-constraint}
		\Big(\sum_{j\in S} q_j z_j z_j^\top\Big)\widehat\lambda
		= \mu-\sum_{j\in S} d_j z_j - \sum_{j\in S} r_j(\widehat\lambda)\,z_j.
	\end{equation}
	
	Let $\widehat\beta$ be the WLS solution
	\[
	\Big(\sum_{j\in S} q_j z_j z_j^\top\Big)\widehat\beta
	=\sum_{j\in S} q_j z_j Y_j,
	\qquad
	\widehat y_i=\widehat\beta^\top z_i,
	\]
	equivalently
	\begin{equation}\label{eq:wls-orth}
		\sum_{j\in S} q_j z_j\big(Y_j-\widehat\beta^\top z_j\big)=0.
	\end{equation}
	
	Now, we show the main decomposition. Start from
	\begin{align*}
		\widehat\theta_{\mathrm{MEC}}-\widehat\theta_{\mathrm{GREG}}
		&=\frac{1}{N}\Bigg\{\sum_{j\in S}\widehat\omega_j Y_j
		-\sum_{i\in U}\widehat y_i
		-\sum_{j\in S} d_j\,(Y_j-\widehat y_j)\Bigg\} \\
		&=\frac{1}{N}\Bigg\{\sum_{j\in S}\widehat\omega_j Y_j
		-\sum_{j\in S}\widehat\omega_j\,\widehat\beta^\top z_j
		-\sum_{j\in S} d_j\,(Y_j-\widehat\beta^\top z_j)\Bigg\} \\
		&= \frac{1}{N}\sum_{j\in S}(\widehat\omega_j-d_j)\Big(Y_j-\widehat\beta^\top z_j\Big),
	\end{align*}
	where $U$ denotes the index set for unlabeled data. The second equality holds since $$\sum_{i\in U}\widehat y_i=\widehat\beta^\top\mu=\widehat\beta^\top Z^\top\omega=\sum_{j\in S}\widehat\omega_j\widehat\beta^\top z_j.$$

	Insert \eqref{eq:taylor-w} gives
	\begin{align*}
		\widehat\theta_{\mathrm{MEC}}-\widehat\theta_{\mathrm{GREG}}
		&= \frac{1}{N}\sum_{j\in S}\Big( q_j z_j^\top\widehat\lambda+r_j(\widehat\lambda)\Big)\Big(Y_j-\widehat\beta^\top z_j\Big)\\
		&=
		\underbrace{\frac{1}{N}\sum_{j\in S} q_j z_j^\top\widehat\lambda\Big(Y_j-\widehat\beta^\top z_j\Big)}_{T_1}
		+
		\underbrace{\frac{1}{N}\sum_{j\in S}r_j(\widehat\lambda)\Big(Y_j-\widehat\beta^\top z_j\Big)}_{T_2}.    
	\end{align*}
	
	By \eqref{eq:wls-orth}, the term ($T_1$) vanishes exactly:
	\[
	T_1=\frac{1}{N}\,\widehat\lambda^\top\!\sum_{j\in S} q_j z_j\big(Y_j-\widehat\beta^\top z_j\big) = 0.
	\]
	
	Hence
	\[
	\widehat\theta_{\mathrm{MEC}}-\widehat\theta_{\mathrm{GREG}}
	= T_2
	= \frac{1}{N}\sum_{j\in S} r_j(\widehat\lambda)\Big(Y_j-\widehat\beta^\top z_j\Big).
	\]
	
	Under standard moment conditions ensuring
	$\frac{1}{n}\sum_{j\in S}\big|Y_j-\widehat\beta^\top z_j\big|=\mathcal{O}_{\mathbb{P}}(1)$
	and using $|r_j(\widehat\lambda)|\le K\|\widehat\lambda\|^2$ uniformly in $j$, we obtain
	\[
	\Big|\widehat\theta_{\mathrm{MEC}}-\widehat\theta_{\mathrm{GREG}}\Big|
	\;\le\; \frac{K}{N}\sum_{j\in S}\|\widehat\lambda\|^2 \big|Y_j-\widehat\beta^\top z_j\big|
	\;=\; \mathcal{O}_{\mathbb{P}}\!\big(\|\widehat\lambda\|^2\big),
	\]
	i.e.
	\[
	\widehat\theta_{\mathrm{MEC}}=\widehat\theta_{\mathrm{GREG}}+R_N,
	\qquad
	R_N=\mathcal{O}_{\mathbb{P}}\big(\|\widehat\lambda\|^2\big).
	\]
	
	Furthermore, if $\|\widehat\lambda\|=\mathcal{O}_{\mathbb{P}}(n^{-1/2})$ and the above moment bounds hold, then $R_N=o_{\mathbb{P}}(N^{-1/2})$, so the representation is asymptotically exact.
	
	Particularly, for the quadratic generator, $G(w)=\tfrac12 w^2$, we have $g(w)=w$ and $g'(w)\equiv 1$, so from the KKT condition
	\[
	\widehat\omega_j = d_j + z_j^\top\widehat\lambda \quad(j\in S),
	\]
	i.e., $r_j(\lambda)\equiv 0$ and $q_j\equiv 1$. Repeating the above steps without Taylor error gives
	\[
	\widehat\theta_{\mathrm{MEC}}=\widehat\theta_{\mathrm{GREG}}.
	\]
	
	This completes the proof.

	\section{Proof of Theorem \ref{thm:cfppi_mean_unified} (Asymptotic theory of cross-fitted PPI estimator)}\label{sec:Proof of Theorem: Asymptotic theory of cross-fitted PPI estimator}
	
	\begin{lemma}[Consistency of the CF-PPI estimator for the mean]
		\label{thm:cfppi_mean_consistency}
		Let $(X,Y)$ have joint law $P$ with $X\sim P_X$ and $Y=m_0(X)+\varepsilon$, where $\mathbb E[\varepsilon\mid X]=0$ and $\mathbb E[Y^2]<\infty$. The target parameter is the population mean, $\theta_0:=\mathbb E[Y]$. Let $\{X_i\}_{i=1}^N \stackrel{\mathrm{i.i.d.}}{\sim} P_X$ be an unlabeled sample and $\{(X_j,Y_j)\}_{j\in S}$, $|S|=n$, be an independent labeled sample from $P$, with $N,n\to\infty$ and $n/N\to f\in(0,1)$. For a chosen integer \(K\) (typically
		\(K=5\) or $10$), partition the labeled set \(S\) into \(K\) folds
		\(S^{(1)},\ldots,S^{(K)}\). For each \(k\in\{1,\ldots,K\}\), fit
		\(\widehat m^{(k)}\) using the labels in \(S\setminus S^{(k)}\). Let
		\(\kappa(i)\in\{1,\ldots,K\}\) denote the fold index of unit \(i\in S\). Define the piecewise out-of-fold predictor
		\begin{align*}
			\label{eq:out_of_fold_predictor}
			\widehat m^{(-)}(X_i) \;:=\;
			\begin{cases}
				\widehat m^{(\kappa(i))}(X_i), & i \in S,\\
				\widehat m^\star(X_i),      & i \notin S,
			\end{cases}    
		\end{align*}
		where \(\widehat m^\star\) is an aggregate model used for the plug-in term (e.g., the full-sample fit or the average \(K^{-1}\sum_{k=1}^K \widehat m^{(k)}\)). Consider the CF–PPI estimator
		\[
		\widehat\theta^{\mathrm{cf}}_{\mathrm{PPI}}
		=\frac{1}{N}\sum_{i=1}^N \widehat m^{(-)}(X_i)
		+\frac{1}{n}\sum_{j\in S}\{Y_j-\widehat m^{(-)}(X_j)\}.
		\]
		If the out–of–fold error is stochastically bounded in $L_2(P_X)$,
		\begin{align}
			\|\widehat m^{(-)}-m_0\|_{L_2(P_X)}=\Big(\mathbb E\big[(\widehat m^{(-)}(X)-m_0(X))^2\big]\Big)^{1/2}=O_p(1),    
			\label{eq:nuisance_condition_for_consistency_CF_PPI}
		\end{align}
		then $\widehat\theta^{\mathrm{cf}}_{\mathrm{PPI}}\xrightarrow{p}\theta_0$.
	\end{lemma}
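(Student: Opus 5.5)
We decompose the estimator around the oracle PPI estimator and control the remainder fold by fold, conditioning on the training folds. Writing \(\widehat\Delta:=\widehat m^{(-)}-m_0\), we have
\[
\widehat\theta^{\mathrm{cf}}_{\mathrm{PPI}}-\theta_0
=\Big\{P_N m_0+P_n(Y-m_0)-\theta_0\Big\}+\Big\{P_N\widehat\Delta - P_n\widehat\Delta\Big\},
\]
where \(P_N\widehat\Delta\) uses \(\widehat m^\star\) off \(S\) and \(P_n\widehat\Delta\) uses the fold-specific \(\widehat m^{(\kappa(j))}\) on \(S\). The first bracket is the oracle PPI error. Since \(\mathbb E[Y^2]<\infty\) gives \(m_0\in L_2(P_X)\) and \(Y-m_0\in L_2(P)\), the weak law of large numbers gives \(P_Nm_0\to\mathbb E m_0(X)\) and \(P_n(Y-m_0)\to 0\), so this bracket is \(o_p(1)\). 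It remains to show that the second bracket is \(o_p(1)\).

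For the labeled part, we split \(P_n\widehat\Delta=\sum_k (n_k/n)\,P_{n_k}^{(k)}\widehat\Delta^{(k)}\), with \(\widehat\Delta^{(k)}=\widehat m^{(k)}-m_0\). Conditional on the training data \(S\setminus S^{(k)}\), the function \(\widehat\Delta^{(k)}\) is fixed, and the \(X_j\), \(j\in S^{(k)}\), are i.i.d. from \(P_X\). Hence
\[
\mathbb E\big[(P^{(k)}_{n_k}\widehat\Delta^{(k)}-P\widehat\Delta^{(k)})^2\mid S\setminus S^{(k)}\big]\le \|\widehat\Delta^{(k)}\|^2_{L_2(P_X)}/n_k .
\]
Because this norm is \(O_p(1)\), a conditional Chebyshev argument (the standard lemma that a conditional \(o_p(1)\) implies an unconditional \(o_p(1)\)) yields \(P_{n_k}^{(k)}\widehat\Delta^{(k)}=P\widehat\Delta^{(k)}+o_p(1)\). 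The unlabeled part is handled the same way. The covariates \(\{X_i\}_{i=1}^N\) come from a sample independent of the labeled data used to train \(\widehat m^\star\), so conditioning on the labeled sample gives \(P_N\widehat\Delta^\star=P\widehat\Delta^\star+O_p(\|\widehat\Delta^\star\|_{L_2}/\sqrt N)\). If a few plug-in units coincide with units of \(S\), they contribute only \(O(n/N)\) times a labeled average, which is covered by the labeled argument.

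The difference therefore reduces to \(P\widehat\Delta^\star-\sum_k (n_k/n)P\widehat\Delta^{(k)}+o_p(1)\). This step is the main obstacle, because \(O_p(1)\) boundedness of \(\|\widehat\Delta\|\) alone does not make these population means cancel. Our first choice is \(\widehat m^\star=K^{-1}\sum_k\widehat m^{(k)}\) with balanced folds, \(n_k/n\to1/K\). Then \(P\widehat\Delta^\star=K^{-1}\sum_k P\widehat\Delta^{(k)}\) exactly, and the leading terms cancel up to \((n_k/n-1/K)\,O_p(1)=o_p(1)\), using \(|P\widehat\Delta^{(k)}|\le\|\widehat\Delta^{(k)}\|_{L_2}=O_p(1)\). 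For the full-sample fit \(\widehat m^\star\), we would additionally need \(P(\widehat m^\star-\widehat m^{(k)})\to_p0\), which is a stability property of the learner. We would state this as implicit in the definition of the aggregate, or else restrict to the averaged aggregate. With the cancellation in place, both brackets are \(o_p(1)\), and consistency follows.
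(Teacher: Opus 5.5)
Your proof is correct under the restriction you impose (averaged aggregate $\widehat m^\star=K^{-1}\sum_k\widehat m^{(k)}$ with balanced folds). It has the same skeleton as the paper's: oracle fluctuations $(P_N-P)m_0+(P_n-P)(Y-m_0)$ plus the nuisance remainder $(P_N-P_n)\widehat\Delta$, controlled by conditional Chebyshev bounds that cross-fitting makes available. You differ in how you treat the remainder, and there your version is the more careful one.

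The paper writes $(P_N-P_n)\Delta=(P_N-P)\Delta-(P_n-P)\Delta$ with a single centering $P\Delta$, saying only that $P\Delta$ is ``understood conditional on the model that scores each $j$.'' This silently drops exactly the term you isolate, $P\widehat\Delta^\star-\sum_k (n_k/n)P\widehat\Delta^{(k)}$. The paper also conditions on the $\sigma$-field $\mathcal T$ generated by all fold fits and the aggregate. Given $\mathcal T$, the labeled covariates are not independent of the fits, because each $X_j$ is in the training set of the other $K-1$ fold fits and of a full-sample $\widehat m^\star$. Conditioning fold by fold on $S\setminus S^{(k)}$, as you do, is what actually delivers the $O_p(\|\widehat\Delta^{(k)}\|_{L_2(P_X)}/\sqrt{n_k})$ bound.

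Your concern about the cancellation is a real limitation of the statement, not of your method. With the full-sample fit as $\widehat m^\star$ and only $O_p(1)$ boundedness, the conclusion can fail. Take the admissible (if silly) learner that returns the constant $Y$-value of the first training unit. If that unit $a$ lies in fold $k_a$, then:
\begin{itemize}
\item $\widehat m^\star\equiv Y_a$;
\item $\widehat m^{(k)}\equiv Y_a$ for $k\neq k_a$;
\item $\widehat m^{(k_a)}\equiv Y_b$ for some $b\neq a$.
\end{itemize}
Every error is $O_p(1)$ in $L_2(P_X)$. Yet $\widehat\theta^{\mathrm{cf}}_{\mathrm{PPI}}=P_nY+(n_{k_a}/n)(Y_a-Y_b)$, which does not converge to $\theta_0$ unless $Y$ is degenerate. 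So restricting to the averaged aggregate with $n_k/n\to 1/K$, or adding a stability condition such as $P(\widehat m^\star-\widehat m^{(k)})\xrightarrow{p}0$, is a necessary correction, and your proof is complete under it.

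One small point: if $S$ is read as a subset of $\{1,\dots,N\}$, the overlap is a fraction $n/N\to f$ of the plug-in units, not ``a few.'' Your labeled argument still covers it, and the cancellation term is merely multiplied by $(N-n)/N$.
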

	
	\begin{proof}

		Using the notations of empirical process theory, the difference between the CF-PPI estimator and the target mean parameter can be written as follows:
		\begin{align*}
			\widehat\theta^{\mathrm{cf}}_{\mathrm{PPI}}-\theta_0
			&= \Big\{P_N \widehat m^{(-)}\Big\} + \Big\{P_n\big(Y-\widehat m^{(-)}\big)\Big\} - P\,m_0
			\\
			&\quad\text{(since $\theta_0=\mathbb E[Y]=P\,Y=P\,m_0$ as $Y=m_0+\varepsilon$ and $\mathbb E[\varepsilon\mid X]=0$)}
			\\
			&= \underbrace{\big(P_N \widehat m^{(-)} - P_N m_0\big)}_{A_1}
			\;+\; \underbrace{\big(P_n(Y-\widehat m^{(-)}) - P_n(Y-m_0)\big)}_{A_2}
			\\
			&\;+\; \underbrace{\Big(P_N m_0 + P_n(Y-m_0) - P m_0\Big)}_{A_3}
			\\
			&\quad\text{(add and subtract $P_N m_0$ and $P_n(Y-m_0)$)}
			\\
			&= \underbrace{\big(P_N-P_n\big)\big(\widehat m^{(-)}-m_0\big)}_{A_1+A_2}
			\;+\; \underbrace{\big(P_N-P\big)m_0}_{\text{from the $P_N m_0 - P m_0$ part}}
			\;+\; \underbrace{P_n(Y-m_0)}_{\text{remaining part of }A_3}
			\\
			&= \big(P_N-P\big)m_0 \;+\; \underbrace{\big(P_n-P\big)(Y-m_0)}_{\text{since }P(Y-m_0)=0}
			\;+\; \big(P_N-P_n\big)\big(\widehat m^{(-)}-m_0\big)
			\\
			&=\underbrace{(P_N-P)\,m_0}_{\substack{\text{unlabeled fluctuation}\\(A)}}
			\;+\;\underbrace{(P_n-P)\,(Y-m_0)}_{\substack{\text{labeled residual fluctuation}\\(B)}}
			\;+\;\underbrace{(P_N-P_n)\,(\widehat m^{(-)}-m_0)}_{\substack{\text{nuisance remainder}\\(C)}}. \tag{$\star$}
		\end{align*}
		
		We briefly sketch the proof. The CF–PPI decomposition yields three pieces: the first two, $(A)= (P_N-P)\,m_0$ and $(B)=(P_n-P)\{Y-m_0\}$, are empirical–process terms with \emph{fixed} indices because $m_0(x)=\mathbb{E}[Y\mid X=x]$ is nonrandom (oracle). Hence they fluctuate at the $O_p(N^{-1/2})$ and $O_p(n^{-1/2})$ scales (with $n/N\to f\in(0,1)$, so $O_p(n^{-1/2})=O_p(N^{-1/2})$), constitute the \emph{first–order} part of the expansion, and require no Donsker/entropy control. The third piece, $(C)=(P_N-P_n)(\widehat m-m_0)$, would \emph{without} cross–fitting typically require a Donsker condition for the nuisance class (see Section~4.2 of \citep{kennedy2016semiparametric}); however, under cross–fitting the nuisance $\widehat m=\widehat m^{(-)}$ is trained on folds disjoint from the evaluation sample, rendering it conditionally fixed and thus $o_p(N^{-1/2})$, i.e., only a \emph{second–order} remainder. Intuitively, cross–fitting preserves design–unbiasedness of the score and pushes the learning error from $\widehat m$ out of the first–order limit.

		Now, we are ready to analyze each of the three terms in \((\star)\) separately; this is central to the proof of the theorem and will also be used later in establishing asymptotic normality.
		
		\paragraph{Unlabeled fluctuation.}
		Note that the first term $(A)=(P_N-P)\,m_0$ is the empirical average of the fixed function $m_0$, and hence, by the CLT, it behaves asymptotically as a centered normal random variable with variance $\Var\{m_0(X)\}/N$, up to $o_p(N^{-1/2})$ error. More precisely, by the i.i.d. \ CLT, the unlabeled part follows:
		\begin{align}
			\label{eq:asymptotic_distribution_first_part}
			\sqrt{N}\,\big\{(P_N-P)\,m_0\big\}
			\xrightarrow{d}
			\mathcal{N}\!\big(0,\Var\{m_0(X)\}\big),
			\qquad
			(P_N-P)\,m_0=\frac{1}{\sqrt{N}}\,Z_1+o_p(N^{-1/2}),
		\end{align}
		with \(Z_1\sim\mathcal{N}(0,\Var\{m_0(X)\})\). Because $\sqrt{N}\,\big\{(P_N-P)\,m_0\big\} = O_p(1)$, it holds $(A)=(P_N-P)m_0 = o_p(1)$. (One can use the weak law of large numbers directly for the same conclusion of $(P_N-P)m_0 = o_p(1)$.)
		
		\paragraph{Labeled residual fluctuation.} Next, the second term $(B)=(P_n-P)\,(Y-m_0)$ is the empirical fluctuation of the residuals $Y-m_0(X)$. Since the residuals have mean zero under the true distribution, this term also satisfies a CLT with variance $\Var\{Y-m_0(X)\}/n$. Like the first term, by CLT, the labeled residual part follows (note \(P(Y-m_0)=0\)): 
		\begin{align}
			\label{eq:asymptotic_distribution_second_part}
			\sqrt{n}\,\big\{(P_n-P)\,(Y-m_0)\big\}
			&\xrightarrow{d}
			\mathcal{N}\!\big(0,\Var\{Y-m_0(X)\}\big),\\
			\nonumber
			(P_n-P)\,(Y-m_0)&=\frac{1}{\sqrt{n}}\,Z_2+o_p(n^{-1/2}),    
		\end{align}
		with \(Z_2\sim\mathcal{N}(0,\Var\{Y-m_0(X)\})\). Because $\sqrt{n}\,\big\{(P_n-P)\,(Y-m_0)\big\} = O_p(1)$, it holds $(B)=(P_n-P)\,(Y-m_0) = o_p(1)$.
		

		\paragraph{Nuisance remainder.} Finally, we now prove $(C)=(P_N-P_n)\big(\widehat m^{(-)}-m_0\big) = o_p(1)$. Let $\mathcal T :=\sigma\!\big(\{\widehat m^{(k)}\}_{k=1}^K,\widehat m^\star,\kappa\big)$ denote the $\sigma$–field generated by the trained objects used to score
		observations (the $K$ fold–specific fits $\{\widehat m^{(k)}\}_{k=1}^K$, the aggregator
		$\widehat m^\star$, and the fold map $\kappa$).  Define $\Delta(x):=\widehat m^{(-)}(x)-m_0(x)$. Conditional on $\mathcal T$, the function $\Delta$ is deterministic. Note that the remainder term in \((\star)\) can be further decomposed as
		\begin{align}
			\label{eq:remainder}
			(C)=(P_N-P_n)\Delta &=
			\underbrace{(P_N-P)\Delta}_{\substack{\text{Unlabeled average}\\R_N} }-\underbrace{(P_n-P)\Delta}_{\substack{\text{Labeled average}\\R_n} } 
		\end{align}
		
		This remainder term $(C)$ in \eqref{eq:remainder} captures the mismatch between the estimated regression function and the truth; for \emph{consistency}, it suffices that $\|\widehat m^{(-)}-m_0\|_{L_2(P_X)}=O_p(1)$. Later for \emph{asymptotic normality} we require $\|\widehat m^{(-)}-m_0\|_{L_2(P_X)}=o_p(1)$ so that $(C) = (P_N-P_n)\Delta=O_p(N^{-1/2}\|\Delta\|_{L_2(P_X)})=o_p(N^{-1/2})$ and the first two fluctuation terms in \((\star)\) dominate.
		
		The unlabeled average $R_N$ in the display above is benign: the unlabeled covariates are independent of the fitted functions, so it behaves like a standard empirical average of a fixed function (recall definition of  $m^{(-)}(X)$). The potentially troublesome piece is the \emph{labeled} average $R_n$. Without cross-fitting, the same labeled observations used to train $\widehat m$ would also be used to evaluate it, creating dependence and bias in this term. Cross-fitting restores \emph{honesty}—each evaluation point is scored by a model trained on other folds—so that the empirical fluctuations admit the variance bounds used above without invoking restrictive entropy/Donsker conditions \citep{kennedy2024semiparametric}.

		\paragraph{(i) Unlabeled average $R_N$.} Since the unlabeled sample $\{X_i\}_{i=1}^N$ is independent of $\mathcal T$
		(the fits are trained on labeled data only), $\{\,\Delta(X_i)\,\}_{i=1}^N$
		are i.i.d.\ given $\mathcal T$ with mean $P\Delta:=\mathbb{E}[\Delta(X)\mid\mathcal T]$.
		Writing $R_N$ in centered summation form,
		\[
		R_N = \frac{1}{N}\sum_{i=1}^N\Big(\Delta(X_i)-P\Delta\Big),
		\]
		we have, by independence,
		\[
		\mathbb{E}[R_N\mid\mathcal T]=0,
		\quad
		\Var(R_N\mid\mathcal T)=\frac{1}{N}\Var\!\big(\Delta(X)\mid\mathcal T\big)
		\le \frac{1}{N}\mathbb{E}[\Delta(X)^2\mid\mathcal T]
		= \frac{1}{N}\|\Delta\|_{L_2(P_X)}^{\,2}.
		\]
		Now, we use the \emph{Chebyshev’s inequality}: For any random variable $Z$ with $\mathbb{E}[Z]=0$,
		\[
		\mathbb{P}(|Z|\ge t)\le \frac{\Var(Z)}{t^2}\qquad\text{for all }t>0 .
		\]
		Apply this to $Z=\sqrt{N}\,R_N$ (so that $\mathbb{E}[Z\mid\mathcal T]=0$ and
		$\Var(Z\mid\mathcal T)=N\,\Var(R_N\mid\mathcal T)\le \|\Delta\|_{L_2(P_X)}^{\,2}$).
		Then, for any $s>0$,
		\begin{align}
			\label{eq:concentration_ineq_for_R_N}
			\mathbb{P}\!\left(\,|\sqrt{N}\,R_N|\ge s \,\middle|\,\mathcal T\right)
			\le \frac{\|\Delta\|_{L_2(P_X)}^{\,2}}{s^2}.    
		\end{align}
		Taking expectations in both sides of (\ref{eq:concentration_ineq_for_R_N}) and using $\|\Delta\|_{L_2(P_X)}=O_p(1)$ (i.e., assumption (\ref{eq:nuisance_condition_for_consistency_CF_PPI})) yields
		$\sqrt{N}\,R_N=O_p(1)$, thus \ $R_N=O_p(N^{-1/2})$. 
		
		Note that we write \(X_N = O_p(1)\) if, for every \(\varepsilon>0\), there exists a finite constant \(M>0\) such that 
		\(\mathbb{P}(|X_N|>M)<\varepsilon\) for all sufficiently large \(N\). The upper bound 
		\(\|\Delta\|_{L_2(P_X)}^{2}/s^{2}\) in \eqref{eq:concentration_ineq_for_R_N} can be made arbitrarily small by taking \(s\) sufficiently large—smaller than any given \(\varepsilon>0\)—and any such \(s\) can be regarded as \(M\); hence, \(\sqrt{N}\,R_N = O_p(1)\). Henceforth, we omit this reasoning, as it is straightforward from the context once a bounding inequality of the form \eqref{eq:concentration_ineq_for_R_N} is established.

		\paragraph{(ii) Labeled average $R_n$.} We provide a detailed illustration, as this term is the core part where cross-fitting is most essential for proving consistency (and also for proving asymptotic normality in Lemma \ref{thm:cfppi_mean_clt}.).
		
		Let the labeled indices be partitioned into $K$ folds $S=S_1\cup\cdots\cup S_K$,
		and for each $k$ let $\widehat m^{(k)}$ be the predictor trained on the labeled data
		$\{(X_j,Y_j):j\in S\setminus S_k\}$. Let $\kappa(j)=k$ denote the fold map if $j\in S_k$ and the
		cross–fitted predictor $\widehat m^{(-)}(x):=\widehat m^{(\kappa(j))}(x)$ when scoring index $j$.
		
		\emph{Decoupling by cross–fitting.} 
		The term $R_n$ can be decomposed as
		\begin{align}
			R_n &= (P_n-P)\Delta 
			= \frac{1}{n}\sum_{j\in S} \big\{\widehat m^{(-)}(X_j)-m_0(X_j)\big\} -P\Delta \nonumber \\
			&= \frac{1}{n}\sum_{j\in S} \widehat m^{(-)}(X_j)
			-\frac{1}{n}\sum_{j\in S} m_0(X_j) -P\Delta \nonumber \\
			&= 
			\underbrace{\frac{1}{n}\sum_{j\in S} \widehat m^{(\kappa(j))}(X_j)}_{\text{Cross-fitted model fit}}
			-\frac{1}{n}\sum_{j\in S} m_0(X_j) -P\Delta ,\label{eq:cross_fit_model_fit}
		\end{align}
		where the first term in \eqref{eq:cross_fit_model_fit} is the average prediction from the cross-fitted models
		and the second term is the corresponding population regression truth evaluated on the labeled sample. Here, term \(P\Delta\) are understood conditional on the model that scores each \(j\) (i.e., the cross-fitted training set that excludes \(j\)); we keep the shorthand \(P\Delta\) for readability.

		This decomposition makes clear why cross-fitting is essential: it ensures that, for each $j\in S_k$, the model used to evaluate $X_j$ -- namely $\widehat m^{(\kappa(j))}$ -- is trained without the pair $(X_j,Y_j)$. Hence
		\[
		\widehat m^{(\kappa(j))}(X_j)\ \perp\!\!\!\perp\ Y_j \mid X_j,
		\]
		i.e., there is no label leakage. This conditional independence yields an unbiased expansion and clean conditional variance control, which establish consistency and enable the CLT for asymptotic normality later. Thus, the cross-fitted term behaves like an average of i.i.d.\ random variables. 
		In particular, conditional on $\mathcal{T}$, the variables
		\[
		\Delta(X_j) = \widehat m^{(-)}(X_j)-m_0(X_j), \qquad j \in S,
		\]
		are i.i.d.\ with the same distribution as $\Delta(X)$, and they depend only on $X_j$ (since the fitted model never uses $Y_j$ in training). Refer to \citep{newey2018cross,kennedy2024semiparametric,chernozhukov2018double} for more details on similar ideas used in the development of cross-fitting and double machine learning methods.
		
		\emph{Conditional mean and variance.} Writing $W_j:=\Delta(X_j)-P\Delta$ (so that
		$\mathbb{E}[W_j\mid\mathcal T]=0$), we have
		\[
		R_n=
		(P_n-P)\Delta 
		=
		\frac{1}{n}\sum_{j\in S} (\Delta(X_j)-P\Delta)=
		\frac{1}{n}\sum_{j\in S} W_j,\quad
		\mathbb{E}[R_n\mid\mathcal T]
		=\frac{1}{n}\sum_{j\in S}\mathbb{E}[W_j\mid\mathcal T]=0,
		\]
		and, by conditional independence and identical distribution of the $W_j$'s,
		\[
		\Var(R_n\mid\mathcal T)
		=\frac{1}{n^2}\sum_{j\in S}\Var(W_j\mid\mathcal T)
		=\frac{1}{n}\Var\!\big(\Delta(X)\mid\mathcal T\big)
		\le \frac{1}{n}\,\mathbb{E}\!\big[\Delta(X)^2\mid\mathcal T\big]
		=\frac{1}{n}\,\|\Delta\|_{L_2(P_X)}^{\,2}.
		\]
		
		Recall that Chebyshev's inequality in its conditional form states that, for any random variable $Z$ with $\mathbb{E}[Z\mid\mathcal{T}]=0$, and for all $s>0$,  
		$\mathbb{P}\!\left(|Z|\ge s \,\middle|\, \mathcal{T}\right) \le \Var(Z\mid \mathcal{T}) / s^2.$
		
		Apply this to $Z=\sqrt{n}\,R_n$ to obtain, for any $s>0$,
		\begin{align}
			\label{eq:concentration_ineq_for_R_n}
			\mathbb{P}\!\left(\,|\sqrt{n}\,R_n|\ge s \,\middle|\,\mathcal T\right)
			\le \frac{\Var(\sqrt{n}\,R_n\mid\mathcal T)}{s^2}
			= \frac{n\,\Var(R_n\mid\mathcal T)}{s^2}
			\le \frac{\|\Delta\|_{L_2(P_X)}^{\,2}}{s^2}.    
		\end{align}
		
		Taking expectations in both sides of (\ref{eq:concentration_ineq_for_R_n}) and using the stochastic boundedness
		$\|\Delta\|_{L_2(P_X)}=O_p(1)$ yields $\sqrt{n}\,R_n=O_p(1)$, i.e.\ $R_n=O_p(n^{-1/2})$.
		
		One should note that, without cross–fitting, $\Delta$ would be
		measurable with respect to the same labeled data used in $P_n$, so the $W_j$'s would no longer
		be conditionally independent and centered given the training objects. Cross–fitting ensures that,
		conditional on $\mathcal T$, $W_j$ are i.i.d.\ mean-zero, allowing the variance bound and the
		Chebyshev control above to hold without further entropy/Donsker assumptions.
		
		\paragraph{(iii) Conclusion on the nuisance remainder.}
		Combining the two parts, we have
		\[
		(C) = (P_N-P_n)\Delta = R_N - R_n = O_p(N^{-1/2})+O_p(n^{-1/2}) \;\xrightarrow{p}\; 0
		\quad\text{as } N,n\to\infty .
		\]
		

		\paragraph{Conclusion.}
		Each underbraced term in \((\star)\) converges to $0$ in probability, so
		$\widehat\theta^{\mathrm{cf}}_{\mathrm{PPI}}\to_p \theta_0$.
	\end{proof}
	
	
	\begin{lemma}[Cross-fitted empirical-process bound; cf.\ Lemma~1 in \cite{kennedy2024semiparametric}]
		\label{lem:cf_external_sample_bound_n}
		Let $\{W_i\}_{i=1}^n$ be i.i.d.\ from a distribution $P$, and let $\mathcal T$ be a $\sigma$–field
		independent of $\sigma(W_1,\dots,W_n)$ (e.g., the $\sigma$–field generated by the training objects
		used to construct a cross–fitted predictor from data disjoint from $\{W_i\}_{i=1}^n$).
		For any $\mathcal T$–measurable function $h$ with $\|h\|_{L_2(P)}^2=\mathbb E[h(W)^2]<\infty$,
		writing $P_n h := n^{-1}\sum_{i=1}^n h(W_i)$, we have
		\[
		\big(P_n-P\big)h
		\;=\; O_p\!\Big(\|h\|_{L_2(P)}/\sqrt{n}\Big).
		\]
		Equivalently, for every $t>0$,
		\[
		\mathbb P\!\left(
		\frac{|(P_n-P)h|}{\|h\|_{L_2(P)}/\sqrt{n}} \ge t
		\,\middle|\, \mathcal T\right) \le \frac{1}{t^2},
		\qquad\text{hence}\quad
		\frac{|(P_n-P)h|}{\|h\|_{L_2(P)}/\sqrt{n}} = O_p(1).
		\]
	\end{lemma}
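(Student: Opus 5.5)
The plan is a conditional Chebyshev argument, using the independence of $\mathcal T$ and the sample $\{W_i\}_{i=1}^n$. Conditional on $\mathcal T$, the function $h$ is deterministic. Because $\mathcal T$ is independent of $\sigma(W_1,\dots,W_n)$, the conditional law of $(W_1,\dots,W_n)$ given $\mathcal T$ is still the product law $P^{\otimes n}$. Hence $h(W_1),\dots,h(W_n)$ are conditionally i.i.d.\ with conditional mean $Ph=\int h\,dP$. Throughout, $Ph$ and $\|h\|_{L_2(P)}$ are understood as these $\mathcal T$-measurable conditional quantities, as in the proof of Lemma~\ref{thm:cfppi_mean_consistency}. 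I will write $\sigma_h:=\|h\|_{L_2(P)}/\sqrt n$.

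The key steps, in order, are as follows.

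First, I compute the conditional mean and variance. We have $\mathbb E[(P_n-P)h\mid\mathcal T]=0$. By conditional independence,
\[
\Var\big((P_n-P)h\mid\mathcal T\big)=\frac{1}{n}\Var\big(h(W)\mid\mathcal T\big)\le \frac1n\,\mathbb E[h(W)^2\mid\mathcal T]=\frac{\|h\|_{L_2(P)}^2}{n}.
\]

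Second, I apply conditional Chebyshev to $Z:=(P_n-P)h/\sigma_h$ on the event $\{\|h\|_{L_2(P)}>0\}$. This gives $\mathbb E[Z\mid\mathcal T]=0$ and $\Var(Z\mid\mathcal T)\le 1$, so $\mathbb P(|Z|\ge t\mid\mathcal T)\le 1/t^2$ for all $t>0$, which is the displayed inequality.

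On the event $\{\|h\|_{L_2(P)}=0\}$, $h(W_i)=0$ almost surely conditionally, so $(P_n-P)h=0$ almost surely. The ratio is then read as $0$ (convention $0/0=0$), and the inequality holds trivially.

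Third, I take expectations over $\mathcal T$ in the conditional bound. This yields the unconditional bound $\mathbb P(|Z|\ge t)\le t^{-2}$, uniformly in $n$. Choosing $t=\varepsilon^{-1/2}$ shows $Z$ is tight, i.e.\ $Z=O_p(1)$, which is $(P_n-P)h=O_p(\|h\|_{L_2(P)}/\sqrt n)$.

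The only delicate point is justifying that conditioning on $\mathcal T$ leaves the $W_i$ i.i.d.\ from $P$ while $h$ is frozen. I would handle this with the standard substitution (freezing) lemma for independent $\sigma$-fields: for measurable $\Phi$, $\mathbb E[\Phi(h,W_1,\dots,W_n)\mid\mathcal T]=\psi(h)$ with $\psi(\eta)=\mathbb E\,\Phi(\eta,W_1,\dots,W_n)$. Applying it to $\Phi=\mathbf 1\{|(P_n-P)\eta|\ge t\|\eta\|/\sqrt n\}$ reduces the conditional statement to the deterministic-$h$ Chebyshev bound. Measurability of the map $(\eta,w)\mapsto\eta(w)$ is assumed, as is implicit in treating $h$ as a random function. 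The remaining steps are routine.
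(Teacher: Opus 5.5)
Your proposal is correct and follows essentially the same route as the paper: condition on $\mathcal T$ to freeze $h$, use independence to keep the $W_i$ conditionally i.i.d.\ from $P$, bound the conditional variance by $\|h\|_{L_2(P)}^2/n$, apply conditional Chebyshev, and integrate out $\mathcal T$. Your treatment of the degenerate case via the ratio form with the convention $0/0=0$ is slightly cleaner than the paper's. In the paper's non-ratio form, when $\|h\|_{L_2(P)}=0$ the event $\{|(P_n-P)h|\ge t\|h\|_{L_2(P)}/\sqrt n\}$ becomes $\{|(P_n-P)h|\ge 0\}$, which is certain rather than null.
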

	
	\begin{proof}
		Independence implies $(W_1,\dots,W_n)\mid\mathcal T \sim P^{\otimes n}$ a.s., so given $\mathcal T$ the $W_i$'s
		are i.i.d.\ with common law $P$. Since $h$ is $\mathcal T$–measurable, it is fixed when conditioning on $\mathcal T$.
		Thus,
		\[
		\mathbb E[h(W_i)\mid\mathcal T]=\int h\,dP=:Ph,\qquad
		\mathbb E[h(W_i)^2\mid\mathcal T]=\int h^2\,dP=\|h\|_{L_2(P)}^2,
		\]
		and $\Var(h(W_i)\mid\mathcal T)\le \|h\|_{L_2(P)}^2$. With $P_n h := n^{-1}\sum_{i=1}^n h(W_i)$,
		\[
		\mathbb E\!\big[(P_n - P)h\mid\mathcal T\big]=0,
		\]
		and conditional i.i.d.-ness yields
		\[
		\Var\!\big((P_n-P)h\mid\mathcal T\big)
		=\Var\!\big(P_n h\mid\mathcal T\big)
		=\frac{1}{n}\Var\!\big(h(W)\mid\mathcal T\big)
		\le \frac{\|h\|_{L_2(P)}^2}{n}.
		\]
		
		Let $Z:=(P_n-P)h$. From the calculation above,
		$\mathbb{E}[Z\mid\mathcal T]=0$ and 
		$\Var(Z\mid\mathcal T)\le \|h\|_{L_2(P)}^2/n$.
		Chebyshev’s inequality in conditional form (obtained by applying conditional
		Markov inequality to $Z^2$):
		\[
		\mathbb P\!\left(|Z|\ge a \,\middle|\, \mathcal T\right)
		= \mathbb P\!\left(Z^2\ge a^2 \,\middle|\, \mathcal T\right)
		\le \frac{\mathbb E[Z^2\mid \mathcal T]}{a^2}
		= \frac{\Var(Z\mid\mathcal T)+\big(\mathbb E[Z\mid\mathcal T]\big)^2}{a^2}
		= \frac{\Var(Z\mid\mathcal T)}{a^2}.
		\]
		
		Finally, choose $a:= t\,\|h\|_{L_2(P)}/\sqrt{n}$ (with $t>0$). Then
		\[
		\mathbb P\!\left(\, |(P_n-P)h| \ge t\,\frac{\|h\|_{L_2(P)}}{\sqrt{n}}
		\,\middle|\, \mathcal T\right)
		\le 
		\frac{\Var(Z\mid\mathcal T)}{t^2\,\|h\|_{L_2(P)}^2/n}
		\le \frac{1}{t^2}.
		\tag{$\dagger$}
		\]
		
		If $\|h\|_{L_2(P)}=0$, then $h=0$ $P$–a.s., so $Z\equiv0$ and the event on the left of $(\dagger)$ has probability $0$; the bound still holds. Taking expectations over $\mathcal T$ yields
		\[
		\mathbb P\!\left(\, |(P_n-P)h| \ge t\,\frac{\|h\|_{L_2(P)}}{\sqrt{n}} \right)
		\le \frac{1}{t^2},
		\]
		which is equivalent to
		\(
		(P_n-P)h = O_p\!\big(\|h\|_{L_2(P)}/\sqrt{n}\big).
		\)
	\end{proof}
	
	\begin{lemma}[Asymptotic normality of the CF-PPI estimator for the mean]
		\label{thm:cfppi_mean_clt}
		Assume the setup of Lemma~\ref{thm:cfppi_mean_consistency}. 
		If, in addition, the cross–fitted predictor satisfies
		\begin{align}
			\label{eq:nuisance_condition_for_asymptotic_normality_CF_PPI}
			\|\widehat m^{(-)}-m_0\|_{L_2(P_X)} = o_p(1),    
		\end{align}
		then
		\[
		\sqrt{N}\,\big(\widehat\theta^{\mathrm{cf}}_{\mathrm{PPI}}-\theta_0\big)
		\;\xrightarrow{d}\; \mathcal{N}\!\big(0,\sigma_f^2\big),
		\]
		with asymptotic variance
		\[
		\sigma_f^2 \;=\; \Var\!\big(m_0(X)\big) \;+\; \frac{1}{f}\,\Var\!\big(Y-m_0(X)\big).
		\]
	\end{lemma}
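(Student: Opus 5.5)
We reuse the decomposition $(\star)$ from the proof of Lemma~\ref{thm:cfppi_mean_consistency}:
\[
\widehat\theta^{\mathrm{cf}}_{\mathrm{PPI}}-\theta_0=(P_N-P)\,m_0+(P_n-P)(Y-m_0)+(P_N-P_n)\big(\widehat m^{(-)}-m_0\big).
\]
We then show three things: the third term is $o_p(N^{-1/2})$, the first two terms are jointly asymptotically normal, and their limits combine into $\sigma_f^2$.

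\emph{Remainder.} Write $\Delta=\widehat m^{(-)}-m_0$ and split $(P_N-P_n)\Delta=R_N-R_n$ as in \eqref{eq:remainder}.

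For $R_N$, the unlabeled sample is independent of $\mathcal T$. Lemma~\ref{lem:cf_external_sample_bound_n} then gives, for each $t>0$,
\[
\mathbb P\big(|\sqrt N R_N|\ge t\,\|\Delta\|_{L_2(P_X)}\mid\mathcal T\big)\le t^{-2}.
\]
Fix $\epsilon,\eta>0$. Bound $\mathbb P(|\sqrt N R_N|>\epsilon)$ by
\[
\mathbb P\big(\|\Delta\|_{L_2(P_X)}>\epsilon/t\big)+t^{-2}.
\]
Choose $t$ with $t^{-2}<\eta/2$. By \eqref{eq:nuisance_condition_for_asymptotic_normality_CF_PPI} the first probability is eventually below $\eta/2$. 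Hence $\sqrt N R_N=o_p(1)$.

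For $R_n$ we apply the same argument fold by fold. For each $k$, the observations in $S_k$ are independent of the training data used to build $\widehat m^{(k)}$. So, conditionally on $\{(X_j,Y_j)\}_{j\notin S_k}$, Lemma~\ref{lem:cf_external_sample_bound_n} applies with $|S_k|\asymp n/K$ and $h=\widehat m^{(k)}-m_0$. Summing over the fixed, finite $K$ folds gives
\[
\sqrt n R_n=O_p\Big(\max_k\|\widehat m^{(k)}-m_0\|_{L_2(P_X)}\Big)=o_p(1).
\]
Here we interpret the $L_2$ condition on $\widehat m^{(-)}$ as applying to each fold fit and to $\widehat m^\star$. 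Since $n/N\to f>0$, this is also $o_p(N^{-1/2})$.

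\emph{Leading terms.} By \eqref{eq:asymptotic_distribution_first_part} and \eqref{eq:asymptotic_distribution_second_part}, $\sqrt N(P_N-P)m_0\to\mathcal N(0,\Var m_0(X))$. Also
\[
\sqrt N(P_n-P)(Y-m_0)=\sqrt{N/n}\,\sqrt n(P_n-P)(Y-m_0)\to\mathcal N\big(0,f^{-1}\Var(Y-m_0(X))\big),
\]
using Slutsky with $N/n\to1/f$. For the joint limit:
\begin{itemize}
\item In the setup, the labeled and unlabeled samples are independent, so the two terms are independent and their characteristic functions factor. The limit is therefore normal with the summed variance $\sigma_f^2$.
\item If instead the labeled units are a subset of the $N$ units (the $\delta_i$ formulation of Section~\ref{sec:Semiparametric efficiency theory}), write the sum as $N^{-1}\sum_i\phi^{\mathrm{eff}}(O_i)$ with $\phi^{\mathrm{eff}}$ from \eqref{eq:EIF}. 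The cross term vanishes because $\mathbb E[(m_0(X)-\theta_0)(Y-m_0(X))]=0$, so the variance is again $\sigma_f^2$.
\end{itemize}
Slutsky with the $o_p(N^{-1/2})$ remainder then concludes.

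\emph{Main obstacle.} The delicate step is the $R_n$ bound. $\Delta$ is not measurable with respect to a single $\sigma$-field independent of all of $S$, because different folds are scored by different fits. The conditioning therefore has to be done separately for each fold, combined with a uniform-in-$k$ consistency statement. The passage from the conditional Chebyshev bound to an unconditional $o_p$ statement also requires the truncation argument above, not merely taking expectations.
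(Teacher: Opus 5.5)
Your overall route matches the paper's: the decomposition $(\star)$, Lemma~\ref{lem:cf_external_sample_bound_n} for the nuisance remainder, and the CLT with Slutsky for the two leading terms. Your fold-by-fold conditioning is more careful than the paper. The paper conditions on a $\sigma$-field $\mathcal T$ containing every $\widehat m^{(k)}$ and then asserts $X_j\perp\!\!\!\perp\mathcal T$ for labeled $j$. That fails, because $\widehat m^{(k)}$ with $k\neq\kappa(j)$ was trained on $(X_j,Y_j)$. Your truncation step, by contrast, is only an explicit version of the paper's step ``$O_p(\|\Delta\|_{L_2(P_X)})$ with $\|\Delta\|_{L_2(P_X)}=o_p(1)$'', not a real difference.

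There is, however, a gap at the split $(P_N-P_n)\Delta=R_N-R_n$. Your refinement makes it visible; the paper hides it in its ``$P\Delta$ shorthand''. To apply the conditional Chebyshev bound, you must center $R_N$ at $P(\widehat m^\star-m_0)$ and the fold-$k$ part of $R_n$ at $P(\widehat m^{(k)}-m_0)$. With those centerings, $(P_N-P_n)\Delta=R_N-R_n+B$, where $B:=P\widehat m^\star-\sum_k (n_k/n)\,P\widehat m^{(k)}$ and $n_k=|S_k|$. You therefore also need $\sqrt N B=o_p(1)$.

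\begin{itemize}
\item \emph{Fold-average aggregator.} If $\widehat m^\star=K^{-1}\sum_k\widehat m^{(k)}$ and fold sizes differ by at most one, then $B=\sum_k(K^{-1}-n_k/n)\,P(\widehat m^{(k)}-m_0)=O_p\big(n^{-1}\max_k\|\widehat m^{(k)}-m_0\|_{L_2(P_X)}\big)$, and $B=0$ for equal folds. In this case your proof is complete.
\item \emph{Full-sample aggregator.} The setup also allows the full-sample fit. Then $L_2$ consistency only yields $B=o_p(1)$, and the conclusion can fail. Take $m_0\equiv0$ and a learner that, trained on $r$ points, returns the constant $\bar Y_{\mathrm{train}}+r^{-1/4}$. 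Every fit is $L_2$-consistent. Yet with equal folds, $\widehat\theta^{\mathrm{cf}}_{\mathrm{PPI}}=\bar Y_S+n^{-1/4}\{1-(K/(K-1))^{1/4}\}$, so $\sqrt N(\widehat\theta^{\mathrm{cf}}_{\mathrm{PPI}}-\theta_0)$ diverges.
\end{itemize}

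You need either to restrict $\widehat m^\star$ to the (size-weighted) average of the fold fits, or to assume $B=o_p(N^{-1/2})$ explicitly.
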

	
	\begin{proof}
		We start from the decomposition \((\star)\) in \emph{Proof} of Lemma \ref{thm:cfppi_mean_consistency} and multiply by \(\sqrt{N}\):
		\[
		\sqrt{N}\big(\widehat\theta^{\mathrm{cf}}_{\mathrm{PPI}}-\theta_0\big)
		= \underbrace{\sqrt{N}\,(P_N-P)\,m_0}_{\text{(I)}} 
		+ \underbrace{\sqrt{N}\,(P_n-P)\,(Y-m_0)}_{\text{(II)}}
		+ \underbrace{\sqrt{N}\,(P_N-P_n)\big(\widehat m^{(-)}-m_0\big)}_{\text{(III)}}.
		\]
		
		\paragraph{Leading terms (I) and (II).}
		By the i.i.d.\ CLT, \eqref{eq:asymptotic_distribution_first_part} gives
		\[
		\sqrt{N}\,(P_N-P)\,m_0 \;\xrightarrow{d}\; \mathcal N\!\big(0,\Var\{m_0(X)\}\big).
		\]
		
		Similarly, \eqref{eq:asymptotic_distribution_second_part} yields
		\[
		\sqrt{n}\,(P_n-P)\,(Y-m_0) \;\xrightarrow{d}\; \mathcal N\!\big(0,\Var\{Y-m_0(X)\}\big),
		\]
		so
		\[
		\sqrt{N}\,(P_n-P)\,(Y-m_0)
		= \sqrt{\frac{N}{n}}\;\sqrt{n}\,(P_n-P)\,(Y-m_0)
		\;\xrightarrow{d}\; \mathcal N\!\Big(0,\frac{1}{f}\Var\{Y-m_0(X)\}\Big),
		\]
		because \(N/n\to 1/f\). Since the unlabeled and labeled samples are independent, the limits above are independent.
		
		\paragraph{Remainder term (III).}
		Let $\Delta:=\widehat m^{(-)}-m_0$. From the decomposition,
		\[
		\text{(III)}
		=\sqrt{N}\,(P_N-P_n)\Delta
		=\sqrt{N}\Big\{(P_N-P)\Delta-(P_n-P)\Delta\Big\}
		=\sqrt{N}\,R_N-\sqrt{N}\,R_n,
		\]
		where $R_N:=(P_N-P)\Delta$ and $R_n:=(P_n-P)\Delta$.
		
		Apply Lemma~\ref{lem:cf_external_sample_bound_n} with $h=\Delta$ to the two evaluation samples:
		(i) the unlabeled sample $\{X_i\}_{i=1}^N$ (take $W=X$, sample size $n=N$), and
		(ii) the labeled evaluation sample $\{X_j\}_{j\in S}$ (cross–fitted, so $X_j\perp\!\!\!\perp\mathcal T$, sample size $n$).  
		The lemma gives
		\[
		R_N=O_p\!\Big(\|\Delta\|_{L_2(P_X)}/\sqrt{N}\Big),
		\qquad
		R_n=O_p\!\Big(\|\Delta\|_{L_2(P_X)}/\sqrt{n}\Big).
		\]
		
		Hence
		\begin{align*}
			\text{(III)}
			&=\sqrt{N}\,R_N-\sqrt{N}\,R_n
			=O_p\!\big(\|\Delta\|_{L_2(P_X)}\big)
			\;+\;
			O_p\!\Big(\sqrt{\tfrac{N}{n}}\;\|\Delta\|_{L_2(P_X)}\Big) =O_p\!\big(\|\Delta\|_{L_2(P_X)}\big),
		\end{align*}
		since $n/N\to f\in(0,1)$ implies $\sqrt{N/n}=O(1)$. Therefore, under the condition (\ref{eq:nuisance_condition_for_asymptotic_normality_CF_PPI}), we have
		\[
		\text{(III)}=\sqrt{N}\,(P_N-P_n)\big(\widehat m^{(-)}-m_0\big)=o_p(1).
		\]
		
		This controls the nuisance remainder at the $\sqrt{N}$ scale and completes the treatment of term (III).
		
		\paragraph{Conclusion.}
		By Slutsky’s theorem and independence of the two leading limits,
		\[
		\sqrt{N}\big(\widehat\theta^{\mathrm{cf}}_{\mathrm{PPI}}-\theta_0\big)
		\;\xrightarrow{d}\; \mathcal N\!\Big(0,\ \Var\{m_0(X)\}+\tfrac{1}{f}\Var\{Y-m_0(X)\}\Big)
		= \mathcal N\!\big(0,\sigma_f^2\big),
		\]
		which is the claimed result.
	\end{proof}
	

\begin{theorem}[Consistency and asymptotic normality of the CF-PPI estimator for the mean]
	\label{thm:cfppi_mean_unified_appendix}
	Let $(X,Y)$ have joint law $P$ with $X\sim P_X$ and $Y=m_0(X)+\varepsilon$, where
	$\mathbb E[\varepsilon\mid X]=0$ and $\mathbb E[Y^2]<\infty$. The target is the population mean
	$\theta_0:=\mathbb E[Y]$. Let $\{X_i\}_{i=1}^N \stackrel{\mathrm{i.i.d.}}{\sim} P_X$ be an unlabeled
	sample and $\{(X_j,Y_j)\}_{j\in S}$, $|S|=n$, an independent labeled sample from $P$, with
	$N,n\to\infty$ and $n/N\to f\in(0,1)$. Let $\widehat m^{(-)}$ be the cross–fitted predictor
	(each labeled index is scored by a model trained without its own fold), and consider
	\begin{equation}\label{eq:cfppi_mean_estimator}
		\widehat\theta^{\mathrm{cf}}_{\mathrm{PPI}}
		=\frac{1}{N}\sum_{i=1}^N \widehat m^{(-)}(X_i)
		+\frac{1}{n}\sum_{j\in S}\{Y_j-\widehat m^{(-)}(X_j)\}.
	\end{equation}
	Then:
	
	\medskip
	\noindent \textbf{\textup{(i) Consistency.}} 
	If the out–of–fold error is stochastically bounded in $L_2(P_X)$,
	\[
	\|\widehat m^{(-)}-m_0\|_{L_2(P_X)} = O_p(1),
	\]
	then $\widehat\theta^{\mathrm{cf}}_{\mathrm{PPI}}\xrightarrow{p}\theta_0$.
	
	\medskip
	\noindent \textbf{\textup{(ii) Asymptotic normality.}}  
	If, in addition, the cross–fitted predictor is $L_2(P_X)$–consistent,
	\[
	\|\widehat m^{(-)}-m_0\|_{L_2(P_X)} = o_p(1),
	\]
	then
	\[
	\sqrt{N}\,\big(\widehat\theta^{\mathrm{cf}}_{\mathrm{PPI}}-\theta_0\big)
	\;\xrightarrow{d}\; \mathcal{N}\!\big(0,\sigma_f^2\big),
	\qquad
	\sigma_f^2 \;=\; \Var\!\big(m_0(X)\big) \;+\; \frac{1}{f}\,\Var\!\big(Y-m_0(X)\big).
	\]
\end{theorem}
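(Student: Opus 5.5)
This theorem is the union of Lemma~\ref{thm:cfppi_mean_consistency} and Lemma~\ref{thm:cfppi_mean_clt}, so the plan is to invoke those two results under the shared setup. First, we check that the hypotheses of Theorem~\ref{thm:cfppi_mean_unified_appendix} match theirs. The data structure, moment condition $\mathbb E[Y^2]<\infty$, regime $n/N\to f\in(0,1)$, and cross-fitted construction of $\widehat m^{(-)}$ are identical. The estimator \eqref{eq:cfppi_mean_estimator} is the one analyzed in both lemmas.

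\textbf{Part (i).} We apply Lemma~\ref{thm:cfppi_mean_consistency} directly under $\|\widehat m^{(-)}-m_0\|_{L_2(P_X)}=O_p(1)$. Its proof rests on the decomposition $(\star)$,
\[
\widehat\theta^{\mathrm{cf}}_{\mathrm{PPI}}-\theta_0=(P_N-P)m_0+(P_n-P)(Y-m_0)+(P_N-P_n)(\widehat m^{(-)}-m_0).
\]
The first two terms are $o_p(1)$ by the weak law of large numbers, since they involve fixed functions. The third term splits as $R_N-R_n$, and each piece is $O_p(N^{-1/2})$ and $O_p(n^{-1/2})$ respectively by the conditional Chebyshev bound of Lemma~\ref{lem:cf_external_sample_bound_n}.

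\textbf{Part (ii).} The $o_p(1)$ condition strengthens the $O_p(1)$ condition, so Lemma~\ref{thm:cfppi_mean_clt} applies. The two leading terms, scaled by $\sqrt N$, converge to independent normals with variances $\Var\{m_0(X)\}$ and $f^{-1}\Var\{Y-m_0(X)\}$. Independence holds because the unlabeled and labeled samples are independent, and $N/n\to 1/f$ gives the second variance. The remainder is $O_p(\|\widehat m^{(-)}-m_0\|_{L_2(P_X)})=o_p(1)$. Slutsky's theorem then gives the stated limit.

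The only delicate point is the remainder $R_n$ on the labeled sample. Lemma~\ref{lem:cf_external_sample_bound_n} requires the evaluation points to be independent of the trained objects $\mathcal T$, but $\mathcal T$ is built from other labeled folds. I would handle this fold by fold. For each $k$, condition on the fit $\widehat m^{(k)}$, which is independent of $\{X_j:j\in S^{(k)}\}$, and apply the lemma with sample size $|S^{(k)}|\asymp n/K$. Summing the $K$ pieces, with $K$ fixed, keeps the bound $O_p(\|\Delta\|/\sqrt n)$.

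A secondary care point is the norm. The $L_2(P_X)$ distance $\|\widehat m^{(-)}-m_0\|$ should be read as the maximum over the fold-specific fits and $\widehat m^\star$, so that each conditional variance bound is controlled by the assumed rate. With these two points made explicit, the theorem follows immediately from the two lemmas.
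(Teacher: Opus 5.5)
Your proposal follows the paper's route: the theorem is the conjunction of Lemma~\ref{thm:cfppi_mean_consistency} and Lemma~\ref{thm:cfppi_mean_clt}, argued through the decomposition $(\star)$, a conditional Chebyshev bound for the remainder, and the CLT with Slutsky for the leading terms. Your fold-by-fold conditioning repairs a real flaw in the paper's lemmas. Those lemmas condition on a $\sigma$-field $\mathcal T$ that contains every $\widehat m^{(k)}$ and $\widehat m^\star$, so the labeled $X_j$ are not independent of $\mathcal T$, which Lemma~\ref{lem:cf_external_sample_bound_n} requires.

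However, you stop one step short in handling the piecewise structure. On the unlabeled sample $\widehat m^{(-)}$ equals $\widehat m^\star$, while on fold $S^{(k)}$ it equals $\widehat m^{(k)}$. So the remainder does not split into $R_N-R_n$ with a common centering. Write $\Delta^\star=\widehat m^\star-m_0$, $\Delta^{(k)}=\widehat m^{(k)}-m_0$, $n_k=|S^{(k)}|$, and let $P_{n,k}$ be the empirical measure of fold $k$. Then
\[
(P_N-P_n)(\widehat m^{(-)}-m_0)=(P_N-P)\Delta^\star-\sum_{k=1}^K\frac{n_k}{n}(P_{n,k}-P)\Delta^{(k)}+\Big\{P\widehat m^\star-\sum_{k=1}^K\frac{n_k}{n}P\widehat m^{(k)}\Big\}.
\]
Your Chebyshev bounds handle the first two terms. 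The bracket, though, is a training-measurable bias rather than a mean-zero fluctuation, and the $L_2(P_X)$ hypotheses do not control it at the required scale.

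When the bracket is harmless:
\begin{itemize}
\item If $\widehat m^\star=\sum_k(n_k/n)\widehat m^{(k)}$, it vanishes.
\item If $\widehat m^\star$ is the simple average $K^{-1}\sum_k\widehat m^{(k)}$ (the paper's simulation choice) and the folds are balanced, it is at most $(K/n)\max_k\|\Delta^{(k)}\|_{L_2(P_X)}$.
\end{itemize}
In both cases your argument closes.

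The setup, however, explicitly permits $\widehat m^\star$ to be the full-sample fit. Then the bracket reflects the learner's bias difference between training sizes $n$ and $n(K-1)/K$, and the result can fail:
\begin{itemize}
\item Take a learner that returns $m_0+t^{-1/4}$ from $t$ training points. It satisfies the $o_p(1)$ hypothesis of (ii). With equal folds the bracket equals $n^{-1/4}\{1-(K/(K-1))^{1/4}\}$, so $\sqrt N(\widehat\theta^{\mathrm{cf}}_{\mathrm{PPI}}-\theta_0)$ diverges.
\item Part (i) likewise needs the bracket to be $o_p(1)$, which $O_p(1)$ error bounds do not give.
\end{itemize}

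You therefore need one of two fixes. Either restrict $\widehat m^\star$ to the fold average, or add a hypothesis such as $\sqrt N\,P(\widehat m^\star-\sum_k\tfrac{n_k}{n}\widehat m^{(k)})=o_p(1)$, together with the unscaled version being $o_p(1)$ for (i). The paper's own proof skips the same term through its remark that $P\Delta$ is ``understood conditional on the model that scores each $j$''.
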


\begin{proof}
	The theorem follows from Lemma~\ref{thm:cfppi_mean_consistency} (consistency) together with the Lemma~\ref{thm:cfppi_mean_clt} (asymptotic normality).
\end{proof}

\section{Proof of Theorem \ref{thm:cal-mean} (Asymptotic theory of the MEC estimator)}\label{subsec:Proof of Theorem 5.2 (Asymptotic theory of GEC estimator)}

\begin{lemma}[Curvature–weighted divergence control]\label{lem:curv_weighted_bd}
	Let $G:\mathcal{V}\to\mathbb{R}$ be strictly convex and twice continuously differentiable
	with derivative $g:=G'$. For any vectors $\omega=(\omega_j)_{j\in S}$ and $d=(d_j)_{j\in S}$,
	\[
	\sum_{j\in S} \tilde g_j'\,(\omega_j-d_j)^2
	\;=\;
	D_G(\omega\Vert d) + D_G(d\Vert \omega),
	\]
	where the Bregman divergence is
	\[
	D_G(a\Vert b)=\sum_{j\in S}\big\{G(a_j)-G(b_j)-g(b_j)\,(a_j-b_j)\big\},
	\]
	and
	\[
	\tilde g_j'
	:= \tilde g'(d_j,\omega_j)
	:= \int_{0}^{1} g'\!\big(d_j + t(\omega_j-d_j)\big)\,dt.
	\]
\end{lemma}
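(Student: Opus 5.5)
The identity is separable, so I will prove it coordinatewise for a single $j\in S$ and then sum over $S$. The plan is to compute the symmetric Bregman divergence in closed form, observe that it is a difference of derivatives times a difference of points, and then rewrite the derivative difference by the fundamental theorem of calculus.

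\emph{Step 1 (linear terms).} Fix $j$ and write $a=\omega_j$, $b=d_j$. Adding the two scalar divergences gives
\[
D_G(a\Vert b)+D_G(b\Vert a)=\{G(a)-G(b)-g(b)(a-b)\}+\{G(b)-G(a)-g(a)(b-a)\}.
\]
The $G$ terms cancel, which leaves
\[
D_G(a\Vert b)+D_G(b\Vert a)=\{g(a)-g(b)\}(a-b).
\]
This is the standard symmetrized-Bregman identity.

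\emph{Step 2 (fundamental theorem of calculus).} Since $G$ is $C^2$, $g=G'$ is $C^1$ on the open interval $\mathcal V$. Along the segment $t\mapsto b+t(a-b)$, $t\in[0,1]$, we then have
\[
g(a)-g(b)=\int_0^1 g'\big(b+t(a-b)\big)\,dt\,(a-b)=\tilde g'(d_j,\omega_j)\,(\omega_j-d_j).
\]
Substituting this into the display of Step 1 yields $\tilde g_j'(\omega_j-d_j)^2$. Summing over $j\in S$ gives the claimed identity.

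\emph{Main point of care.} The only nontrivial issue is that the segment must stay inside the domain. Because $\mathcal V$ is an interval, it is convex, so whenever $\omega_j,d_j\in\mathcal V$ the whole segment lies in $\mathcal V$. On that segment $g'$ is continuous, so the integral is well defined; and since $g'=G''>0$ by strict convexity, $\tilde g_j'>0$, which is what later lemmas will use to convert A2 into an $L_2$-type bound on $\widehat\omega-d$. I will state explicitly that the lemma implicitly requires $\omega_j,d_j\in\mathcal V$. In our application this holds, because the weights are positive and $\mathcal V\supseteq(0,\infty)$ for the generators considered.
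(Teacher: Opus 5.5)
Your proof is correct and follows the paper's argument. You add the two divergences to get $\sum_{j\in S}(\omega_j-d_j)\{g(\omega_j)-g(d_j)\}$, then write $g(\omega_j)-g(d_j)=\tilde g_j'(\omega_j-d_j)$ by the fundamental theorem of calculus along the segment. Your caveat that the entries must lie in the convex domain $\mathcal V$ is a worthwhile addition the paper leaves implicit.

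One small correction to your aside: strict convexity only gives $G''\ge 0$, not $G''>0$. So $\tilde g_j'>0$ is guaranteed when $\omega_j\neq d_j$, since it is then the difference quotient of the strictly increasing $g$, but it is not guaranteed pointwise. This does not affect the identity itself.
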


\begin{proof}
	Expand the two Bregman divergences coordinatewise:
	\begin{align*}
		D_G(\omega\Vert d)
		&=\sum_{j\in S}\Big\{G(\omega_j)-G(d_j)-g(d_j)\,(\omega_j-d_j)\Big\},\\
		D_G(d\Vert\omega)
		&=\sum_{j\in S}\Big\{G(d_j)-G(\omega_j)-g(\omega_j)\,(d_j-\omega_j)\Big\}.
	\end{align*}
	
	Adding them and simplifying gives the symmetric Bregman identity
	\begin{equation}\label{eq:symmBreg}
		D_G(\omega\Vert d)+D_G(d\Vert\omega)
		=\sum_{j\in S}(\omega_j-d_j)\,\big(g(\omega_j)-g(d_j)\big).
	\end{equation}
	
	For each $j\in S$, apply the fundamental theorem of calculus to $g=G'$ along the
	segment from $d_j$ to $\omega_j$:
	\begin{align*}
		g(\omega_j)-g(d_j)&=\int_{d_j}^{\omega_j} g'(s)\,ds=\int_{0}^{1} g'\!\big(d_j+t(\omega_j-d_j)\big)\,\big(\omega_j-d_j\big)\,dt\\
		&=(\omega_j-d_j)\int_{0}^{1} g'\!\big(d_j+t(\omega_j-d_j)\big)\,dt=(\omega_j-d_j)\,\tilde g_j'.    
	\end{align*}
	where $\displaystyle \tilde g_j' := \int_{0}^{1} g'\!\big(d_j+t(\omega_j-d_j)\big)\,dt$. Here, we used the change of variable $s=d_j+t(\omega_j-d_j)$, so $ds=(\omega_j-d_j)\,dt$, for the second equality.
	
	Substituting this expression for $g(\omega_j)-g(d_j)$ into \eqref{eq:symmBreg} yields
	\[
	D_G(\omega\Vert d)+D_G(d\Vert\omega)
	=\sum_{j\in S}(\omega_j-d_j)\,(\omega_j-d_j)\,\tilde g_j'
	=\sum_{j\in S} \tilde g_j'\,(\omega_j-d_j)^2,
	\]
	which is exactly the claimed identity.
\end{proof}

\begin{lemma}[Consistency of the MEC estimator]\label{lemma:cal-mean-consistency} Assume the conditions of Lemma ~\ref{thm:cfppi_mean_consistency}. 
	Fix baseline weights $d=(d_j)_{j\in S}$ (i.e., $d_j\equiv N/n$) and a strictly convex,
	twice continuously differentiable generator $G$ with derivative $g=G'$ and Bregman divergence
	$D_G(\cdot\Vert\cdot)$. Let $h(x)=(1,\widehat m^{(-)}(x))^\top$, $z_j=h(X_j)$, and define the calibration subspace
	\[
	W:=\mathrm{span}\{h\}=\mathrm{span}\{1,\widehat m^{(-)}(\cdot)\}
	=\{a+b\,\widehat m^{(-)}(\cdot):a,b\in\mathbb{R}\}\subset L_2(P_X).
	\]
	Let $\Pi_W$ denote the $L_2(P_X)$ projection onto $W$, and set $m_{0,\perp}:=m_0-\Pi_W m_0$.
	Let $\widehat\omega=(\widehat\omega_j)_{j\in S}$ be the calibrated weights obtained by the
	$G$–Bregman projection of $d$ onto the calibration constraints
	$Z^\top\omega=\mu$ with $Z=[z_j^\top]_{j\in S}$ and $\mu=\sum_{i=1}^N h(X_i)$.
	Consider the MEC estimator
	\[
	\widehat\theta_{\mathrm{MEC}}
	=\frac{1}{N}\sum_{j\in S}\widehat\omega_j\,Y_j.
	\]
	
	\medskip
	\noindent\textbf{Assumptions.}
	\begin{itemize}[label={}, leftmargin=0pt]
		\item[]A1 Moments:. $\E\|h(X)\|^2<\infty$ (equivalently, $\E[\widehat m^{(-)}(X)^2]<\infty$).
		\item[]A2 Weights stability: The calibrated weights satisfy the symmetric Bregman bound
		\[
		D_G(\widehat\omega\Vert d)\;+\;D_G(d\Vert\widehat\omega)\;=\;O_p(1).
		\]
	\end{itemize}
	Then:\\
	If \textup{A1–A2} hold and the projection error is stochastically bounded in $L_2(P_X)$,
	\[
	\|m_{0,\perp}\|_{L_2(P_X)}=\|m_0-\Pi_W m_0\|_{L_2(P_X)}=O_p(1),
	\]
	then $$\widehat\theta_{\mathrm{MEC}}\xrightarrow{p}\theta_0.$$
\end{lemma}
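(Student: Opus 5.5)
The plan is to exploit the calibration constraints to rewrite $\widehat\theta_{\mathrm{MEC}}$ as a CF--PPI-type estimator whose predictor is the projection $\Pi_W m_0$, plus a weight-perturbation remainder. Write $\Pi_W m_0 = a^\ast + b^\ast\,\widehat m^{(-)}$, with $(a^\ast,b^\ast)$ measurable with respect to the training $\sigma$-field $\mathcal T$. Since $\Pi_W m_0\in W=\mathrm{span}\{h\}$, both calibration constraints (intercept and predictor) give
\[
\frac1N\sum_{j\in S}\widehat\omega_j\,\Pi_W m_0(X_j)=P_N\,\Pi_W m_0 .
\]
Hence
\[
\widehat\theta_{\mathrm{MEC}}=P_N\,\Pi_W m_0+\frac1N\sum_{j\in S}\widehat\omega_j\{Y_j-\Pi_W m_0(X_j)\}.
\]
Split $\widehat\omega_j=d_j+(\widehat\omega_j-d_j)$ and use $P_n^d=P_n$. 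This gives
\[
\widehat\theta_{\mathrm{MEC}}-\theta_0=\underbrace{P_N\Pi_W m_0+P_n(Y-\Pi_W m_0)-\theta_0}_{(\mathrm I)}+\underbrace{\frac1N\sum_{j\in S}(\widehat\omega_j-d_j)\{Y_j-\Pi_W m_0(X_j)\}}_{(\mathrm{II})}.
\]

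Term (I) is exactly the CF--PPI estimator with $\widehat m^{(-)}$ replaced by the $\mathcal T$-measurable predictor $\Pi_W m_0$. We reuse the decomposition $(\star)$ from the proof of Lemma~\ref{thm:cfppi_mean_consistency} with $\Delta=\Pi_W m_0-m_0=-m_{0,\perp}$, which gives
\[
(\mathrm I)=(P_N-P)m_0+(P_n-P)(Y-m_0)+(P_N-P_n)(-m_{0,\perp}).
\]
The first two terms are $o_p(1)$ by the law of large numbers. The third is handled by Lemma~\ref{lem:cf_external_sample_bound_n} on each evaluation sample, giving $O_p(\|m_{0,\perp}\|_{L_2(P_X)}/\sqrt n)=o_p(1)$ under the $O_p(1)$ projection-error hypothesis. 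As in the CF--PPI proof, the labeled sample has to be treated fold by fold, since $\Pi_W m_0$ depends on the fold through $\widehat m^{(\kappa(j))}$.

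Term (II) is where A2 enters, and I expect it to be the main obstacle. By Cauchy--Schwarz together with Lemma~\ref{lem:curv_weighted_bd},
\[
|(\mathrm{II})|\le \frac1N\Big(\sum_{j}\tilde g_j'(\widehat\omega_j-d_j)^2\Big)^{1/2}\Big(\sum_j \frac{e_j^2}{\tilde g_j'}\Big)^{1/2},
\]
where $e_j=Y_j-\Pi_W m_0(X_j)$. The first factor is $O_p(1)$ by A2. The second factor is $O_p(\sqrt n)$ provided $\tilde g_j'$ is bounded below. Multiplying by $1/N$ then yields $O_p(n^{1/2}/N)=o_p(1)$.

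To justify $\frac1n\sum_j e_j^2=O_p(1)$, write $e_j=\varepsilon_j+m_{0,\perp}(X_j)$. Then $E[\varepsilon^2]<\infty$, and conditionally on $\mathcal T$ we have $E[m_{0,\perp}(X)^2\mid\mathcal T]=\|m_{0,\perp}\|^2=O_p(1)$, so conditional Markov gives the bound. A1 guarantees that $\Pi_W m_0$ is well defined in $L_2$.

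The delicate point is the lower bound on $\tilde g_j'=\int_0^1 g'(d_j+t(\widehat\omega_j-d_j))\,dt$. I would first try to obtain it from A2 itself: A2 forces $\max_j|\widehat\omega_j-d_j|$ to stay in a compact subinterval of $\mathcal V$ around $d_j=N/n\to 1/f$, on which the continuous function $g'>0$ is bounded below. If that localization argument does not go through, the fallback is to note that for the quadratic generator $\tilde g_j'\equiv1$, so the bound is immediate. For general $G$ I would then impose the mild condition $\inf g'>0$ on the relevant range and treat it as part of the regularity in A2. Combining (I) and (II) gives $\widehat\theta_{\mathrm{MEC}}\xrightarrow{p}\theta_0$.
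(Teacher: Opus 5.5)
Your proposal is correct and is essentially the paper's proof. Your (I) is the paper's $(A)+(B)+(\mathrm{ii})-(\mathrm{iii})$ and your (II) is $(\mathrm{i})+(D)$; you bound (II) with one weighted Cauchy--Schwarz on $e_j=\varepsilon_j+m_{0,\perp}(X_j)$ rather than two, using the same symmetric Bregman identity, A2, and lower bound on $\tilde g_j'$ that the paper asserts. Your fallback is also unnecessary: A2 bounds $\max_j(\widehat\omega_j-d_j)\{g(\widehat\omega_j)-g(d_j)\}$, which caps $\max_j\widehat\omega_j$ in probability, so $\tilde g_j'$, the secant slope of the strictly increasing $g$ (with $g'>0$ continuous) between $d_j\to 1/f$ and $\widehat\omega_j\in(0,\omega_+]$, stays bounded away from zero even when the weights are not localized away from $0$.
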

\begin{proof} By the empirical process notation, we can write
	\begin{align}
		\nonumber
		\widehat\theta_{\mathrm{MEC}}-\theta_0
		&=\frac1N\sum_{j\in S}\widehat\omega_j Y_j - Pm_0\\
		\nonumber
		&=\frac1N\sum_{j\in S}\widehat\omega_j Y_j
		+\frac1N\!\left(\sum_{i=1}^N \widehat m^{(-)}(X_i)
		-\sum_{j\in S}\widehat\omega_j \widehat m^{(-)}(X_j)\right) - Pm_0\\
		\label{eq:GEC_empirical_process_notations}          
		&=\Big\{P_N \widehat m^{(-)}\Big\}+\Big\{P_n^{\widehat \omega}(Y-\widehat m^{(-)})\Big\}-Pm_0,
	\end{align}
	where the bracketed term is zero by the calibration balance constraint
	\(P_N\widehat m^{(-)}=P_n^{\widehat \omega}\widehat m^{(-)}\) (i.e., \(\sum_{i=1}^N \widehat m^{(-)}(X_i)
	=\sum_{j\in S}\widehat\omega_j \widehat m^{(-)}(X_j)\)).
	
	In what follows, to avoid notational clutter, we write \(P_n^{\omega}\) in place of \(P_n^{\widehat{\omega}}\).
	Since we work throughout with the \emph{calibrated} weights \(\widehat{\omega}\) to construct the MEC estimator $\widehat{\theta}_{\mathrm{MEC}}
	= (1/N)\sum_{j \in S} \widehat{\omega}_j \, Y_j$, the intended meaning will be clear from the context.

	Now, we decompose \eqref{eq:GEC_empirical_process_notations} as follow:
	\begin{align}
		\nonumber
		\widehat\theta_{\mathrm{MEC}}-\theta_0
		&=\Bigl\{P_N\,\widehat m^{(-)}\Bigr\}
		+\Bigl\{P_n^{\omega}\bigl(Y-\widehat m^{(-)}\bigr)\Bigr\}
		- P m_0  \\
		\nonumber
		&=\Bigl(P_N \widehat m^{(-)} - P_N m_0\Bigr)
		+\Bigl(P_n^{d}(Y-\widehat m^{(-)})-P_n^{d}(Y-m_0)\Bigr) \\
		\nonumber
		&\quad +\Bigl(P_N m_0 + P_n^{d}(Y-m_0) - P m_0\Bigr)
		+\Bigl(P_n^{\omega}-P_n^{d}\Bigr)\bigl(Y-\widehat m^{(-)}\bigr) \\
		\nonumber
		&\qquad\text{(add and subtract $P_N m_0$ and $P_n^{d}(Y-m_0)$; split $P_n^{\omega}(Y-\widehat m^{(-)})$)} \\
		\nonumber
		&=\Bigl(P_N \widehat m^{(-)} - P_N m_0 - P_n^{d}(\widehat m^{(-)}-m_0)\Bigr)
		+\Bigl(P_N m_0 + P_n^{d}(Y-m_0) - P m_0\Bigr) \\
		\nonumber
		&\quad +\Bigl(P_n^{\omega}-P_n^{d}\Bigr)\bigl(Y-\widehat m^{(-)}\bigr) \\
		\nonumber
		&=\bigl(P_N-P_n^{d}\bigr)\bigl(\widehat m^{(-)}-m_0\bigr)
		\;+\;\bigl(P_N-P\bigr)m_0
		\;+\;\bigl(P_n^{d}-P\bigr)(Y-m_0)  \\
		\nonumber
		&\quad +\bigl(P_n^{\omega}-P_n^{d}\bigr)\bigl(Y-\widehat m^{(-)}\bigr) \\
		\label{eq:GEC_decomposition_five}
		&=\underbrace{(P_N-P)m_0}_{\substack{\text{unlabeled fluctuation}\\(A)}}
		\;+\;\underbrace{(P_n^{}-P)(Y-m_0)}_{\substack{\text{labeled residual fluctuation}\\(B)}}
		\;+\;\underbrace{(P_N-P_n^{})(\widehat m^{(-)}-m_0)}_{\substack{\text{nuisance remainder}\\(C)}} \\
		\nonumber
		&\qquad +\;\underbrace{(P_n^{\omega}-P_n^{})(Y-m_0)}_{\substack{\text{WC residual}\\(D)}}
		\;-\;\underbrace{(P_n^{\omega}-P_n^{})(\widehat m^{(-)}-m_0)}_{\substack{\text{WC nuisance correction}\\(E)}},
	\end{align}
	where the last equality, we used the empirical-process notation, where \(P_n^{d}=P_n\) by definition
	since \(d_j \equiv N/n\). 
	
	Note that the terms \((A)+(B)+(C)\) in \eqref{eq:GEC_decomposition_five} are exactly the
	decomposition of \(\widehat\theta^{\mathrm{cf}}_{\mathrm{PPI}}-\theta_0\) (see the proof of
	Theorem~\ref{thm:cfppi_mean_unified}). 
	
	We briefly interpret the roles of the additional terms \((D)\) and \((E)\) in the MEC decomposition
	\eqref{eq:GEC_decomposition_five}. They arise from weight calibration. First,
	\((P_n^{\omega}-P_n)\) can be viewed as a \emph{reweighting (rebalancing) empirical operator}:
	it replaces the design measure \(P_n\) by the calibrated measure \(P_n^{\omega}\).
	In particular, when \(\omega=d\), this operator vanishes; consequently, \((D)\) and \((E)\) are
	zero, and the decomposition of the MEC estimator coincides with that of CF--PPI. 
	
	The term
	\[
	(D)=(P_n^{\omega}-P_n)(Y-m_0)
	=\frac{1}{n}\sum_{j\in S}(\widehat\omega_j-d_j)\,\bigl(Y_j-m_0(X_j)\bigr),
	\]
	measures the effect of reweighting the labeled \emph{residuals}—that is, how each labeled unit’s
	contribution is altered so that the weighted labeled sample better matches the unlabeled population
	along the calibration moments. 
	
	The term
	\[
	(E)=(P_n^{\omega}-P_n)\bigl(\widehat m^{(-)}-m_0\bigr)
	=\frac{1}{n}\sum_{j\in S}(\widehat\omega_j-d_j)\,\bigl(\widehat m^{(-)}(X_j)-m_0(X_j)\bigr),
	\]
	is the corresponding \emph{prediction adjustment}: it corrects the part of the estimator that
	depends on the fitted regression so that, after reweighting, predictions remain aligned with the
	balanced totals. In short, \((D)\) rebalances residuals and \((E)\) rebalances predictions.
	
	We note that the algebraic decomposition \((A)+(B)+(C)+(D)+(E)\) \eqref{eq:GEC_decomposition_five} holds for any choice of weights \(\omega=(\omega_j)_{j\in S}\) and any working span \(W=\mathrm{span}(h)\). 
	
	The benefit of \emph{calibration} with the predictor basis \(h=(1,\widehat m^{(-)})\), is the balancing condition
	\begin{align}
		\label{eq:calibration_balacing_condition}
		P_N v=P_n^{\omega} v, \qquad \text{for all }  v\in W,
	\end{align}
	where $W$ is the associated calibration span of predictor basis 
	\[
	W=\mathrm{span}\{1,\widehat m^{(-)}(\cdot)\}
	=\{\,a+b\,\widehat m^{(-)}(\cdot): a,b\in\mathbb{R}\,\}\subset L_2(P_X).
	\]
	
	The term \((C)-(E)\) can be simplified via the balance conditions \eqref{eq:calibration_balacing_condition}:
	\begin{align}
		\nonumber
		(C)-(E)
		&= \bigl(P_N-P_n\bigr)\bigl(\widehat m^{(-)}-m_0\bigr)
		- \bigl(P_n^{\omega}-P_n\bigr)\bigl(\widehat m^{(-)}-m_0\bigr)\\
		\nonumber
		&= \bigl(P_N-P_n^{\omega}\bigr)\bigl(\widehat m^{(-)}-m_0\bigr)\\
		\nonumber
		&=
		\bigl(P_N-P_n^{\omega}\bigr)\{\widehat m^{(-)}-\Pi_W m_0-m_{0,\perp} \} 
		\quad\quad (m_0:=\Pi_W m_0+ m_{0,\perp})
		\\
		\nonumber
		&=\underbrace{\bigl(P_N-P_n^{\omega}\bigr)(\widehat m^{(-)}-\Pi_W m_0)}_{=0}
		-\bigl(P_N-P_n^{\omega}\bigr)m_{0,\perp}\\
		\label{eq:simplified_C_E_term}
		&=-\,\bigl(P_N-P_n^{\omega}\bigr)\,m_{0,\perp}\\
		\nonumber
		&=\,\bigl(P_n^{\omega} - P_N\bigr)\,m_{0,\perp},
	\end{align}
	where the second equality holds since intermediate \(P_n\)-terms cancel, and the equality in \eqref{eq:simplified_C_E_term} holds since \(\widehat m^{(-)}-m_0=(\widehat m^{(-)}-\Pi_W m_0)-m_{0,\perp}\) with
	\(m_{0,\perp}:=m_0-\Pi_W m_0\) and noting that \(v = \widehat m^{(-)}-\Pi_W m_0\in W\). 
	
	By this simplification, the out-of-fold predictor \(\widehat m^{(-)}\) (which is noisy and
	subtle to handle) appears only through the calibration span via the remainder
	\(m_{0,\perp}:=m_0-\Pi_W m_0\), which is orthogonal to \(W\).
	Thus the algebraic decomposition \eqref{eq:GEC_decomposition_five} becomes
	\begin{align}
		\label{eq:GEC_decomposition_predictor_bases}
		\widehat\theta_{\mathrm{MEC}}-\theta_0
		&=\underbrace{(P_N-P)m_0}_{(A)}
		+\underbrace{(P_n-P)(Y-m_0)}_{(B)}
		+\underbrace{(P_n^{\omega}-P_N)\,m_{0,\perp}}_{(C)-(E)}
		+\underbrace{(P_n^{\omega}-P_n)(Y-m_0)}_{(D)}.
	\end{align}
	
	Now we use the decomposition \eqref{eq:GEC_decomposition_predictor_bases} to prove consistency. Later, this  decomposition will be also used for proving asymptotic normality in Lemma \ref{lem:lemma:cal-mean-normality}.
	
	\paragraph{Terms $(A)$ and $(B)$.} It is straightforward that terms (A) and (B) are $o_p(1)$: by the law of large numbers, \((P_N-P)m_0\to 0\) and \((P_n-P)(Y-m_0)\to 0\) under Assumption~A1.
	
	\paragraph{Term $(C)-(E)$.}
	Recall from \eqref{eq:simplified_C_E_term} that
	\[
	(C)-(E)=(P_n^{\omega}-P_N)m_{0,\perp}
	=\underbrace{(P_n^{\omega}-P_n)m_{0,\perp}}_{(\mathrm{i})}
	+\underbrace{(P_n-P)m_{0,\perp}}_{(\mathrm{ii})}
	-\underbrace{(P_N-P)m_{0,\perp}}_{(\mathrm{iii})}.
	\]
	
	Since $m_{0,\perp}:=m_0-\Pi_W m_0$ is the $L_2(P_X)$–orthogonal residual to
	$W=\mathrm{span}\{1,\widehat m^{(-)}\}$, we have $Pm_{0,\perp}=0$ (because $1\in W$; i.e.,
	$\langle m_{0,\perp},h\rangle_{L_2(P_X)}=\int m_{0,\perp}(x)h(x)\,dP_X(x)=0$ for all $h\in W$;
	thus, taking $h\equiv 1$ gives $\int m_{0,\perp}(x)\,dP_X(x)=0$). Let $\mathcal T$ be the sigma–field generated by the training procedure producing $\widehat m^{(-)}$.
	Then $W$, $\Pi_W m_0$, and $m_{0,\perp}$ are $\mathcal T$–measurable; conditionally on $\mathcal T$,
	$m_{0,\perp}\in L_2(P_X)$ is fixed with mean zero. Hence, by the (conditional) LLN/CLT,
	\[
	\mathrm{(ii)}=(P_n-P)m_{0,\perp}=O_p(n^{-1/2})
	\qquad\text{and}\qquad
	\mathrm{(iii)}=(P_N-P)m_{0,\perp}=O_p(N^{-1/2}),
	\]
	and these rates also hold unconditionally.
	
	By Cauchy–Schwarz with curvature weights,
	\begin{align}
		\label{eq:GEC_consistency_i_C_E}
		|\mathrm{(i)}|
		=\Bigg|\frac1n\sum_{j\in S}(\widehat \omega_j-d_j)\,m_{0,\perp}(X_j)\Bigg|
		\le
		\frac{1}{n}
		\Bigg(\sum_{j\in S} \tilde g'_j(\widehat\omega_j-d_j)^2\Bigg)^{1/2}
		\Bigg(\sum_{j\in S}\frac{m_{0,\perp}(X_j)^2}{\tilde g'_j}\Bigg)^{1/2},
	\end{align}
	where we used the weighted Cauchy–Schwarz inequality with weights
	$$\tilde g'_j := \tilde g'(d_j,\widehat \omega_j)
	:= \int_{0}^{1} g'\!\big(d_j + t(\widehat\omega_j-d_j)\big)\,dt 
	>0:$$
	\[
	\Big|\sum_j u_j v_j\Big|
	\le \Big(\sum_j w_j u_j^2\Big)^{1/2}\Big(\sum_j \tfrac{v_j^2}{w_j}\Big)^{1/2},
	\quad
	u_j=\omega_j-d_j,\ v_j=m_{0,\perp}(X_j), \, w_j = \tilde g'_j.
	\]
	
	Now note the orders of each factor of the upper bound in \eqref{eq:GEC_consistency_i_C_E}: 
	
	By Lemma~\ref{lem:curv_weighted_bd} and A2,
	\[
	\sum_{j\in S} \tilde g'_j(\widehat\omega_j-d_j)^2
	=\{D_G(\widehat\omega\Vert d)+D_G(d\Vert\widehat\omega)\}
	=O_p(1).
	\]
	
	By A1, $m_{0,\perp}\in L_2(P_X)$, so $\E[m_{0,\perp}(X)^2]<\infty$ and, conditionally on $\mathcal T$,
	\[
	\frac{1}{n}\sum_{j\in S} m_{0,\perp}(X_j)^2 \;\xrightarrow{p}\; \E[m_{0,\perp}(X)^2]
	\quad\Rightarrow\quad
	\frac{1}{n}\sum_{j\in S} m_{0,\perp}(X_j)^2 = O_p(1).
	\]
	
	Since $g'$ is continuous and the calibration solution lies in a feasible (hence tight/compact) region, $\max_{j\in S}\Big\{1/\tilde g'_j\Big\}=O_p(1).$ Therefore,
	\[
	\frac{1}{n}\sum_{j\in S}\frac{m_{0,\perp}(X_j)^2}{\tilde g'_j}
	\;\le\;
	\Big(\max_{j\in S}\frac{1}{\tilde g'_j}\Big)\,
	\frac{1}{n}\sum_{j\in S} m_{0,\perp}(X_j)^2
	\;=\; O_p(1)\cdot O_p(1) \;=\; O_p(1).
	\]
	
	Consequently,
	\[
	\Big(\sum_{j\in S}\frac{m_{0,\perp}(X_j)^2}{\tilde g'_j}\Big)^{1/2}
	=\; O_p(\sqrt{n}).
	\]
	
	Putting the pieces together,
	\[
	|\mathrm{(i)}|
	\le \frac{1}{n}\, O_p(1)\, O_p(\sqrt{n})
	=O_p(n^{-1/2}).
	\]
	
	Hence, with $(\mathrm{ii})=O_p(n^{-1/2})$ and $(\mathrm{iii})=O_p(N^{-1/2})$, we obtain
	\[
	(C)-(E)=o_p(1).
	\]
	
	\paragraph{Term $(D)$.}
	Recall that
	\[
	(D)=(P_n^{\omega}-P_n)(Y-m_0)
	=\frac1n\sum_{j\in S}(\widehat\omega_j-d_j)\,\{Y_j-m_0(X_j)\}
	=\frac1n\sum_{j\in S}(\widehat\omega_j-d_j)\,\varepsilon_j,
	\]
	where $\varepsilon_j:=Y_j-m_0(X_j)$ satisfies $\E[\varepsilon_j\mid X_j]=0$ and
	$\E[\varepsilon_j^2]<\infty$ (from A1/Lemma~\ref{thm:cfppi_mean_consistency}). Conditioning on training/calibration data (denoted as $\mathcal{G}$ for simplicity), so that $\widehat\omega_j$ is fixed and does not depend on its own label $Y_j$, we have
	\[
	\E\!\left[(D)\mid \mathcal{G},\{X_j\}_{j\in S}\right]
	=\frac1n\sum_{j\in S}(\widehat\omega_j-d_j)\,\E[\varepsilon_j\mid \mathcal{G},X_j]
	=\frac1n\sum_{j\in S}(\widehat\omega_j-d_j)\,\E[\varepsilon_j\mid X_j]=0.
	\]
	Thus, the marginal expectation of $(D)$ is zero as well (i.e., $\E\!\left[(D)\right]=0$) by the law of iterated expectations.
	
	By weighted Cauchy–Schwarz with positive weights $w_j=\tilde g'_j>0$,
	\begin{align}
		\label{eq:D_weighted_CS}
		|(D)|
		&=\Bigg|\frac1n\sum_{j\in S}(\widehat\omega_j-d_j)\,\varepsilon_j\Bigg|
		\le
		\frac{1}{n}
		\Bigg(\sum_{j\in S} \tilde g'_j(\widehat\omega_j-d_j)^2\Bigg)^{1/2}
		\Bigg(\sum_{j\in S}\frac{\varepsilon_j^2}{\tilde g'_j}\Bigg)^{1/2}.
	\end{align}
	
	By Lemma~\ref{lem:curv_weighted_bd} and A2 (weight stability),
	\[
	\sum_{j\in S} \tilde g'_j(\widehat\omega_j-d_j)^2
	=\{D_G(\widehat\omega\Vert d)+D_G(d\Vert\widehat\omega)\}=O_p(1),
	\]
	so the first square root in \eqref{eq:D_weighted_CS} is $O_p(1)$.
	
	Since $\E[\varepsilon^2]<\infty$, the LLN gives
	\[
	\frac1n\sum_{j\in S}\varepsilon_j^2=O_p(1).
	\]
	
	With $g'$ continuous and the calibrated solution lying in a feasible (hence tight/compact) region,
	\[
	\max_{j\in S}\Big\{\tfrac1{\tilde g'_j}\Big\}=O_p(1)
	\quad\Rightarrow\quad
	\frac1n\sum_{j\in S}\frac{\varepsilon_j^2}{\tilde g'_j}
	\le
	\Big(\max_{j\in S}\tfrac1{\tilde g'_j}\Big)\frac1n\sum_{j\in S}\varepsilon_j^2
	=O_p(1),
	\]
	so the second square root in \eqref{eq:D_weighted_CS} is $O_p(\sqrt{n})$.
	
	Therefore,
	\[
	|(D)| \;\le\; \frac1n\,O_p(1)\,O_p(\sqrt{n}) \;=\; O_p(n^{-1/2})
	\;=\; O_p(N^{-1/2}),
	\]
	since $n/N\to f\in(0,1)$. Hence the WC residual term $(D)$ is $o_p(1)$.
	
	\paragraph{Conclusion.}
	Collecting the bounds established above, the terms \((A)\), \((B)\), \((C)-(E)\), and \((D)\)
	in the decomposition \eqref{eq:GEC_decomposition_predictor_bases} are each \(o_p(1)\).
	Therefore,
	\[
	\widehat\theta_{\mathrm{MEC}}-\theta_0
	= (A)+(B)+\{(C)-(E)\}+(D)
	= o_p(1),
	\]
	so \(\widehat\theta_{\mathrm{MEC}} \xrightarrow{p} \theta_0\).
\end{proof}

\begin{lemma}[Regularity linking weight stability and small dual]\label{lem:regularity_condition_for_weight_calibration}
	Let $G$ be twice continuously differentiable. Assume that
	\[
	A_n=\frac{1}{n}\sum_{j\in S} q_j z_j z_j^\top \xrightarrow{p} \Sigma_q\succ0,
	\qquad q_j=\frac{1}{g'(d_j)},\quad z_j=h(X_j)=(1,m^{(-)}(X_j)).
	\]
	Let $\widehat\lambda$ be the dual optimizer of the calibration program and define
	$\widehat\omega_j=g^{-1}\!\big(g(d_j)+z_j^\top\widehat\lambda\big)$. Then
	\[
	\|\widehat\lambda\|=O_p(n^{-1/2})
	\qquad\Longrightarrow\qquad
	D_G(\widehat\omega\Vert d)+D_G(d\Vert\widehat\omega)=O_p(1).
	\]
\end{lemma}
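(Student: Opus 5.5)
The plan is to bypass Taylor expansions of $g^{-1}$ altogether. Instead we combine the symmetric Bregman identity from the proof of Lemma~\ref{lem:curv_weighted_bd} with the KKT stationarity condition. This gives an \emph{exact} expression for the symmetric divergence in terms of $\widehat\lambda$ and the calibration discrepancy. By \eqref{eq:symmBreg},
\[
D_G(\widehat\omega\Vert d)+D_G(d\Vert\widehat\omega)=\sum_{j\in S}(\widehat\omega_j-d_j)\bigl(g(\widehat\omega_j)-g(d_j)\bigr).
\]
Stationarity gives $g(\widehat\omega_j)-g(d_j)=z_j^\top\widehat\lambda$. Hence the right-hand side equals
\[
\widehat\lambda^\top\sum_{j\in S}(\widehat\omega_j-d_j)z_j=\widehat\lambda^\top\Bigl(\mu-\sum_{j\in S}d_jz_j\Bigr),
\]
where the last step uses the calibration constraint $Z^\top\widehat\omega=\mu$. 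Since $d_j=N/n$, the bracket is $N(P_N-P_n)h$, with $h=(1,\widehat m^{(-)})$ understood in its piecewise out-of-fold sense. By Cauchy--Schwarz,
\[
D_G(\widehat\omega\Vert d)+D_G(d\Vert\widehat\omega)\le N\,\|\widehat\lambda\|\,\bigl\|(P_N-P_n)h\bigr\|.
\]

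The second step shows $\|(P_N-P_n)h\|=O_p(N^{-1/2})$. The intercept coordinate is identically zero, since $P_N1=P_n1=1$. For the predictor coordinate, we condition on the training $\sigma$-field $\mathcal T$ exactly as in the proof of Lemma~\ref{thm:cfppi_mean_clt}, and split
\[
(P_N-P_n)\widehat m^{(-)}=(P_N-P)\widehat m^\star-\sum_{k=1}^K\tfrac{|S^{(k)}|}{n}\bigl(P_{S^{(k)}}-P\bigr)\widehat m^{(k)}+\Bigl(P\widehat m^\star-\sum_{k}\tfrac{|S^{(k)}|}{n}P\widehat m^{(k)}\Bigr).
\]
Lemma~\ref{lem:cf_external_sample_bound_n} applies to each fluctuation term, because the unlabeled sample and each evaluation fold are independent of the fitted function scoring them. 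It bounds them by $O_p(\|\widehat m^\star\|_{L_2}/\sqrt N)$ and $O_p(\|\widehat m^{(k)}\|_{L_2}/\sqrt{n})$, which are $O_p(N^{-1/2})$ by the moment condition A1 and $n/N\to f$.

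The third (deterministic-given-$\mathcal T$) term is where I expect the main obstacle. It vanishes when $\widehat m^\star$ is the fold-size-weighted average of the $\widehat m^{(k)}$ (in particular the plain average with equal folds). For a general aggregator, such as the full-sample fit, one needs the additional stability requirement $P(\widehat m^\star-\widehat m^{(k)})=O_p(N^{-1/2})$. I would state this explicitly as part of the standing cross-fitting setup, rather than try to derive it.

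Combining the two steps with the hypothesis $\|\widehat\lambda\|=O_p(n^{-1/2})$ gives
\[
D_G(\widehat\omega\Vert d)+D_G(d\Vert\widehat\omega)\le N\cdot O_p(n^{-1/2})\cdot O_p(N^{-1/2})=O_p\bigl(\sqrt{N/n}\bigr)=O_p(1).
\]
The assumption $A_n\to\Sigma_q\succ0$ is not needed for this direction. It can be remarked as the condition that makes the hypothesis on $\widehat\lambda$ itself plausible: by \eqref{eq:lin-constraint}, $\widehat\lambda\approx A_n^{-1}(P_N-P_n)h\cdot N/n$.
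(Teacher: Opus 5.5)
Your first step is correct, and it is a nicer identity than anything in the paper's proof. Combining \eqref{eq:symmBreg}, stationarity and the calibration constraint gives the exact, remainder-free formula $D_G(\widehat\omega\Vert d)+D_G(d\Vert\widehat\omega)=N\,\widehat\lambda^\top(P_N-P_n)h$.

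The gap is in the second step. Nothing in the lemma controls the raw imbalance $(P_N-P_n)h$. Your empirical-process bound for it closes only when $\widehat m^\star$ is the fold-size-weighted average of the $\widehat m^{(k)}$, or the plain average with fold sizes differing by at most one, where the third term is $O_p(n^{-1})$. For the full-sample fit, which the paper's setup explicitly permits, you need $P(\widehat m^\star-\widehat m^{(k)})=O_p(N^{-1/2})$. Declaring this a standing assumption means you prove a weaker lemma than the one stated.

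Separately, a single $\mathcal T$ generated by all fold fits is not independent of fold $k$'s covariates, because $\widehat m^{(k')}$ for $k'\neq k$ is trained on them. Apply Lemma~\ref{lem:cf_external_sample_bound_n} fold by fold, conditioning only on $\widehat m^{(k)}$.

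The extra assumption is unnecessary, and what makes it so is exactly the hypothesis you set aside. Your closing remark, $\widehat\lambda\approx\frac{N}{n}A_n^{-1}(P_N-P_n)h$, can be read backwards. With $A_n\to\Sigma_q$, the hypothesis $\|\widehat\lambda\|=O_p(n^{-1/2})$ already forces the imbalance to be $O_p(n^{-1/2})$, whatever the aggregator. The paper uses this by linearizing $\widehat\omega_j-d_j=q_jz_j^\top\widehat\lambda+r_j(\widehat\lambda)$ and bounding the symmetric divergence by $c_1n\,\widehat\lambda^\top A_n\widehat\lambda+c_2n\|\widehat\lambda\|^4$, with no reference to $\widehat m^\star$.

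You can keep your Taylor-free approach instead. Stationarity gives $\tilde g_j'(\widehat\omega_j-d_j)=g(\widehat\omega_j)-g(d_j)=z_j^\top\widehat\lambda$. Since $q_j\equiv 1/g'(N/n)$, Lemma~\ref{lem:curv_weighted_bd} then gives
\[
D_G(\widehat\omega\Vert d)+D_G(d\Vert\widehat\omega)=\sum_{j\in S}\frac{(z_j^\top\widehat\lambda)^2}{\tilde g_j'}\le\Big(\max_{j\in S}\frac{1}{\tilde g_j'}\Big)\,g'(N/n)\,n\,\widehat\lambda^\top A_n\widehat\lambda .
\]
This is $O_p(1)$ under the curvature bound $\max_j 1/\tilde g_j'=O_p(1)$, which the paper already uses in the proof of Lemma~\ref{lemma:cal-mean-consistency}.

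The trade-off is then clear. Your identity needs no condition on $g'$, but it needs control of the raw basis imbalance, and hence of how $\widehat m^\star$ is built. The Gram-based bound needs curvature control of $g'$ near $d$, but it does not care about the aggregator.
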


\begin{proof}
	By the symmetric–Bregman identity (Lemma~\ref{lem:curv_weighted_bd}),
	\[
	D_G(\omega\Vert d)+D_G(d\Vert\omega)=\sum_{j\in S}\tilde g'_j\,(\omega_j-d_j)^2,
	\quad
	\tilde g'_j=\int_0^1 g'(d_j+t(\omega_j-d_j))\,dt>0.
	\]
	From the KKT relation $g(\widehat\omega_j)=g(d_j)+z_j^{\top}\widehat\lambda$ and a second-order
	Taylor expansion of $g^{-1}$ at $g(d_j)$,
	\begin{equation}\label{eq:taylor_expansion}
		\widehat\omega_j-d_j \;=\; q_j\,z_j^{\top}\widehat\lambda \;+\; r_j(\widehat\lambda),
		\qquad
		q_j:=\frac{1}{g'(d_j)},\qquad |r_j(\widehat\lambda)|\le C\,\|\widehat\lambda\|^2,
	\end{equation}
	where $C<\infty$ by local boundedness of $(g^{-1})''$. The boundedness of $g'$ implies
	$q_j$ and $\tilde g'_j$ are $O_p(1)$.
	
	Using $(a+b)^2\le 2a^2+2b^2$ and the boundedness of $q_j$ and $\tilde g'_j$, we obtain
	\begin{align*}
		D_G(\widehat\omega\Vert d)+D_G(d\Vert\widehat\omega)
		&=\sum_{j\in S}\tilde g'_j\bigl(q_j\,z_j^{\top}\widehat\lambda + r_j(\widehat\lambda)\bigr)^2 \le 2\sum_{j\in S}\tilde g'_j\,q_j^2\,(z_j^{\top}\widehat\lambda)^2
		\;+\; 2\sum_{j\in S}\tilde g'_j\,r_j(\widehat\lambda)^2 \\
		&\le c_1\sum_{j\in S} q_j\,(z_j^{\top}\widehat\lambda)^2
		\;+\; c_2\,n\,\|\widehat\lambda\|^4= c_1\,\widehat\lambda^{\top}\!\Big(\sum_{j\in S} q_j z_j z_j^{\top}\Big)\widehat\lambda
		\;+\; c_2\,n\,\|\widehat\lambda\|^4 \\
		&= c_1\,n\,\widehat\lambda^{\top}\!\Big(\tfrac1n\sum_{j\in S} q_j z_j z_j^{\top}\Big)\widehat\lambda
		\;+\; c_2\,n\,\|\widehat\lambda\|^4, \tag{$\star$}
	\end{align*}
	for some finite constants $c_1,c_2>0$. Since $A_n:=n^{-1}\sum_{j\in S} q_j z_j z_j^{\top}\xrightarrow{p}\Sigma_q\succ0$,
	the first term in $(\star)$ is $n\,O_p(1)\,\|\widehat\lambda\|^2$. If $\|\widehat\lambda\|=O_p(n^{-1/2})$,
	then $n\|\widehat\lambda\|^2=O_p(1)$ and $n\|\widehat\lambda\|^4=O_p(n^{-1})=o_p(1)$, hence
	\[
	D_G(\widehat\omega\Vert d)+D_G(d\Vert\widehat\omega)=O_p(1).
	\]
\end{proof}

\begin{lemma}[Asymptotic normality of the MEC estimator]\label{lem:lemma:cal-mean-normality}
	Assume the conditions of Lemma ~\ref{thm:cfppi_mean_consistency} and Lemma \ref{lemma:cal-mean-consistency}. Consider the MEC estimator
	\[
	\widehat\theta_{\mathrm{MEC}}
	=\frac{1}{N}\sum_{j\in S}\widehat\omega_j\,Y_j.
	\]
	
	\medskip
	\noindent\textbf{Assumptions.}
	\begin{itemize}[label={}, leftmargin=0pt]
		\item[]A1 Moments:. $\E\|h(X)\|^2<\infty$ (equivalently, $\E[\widehat m^{(-)}(X)^2]<\infty$).
		\item[]A2 Weights stability: The calibrated weights satisfy the symmetric Bregman bound
		\[
		D_G(\widehat\omega\Vert d)\;+\;D_G(d\Vert\widehat\omega)\;=\;O_p(1).
		\]
		\item[]A3 Weighted Gram limit: With
		\[
		A_n:=\frac{1}{n}\sum_{j\in S} q_j\, z_j z_j^\top,
		\qquad q_j:=\frac{1}{g'(d_j)},
		\]
		one has $A_n \stackrel{P}{\to} \Sigma_q \succ 0$.
		
		\item[]A4 Small dual. Let $\widehat\lambda$ be the dual optimizer of the
		calibration program with $\|\widehat\lambda\|=O_p(n^{-1/2})$.
	\end{itemize}
	Then: \\
	If A1--A4 hold and the projection error is $L_2(P_X)$-consistent,
	\[
	\|m_{0,\perp}\|_{L_2(P_X)}=\|m_0-\Pi_W m_0\|_{L_2(P_X)}=o_P(1),
	\]
	then
	\[
	\sqrt{N}\,\big(\widehat\theta_{\mathrm{MEC}}-\theta_0\big)
	\;\xrightarrow{d}\;
	\mathcal{N}\!\Big(0,\ \sigma_f^2\Big),
	\qquad
	\sigma_f^2=\Var\!\big(m_0(X)\big)+\frac{1}{f}\,\Var\!\big(Y-m_0(X)\big).
	\]
	
	\emph{(Remark on Assumption A2. Assumption A2 follows directly from A3 and A4 (Lemma \ref{lem:regularity_condition_for_weight_calibration}). Hence, A2 need not be stated separately; we retain it for readability.)}
\end{lemma}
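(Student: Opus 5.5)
The plan is to reuse the exact algebraic decomposition \eqref{eq:GEC_decomposition_predictor_bases} from the proof of Lemma~\ref{lemma:cal-mean-consistency}, multiplied by $\sqrt N$:
\[
\sqrt N(\widehat\theta_{\mathrm{MEC}}-\theta_0)=\sqrt N(A)+\sqrt N(B)+\sqrt N\{(C)-(E)\}+\sqrt N(D).
\]
The leading terms $\sqrt N(A)+\sqrt N(B)$ are handled exactly as in Lemma~\ref{thm:cfppi_mean_clt}. By the i.i.d.\ CLT applied separately to the independent unlabeled and labeled samples, together with $N/n\to 1/f$, they converge jointly to $\mathcal N(0,\Var\{m_0(X)\}+f^{-1}\Var\{Y-m_0(X)\})$. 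By Slutsky's theorem, it then suffices to show that $\sqrt N\{(C)-(E)\}=o_p(1)$ and $\sqrt N(D)=o_p(1)$.

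\textbf{The term $(C)-(E)$.} Split it as $(\mathrm i)+(\mathrm{ii})-(\mathrm{iii})$, as in the consistency proof. For $(\mathrm{ii})=(P_n-P)m_{0,\perp}$ and $(\mathrm{iii})=(P_N-P)m_{0,\perp}$, note that $m_{0,\perp}$ is $\mathcal T$-measurable and that the evaluation samples are independent of $\mathcal T$ by cross-fitting. Lemma~\ref{lem:cf_external_sample_bound_n} therefore gives $O_p(\|m_{0,\perp}\|_{L_2(P_X)}/\sqrt n)$, hence $o_p(N^{-1/2})$ under the projection-error consistency. For $(\mathrm i)=(P_n^{\omega}-P_n)m_{0,\perp}$, the crude weighted Cauchy--Schwarz bound from the consistency proof only yields $O_p(n^{-1/2})$, which is not enough. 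Instead, use the linearization \eqref{eq:taylor_expansion}, $\widehat\omega_j-d_j=q_jz_j^\top\widehat\lambda+r_j(\widehat\lambda)$, which gives
\[
(\mathrm i)=\frac1N\sum_{j\in S}q_jz_j^\top\widehat\lambda\,m_{0,\perp}(X_j)+\frac1N\sum_{j\in S}r_j(\widehat\lambda)m_{0,\perp}(X_j).
\]
Since $q_j\equiv 1/g'(N/n)$ is constant, the first piece equals $c\,\widehat\lambda^\top P_n(h\,m_{0,\perp})$. Moreover $P(h\,m_{0,\perp})=0$, because $m_{0,\perp}\perp W=\mathrm{span}\{h\}$. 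Applying Lemma~\ref{lem:cf_external_sample_bound_n} conditionally, $P_n(h\,m_{0,\perp})=O_p(n^{-1/2})$ up to moment factors from A1, so by A4 the first piece is $O_p(n^{-1})$. The Taylor remainder piece is bounded by $C\|\widehat\lambda\|^2P_n|m_{0,\perp}|=O_p(n^{-1})$. Both are $o_p(N^{-1/2})$.

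\textbf{The term $(D)$.} This is where the main care is needed, and I would treat it by the same linearization:
\[
\sqrt N(D)=\sqrt N\,c\,\widehat\lambda^\top\frac1n\sum_{j\in S}z_j\varepsilon_j+\sqrt N\frac1n\sum_{j\in S} r_j(\widehat\lambda)\varepsilon_j.
\]
The remainder piece is $O_p(\sqrt N\, n^{-1})=o_p(1)$ by A4 and the LLN applied to $|\varepsilon_j|$. For the leading piece, $z_j=(1,\widehat m^{(\kappa(j))}(X_j))$ does not involve $Y_j$ (the double label-reuse avoidance). Hence, conditionally on $\mathcal T$ and $\{X_j\}$, $\frac1n\sum_j z_j\varepsilon_j$ is a mean-zero average of independent terms with conditional second moment $\lesssim n^{-1}P_n(\|h\|^2\sigma^2(X))$. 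By conditional Chebyshev this is $O_p(n^{-1/2})$ under A1 and a bounded conditional variance. Combined with $\|\widehat\lambda\|=O_p(n^{-1/2})$, the leading piece is $O_p(\sqrt N\,n^{-1})=o_p(1)$.

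\textbf{Main obstacle.} The subtle point is that $\widehat\lambda$ depends on all labeled $X_j$, so $\widehat\lambda$ and $\sum_j z_j\varepsilon_j$ are not independent. I would avoid needing independence by bounding the product of the two factors separately, each being $O_p(n^{-1/2})$, rather than by computing any conditional expectation of the product. The calibration constraints involve only covariates, so $\widehat\lambda$ is measurable with respect to $(\mathcal T,\{X_i\})$. This makes the conditional argument legitimate even though it is unnecessary for the rate.

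Finally, A2 is not used separately: by Lemma~\ref{lem:regularity_condition_for_weight_calibration} it follows from A3 and A4. A3 guarantees that the linearization is well posed, and that the same $(D)$ and $(C)-(E)$ bounds hold fold by fold, uniformly over the finitely many folds $K$.
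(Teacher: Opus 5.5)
Your argument is correct at the paper's level of rigor. Except for one step, it coincides with the paper's proof: the same $\sqrt N$-scaled decomposition \eqref{eq:GEC_decomposition_predictor_bases}, the CLT for $(A)+(B)$, and Lemma~\ref{lem:cf_external_sample_bound_n} for $(\mathrm{ii})$ and $(\mathrm{iii})$. For $(D)$ you use the same split of $\widehat\omega_j-d_j=q_jz_j^\top\widehat\lambda+r_j$ into a linear piece, which is a product of two $O_p(n^{-1/2})$ factors (the paper uses a multivariate CLT where you use conditional Chebyshev), and a quadratic remainder.

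The departure is in $(\mathrm i)$, and your stated reason for it is mistaken. The weighted Cauchy--Schwarz bound is not stuck at $O_p(n^{-1/2})$ under this lemma's hypotheses. Its second factor satisfies $\big(\sum_{j\in S} m_{0,\perp}(X_j)^2/\tilde g_j'\big)^{1/2}=O_p\big(\sqrt n\,\|m_{0,\perp}\|_{L_2(P_X)}\big)$. Together with A2 this gives $(\mathrm i)=O_p\big(n^{-1/2}\|m_{0,\perp}\|_{L_2(P_X)}\big)=o_p(N^{-1/2})$, which is exactly how the paper closes $(\mathrm i)$. The bound was only $O_p(n^{-1/2})$ in the consistency proof because there $\|m_{0,\perp}\|_{L_2(P_X)}$ was merely $O_p(1)$.

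Your linearization-plus-orthogonality route also works. It gives the sharper $O_p(n^{-1})$ and avoids A2 (using A4 instead), but it costs more:
\begin{itemize}
\item The rate $P_n(h\,m_{0,\perp})=O_p(n^{-1/2})$ needs $h\,m_{0,\perp}\in L_2(P_X)$, essentially fourth moments of $\widehat m^{(-)}$ and $m_0$. A1 does not supply this.
\item The identity $P(h\,m_{0,\perp})=0$ is exact only if the labeled basis values $z_j$ come from the same function that spans $W$. With fold-specific $\widehat m^{(\kappa(j))}$, which is your own fold-by-fold reading, it holds only approximately. You then fall back to a Cauchy--Schwarz bound of order $n^{-1/2}\|m_{0,\perp}\|_{L_2(P_X)}$, i.e.\ the paper's bound.
\end{itemize}

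Relatedly, in $(D)$, conditioning on all of $\mathcal T$ does not leave $\varepsilon_j$ conditionally mean zero, because fits from the other folds (and possibly $\widehat m^\star$) were trained using $Y_j$. The conditional Chebyshev step therefore has to be carried out fold by fold, conditioning on $\widehat m^{(k)}$ and $\{X_j\}_{j\in S_k}$. You only indicate this in your last sentence.
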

\begin{proof} We start from the decomposition used for proof of consistency in Lemma \ref{lemma:cal-mean-consistency}:
	\begin{align}
		\label{eq:GEC_decomposition_predictor_bases_for_normal_proof}
		\widehat\theta_{\mathrm{MEC}}-\theta_0
		&=\underbrace{(P_N-P)m_0}_{(A)}
		+\underbrace{(P_n-P)(Y-m_0)}_{(B)}
		+\underbrace{(P_n^{\omega}-P_N)\,m_{0,\perp}}_{(C)-(E)}
		+\underbrace{(P_n^{\omega}-P_n)(Y-m_0)}_{(D)}.
	\end{align}
	Multiplying \(\sqrt{N}\) on both sides of
	\eqref{eq:GEC_decomposition_predictor_bases_for_normal_proof} yields
	\begin{align}
		\sqrt{N}\,\big(\widehat\theta_{\mathrm{MEC}}-\theta_0\big)
		&= \underbrace{\sqrt{N}\,(P_N-P)m_0}_{(I)\,=\, \sqrt{N}(A)}
		\;+\; \underbrace{\sqrt{N}\,(P_n-P)(Y-m_0)}_{(II)\,=\,\sqrt{N}(B)}  \nonumber\\
		&\quad\;+\;\underbrace{\sqrt{N}\,(P_n^{\omega}-P_N)\,m_{0,\perp}}_{(III)\,=\,\sqrt{N}\{(C)-(E)\}}
		\;+\;\underbrace{\sqrt{N}\,(P_n^{\omega}-P_n)(Y-m_0)}_{(IV)\,=\,\sqrt{N}(D)}.
		\label{eq:normal_rootN_decomp}
	\end{align}
	
	\paragraph{Terms \((I)\) and \((II)\).}
	By the CLT for the unlabeled and labeled samples with \(n/N\to f\in(0,1)\),
	\[
	\sqrt{N}\,(P_N-P)m_0 \ \xrightarrow{d}\ \mathcal N\!\big(0,\Var\{m_0(X)\}\big),
	\qquad
	\sqrt{N}\,(P_n-P)(Y-m_0) \ \xrightarrow{d}\ \mathcal N\!\Big(0,\tfrac{1}{f}\Var\{\varepsilon\}\Big),
	\]
	where \(\varepsilon:=Y-m_0(X)\) with \(\E[\varepsilon\mid X]=0\).
	Moreover, the two terms are jointly asymptotically normal and
	\(\Cov\!\big(m_0(X),\varepsilon\big)=0\), so their joint limit has zero covariance.
	Hence,
	\[
	\sqrt{N}\,(P_N-P)m_0\;+\;\sqrt{N}\,(P_n-P)(Y-m_0)
	\ \xrightarrow{d}\
	\mathcal N\!\left(0,\ \Var\{m_0(X)\}+\tfrac{1}{f}\Var\{Y-m_0(X)\}\right).
	\]

	\paragraph{Term \((III)\).}
	Recall \((III)=\sqrt{N}\{(C)-(E)\}=\sqrt{N}\,(P_n^{\omega}-P_N)m_{0,\perp}\) with
	\(m_{0,\perp}:=m_0-\Pi_W m_0\) and \(W=\mathrm{span}\{1,\widehat m^{(-)}\}\).
	We use the same splitting as in the consistency proof:
	\[
	(C)-(E) =
	(P_n^{\omega}-P_N)m_{0,\perp}
	=\underbrace{(P_n^{\omega}-P_n)m_{0,\perp}}_{(\mathrm{i})}
	+\underbrace{(P_n-P)m_{0,\perp}}_{(\mathrm{ii})}
	-\underbrace{(P_N-P)m_{0,\perp}}_{(\mathrm{iii})}.
	\]
	
	Let $\mathcal T$ be the $\sigma$-field generated by the training procedure that produces
	$\widehat m^{(-)}$. Then $W=\mathrm{span}\{1,\widehat m^{(-)}\}$, $\Pi_W m_0$, and
	$m_{0,\perp}:=m_0-\Pi_W m_0$ are $\mathcal T$-measurable. Because $1\in W$ and
	$m_{0,\perp}\perp W$ in $L_2(P_X)$, we have $\E[m_{0,\perp}(X)]=P\,m_{0,\perp}=0$.
	
	We assume the \emph{projection error} is small:
	\[
	\|m_{0,\perp}\|_{L_2(P_X)}=\|m_0-\Pi_W m_0\|_{L_2(P_X)}=o_p(1),
	\]
	which is weaker than requiring $\|\widehat m^{(-)}-m_0\|_{L_2(P_X)}=o_p(1)$.
	Indeed, since $\Pi_W m_0$ is the $L_2(P_X)$-projection of $m_0$ onto $W$,
	\[
	\|m_0-\Pi_W m_0\|_{L_2(P_X)}
	=\inf_{a,b\in\mathbb R}\|m_0-(a+b\,\widehat m^{(-)})\|_{L_2(P_X)}
	\le \|m_0-\widehat m^{(-)}\|_{L_2(P_X)}.
	\]
	
	Thus $L_2$-consistency of $\widehat m^{(-)}$ is sufficient (but not necessary) for the
	projection error to vanish.
	
	\medskip
	\emph{Control of \((\mathrm{ii})\) and \((\mathrm{iii})\).}
	Conditionally on \(\mathcal T\), the function \(m_{0,\perp}\) is fixed, mean–zero, and
	square–integrable. By the cross-fitted empirical-process bound
	(Lemma~\ref{lem:cf_external_sample_bound_n}; applicable under \(n/N\to f\in(0,1)\)),
	\begin{align*}
		(\mathrm{ii})=(P_n-P)m_{0,\perp}
		&=O_p\!\Big(\tfrac{\|m_{0,\perp}\|_{L_2(P_X)}}{\sqrt{n}}\Big),\\
		(\mathrm{iii})=(P_N-P)m_{0,\perp}
		&=O_p\!\Big(\tfrac{\|m_{0,\perp}\|_{L_2(P_X)}}{\sqrt{N}}\Big),
	\end{align*}
	conditionally on \(\mathcal T\), and hence also unconditionally by iterated expectation.
	Multiplying by \(\sqrt{N}\) gives
	\[
	\sqrt{N}\,(\mathrm{ii})=O_p\!\Big(\sqrt{\tfrac{N}{n}}\;\|m_{0,\perp}\|_{L_2(P_X)}\Big)=o_p(1),
	\qquad
	\sqrt{N}\,(\mathrm{iii})=O_p\!\big(\|m_{0,\perp}\|_{L_2(P_X)}\big)=o_p(1),
	\]
	since \(\|m_{0,\perp}\|_{L_2(P_X)}=o_p(1)\) and \(N/n=O(1)\).
	
	\medskip
	\emph{Control of \((\mathrm{i})\).}
	Apply the weighted Cauchy–Schwarz inequality with weights \(w_j=\tilde g'_j>0\) (defined in the proof of Lemma \ref{lemma:cal-mean-consistency}):
	\begin{align}
		\label{eq:III_i_bound}
		|(\mathrm{i})|
		=\Bigg|\frac1n\sum_{j\in S}(\widehat\omega_j-d_j)\,m_{0,\perp}(X_j)\Bigg|
		\le
		\frac{1}{n}
		\underbrace{\Bigg(\sum_{j\in S} \tilde g'_j(\widehat\omega_j-d_j)^2\Bigg)^{1/2}}_{\star}
		\underbrace{\Bigg(\sum_{j\in S}\frac{m_{0,\perp}(X_j)^2}{\tilde g'_j}\Bigg)^{1/2}}_{\ast}.
	\end{align}
	
	By Lemma~\ref{lem:curv_weighted_bd} and A2 (weight stability),
	\[
	\sum_{j\in S} \tilde g'_j(\widehat\omega_j-d_j)^2=\{D_G(\widehat\omega\Vert d)+D_G(d\Vert\widehat\omega)\}=O_p(1),
	\]
	so the first square root ($\star$) in \eqref{eq:III_i_bound} is \(O_p(1)\).
	
	For the second square root ($\ast$), use
	\[
	\frac{1}{n}\sum_{j\in S}\frac{m_{0,\perp}(X_j)^2}{\tilde g'_j}
	\le \Big(\max_{j\in S}\tfrac{1}{\tilde g'_j}\Big)\,
	\frac{1}{n}\sum_{j\in S} m_{0,\perp}(X_j)^2 .
	\]
	
	Because \(g'\) is continuous and the calibrated solution lies in a feasible (hence tight/compact)
	region, \(\max_{j\in S}\{1/\tilde g'_j\}=O_p(1)\). Conditionally on \(\mathcal T\),
	a (conditional) LLN yields
	\[
	\frac{1}{n}\sum_{j\in S} m_{0,\perp}(X_j)^2
	= \|m_{0,\perp}\|_{L_2(P_X)}^2 + o_p(1),
	\]
	so
	\[
	\frac{1}{n}\sum_{j\in S}\frac{m_{0,\perp}(X_j)^2}{\tilde g'_j}
	= O_p\!\big(\|m_{0,\perp}\|_{L_2(P_X)}^2\big).
	\]
	
	Hence
	\[
	\Bigg(\sum_{j\in S}\frac{m_{0,\perp}(X_j)^2}{\tilde g'_j}\Bigg)^{1/2}
	= O_p\!\big(\sqrt{n}\,\|m_{0,\perp}\|_{L_2(P_X)}\big).
	\]
	
	Plugging into \eqref{eq:III_i_bound} gives
	\[
	|(\mathrm{i})|
	\le \frac{1}{n}\,O_p(1)\cdot O_p\!\big(\sqrt{n}\,\|m_{0,\perp}\|_{L_2(P_X)}\big)
	= O_p\!\big(n^{-1/2}\,\|m_{0,\perp}\|_{L_2(P_X)}\big),
	\]
	and therefore
	\[
	\sqrt{N}\,(\mathrm{i})
	= O_p\!\Big(\sqrt{\tfrac{N}{n}}\;\|m_{0,\perp}\|_{L_2(P_X)}\Big)
	= o_p(1),
	\]
	since \(N/n=O(1)\) and \(\|m_{0,\perp}\|_{L_2(P_X)}=o_p(1)\).
	
	\medskip
	\emph{Putting the pieces together.}
	We have shown \(\sqrt{N}\,(\mathrm{i})=o_p(1)\), \(\sqrt{N}\,(\mathrm{ii})=o_p(1)\), and
	\(\sqrt{N}\,(\mathrm{iii})=o_p(1)\); hence
	\[
	(III)=\sqrt{N}\{(C)-(E)\}=\sqrt{N}\,(P_n^{\omega}-P_N)m_{0,\perp}
	=\sqrt{N}\big\{(\mathrm{i})+(\mathrm{ii})-(\mathrm{iii})\big\}
	=o_p(1).
	\]

	\paragraph{Term \((IV)\).}
	Recall
	\begin{align*}
		(IV)&=\sqrt{N}(D)=\sqrt{N}\,(P_n^{\omega}-P_n)(Y-m_0)=\frac{\sqrt{N}}{N}\sum_{j\in S}(\widehat\omega_j-d_j)\,\varepsilon_j\\
		&=\frac{1}{\sqrt{N}}\sum_{j\in S}(\widehat\omega_j-d_j)\,\varepsilon_j,
		\qquad \varepsilon_j:=Y_j-m_0(X_j).    
	\end{align*}
	
	\emph{KKT linearization of the weights.}
	From the calibration KKT conditions,
	\(g(\widehat\omega_j)=g(d_j)+z_j^{\!\top}\widehat\lambda\) with
	\(z_j:=h(X_j)=(1,\widehat m^{(-)}(X_j))^{\!\top}\) and dual vector
	\(\widehat\lambda\in\mathbb{R}^2\).
	Since \(g^{-1}\) is \(C^2\) and strictly increasing, a Taylor expansion of \(g^{-1}\) at
	\(g(d_j)\) yields, for some \(\xi_j\) between \(g(d_j)\) and \(g(d_j)+z_j^{\!\top}\widehat\lambda\),
	\[
	\widehat\omega_j-d_j
	= q_j\,z_j^{\!\top}\widehat\lambda \;+\; r_{jN},\qquad
	q_j:=\frac{1}{g'(d_j)},\qquad
	r_{jN}=\frac{1}{2}(z_j^{\!\top}\widehat\lambda)^2\,(g^{-1})''(\xi_j).
	\]
	With \((g^{-1})''\) locally bounded and \(\E\|z\|^2<\infty\) (A1), we have the uniform bound
	\[
	|r_{jN}|\;\le\; C\,(z_j^{\!\top}\widehat\lambda)^2 \;=\; O_p\!\big(\|\widehat\lambda\|^2\big)
	\qquad\text{(uniformly in $j$)}.
	\]
	Under A4 (\(\|\widehat\lambda\|=O_p(n^{-1/2})\)), it follows that
	\(r_{jN}=o_p(\|\widehat\lambda\|)\) since $|r_{jN}|/\|\widehat\lambda\| \leq C\,(z_j^{\!\top}\widehat\lambda)^2 /\|\widehat\lambda\| \leq  C \|z_j\|^2\|\widehat\lambda\|^2/\|\widehat\lambda\| = 
	C \|z_j\|^2\|\widehat\lambda\| \xrightarrow{p} 0$.
	
	\medskip
	\emph{Decomposition.}
	Thus, term $(IV)$ can be decomposed as
	\begin{align*}
		(IV)
		&=
		\frac{1}{\sqrt{N}}\sum_{j\in S}(\widehat\omega_j-d_j)\,\varepsilon_j=
		\frac{1}{\sqrt{N}}\sum_{j\in S}(q_j\,z_j^{\!\top}\widehat\lambda \;+\; r_{jN})\,\varepsilon_j\\
		&=\underbrace{\frac{1}{\sqrt{N}}\sum_{j\in S} q_j (z_j^{\!\top}\widehat \lambda)\,\varepsilon_j}_{T_{1N}}
		\;+\; \underbrace{\frac{1}{\sqrt{N}}\sum_{j\in S} r_{jN}\,\varepsilon_j}_{T_{2N}}.
	\end{align*}
	
	\emph{Control of \(T_{1N}\) (linear term).}
	Rewrite
	\[
	T_{1N}
	= \sqrt{\frac{n}{N}}\;\widehat\lambda^{\!\top}
	\Bigg\{\frac{1}{\sqrt{n}}\sum_{j\in S} q_j z_j\,\varepsilon_j \Bigg\}.
	\]
	
	By A1, we have \(\E\|z\|^2=\E\|h(X)\|^2<\infty\); from the standing conditions we also have \(\E[\varepsilon\mid X]=0\) and
	\(\E[\varepsilon^2]<\infty\).
	Under these moment conditions and the weighted Gram limit (A3),
	\[
	A_n:=\frac{1}{n}\sum_{j\in S} q_j z_j z_j^{\!\top}\ \xrightarrow{p}\ \Sigma_q\succ0,
	\]
	a multivariate CLT yields
	\[
	\frac{1}{\sqrt{n}}\sum_{j\in S} q_j z_j\,\varepsilon_j
	\ \xrightarrow{d}\ \mathcal N\!\big(0,\ \Var(\varepsilon)\,\Sigma_q\big)
	\quad\Rightarrow\quad
	\Big\|\frac{1}{\sqrt{n}}\sum_{j\in S} q_j z_j\,\varepsilon_j\Big\|=O_p(1).
	\]
	
	With the small–dual assumption \(\|\widehat\lambda\|=O_p(n^{-1/2})\) (A4) and \(n/N\to f\in(0,1)\),
	\[
	T_{1N}
	= \sqrt{\tfrac{n}{N}}\;\|\widehat\lambda\|\;O_p(1)
	= O_p(n^{-1/2})
	= o_p(1).
	\]
	
	\emph{Control of \(T_{2N}\) (remainder).}
	From the Taylor remainder and local boundedness of \((g^{-1})''\),
	\(|r_{jN}|\le C\,(z_j^{\!\top}\widehat\lambda)^2\) for some constant \(C<\infty\).
	By Cauchy–Schwarz,
	\[
	|T_{2N}|
	=\Bigg|\frac{1}{\sqrt{N}}\sum_{j\in S} r_{jN}\,\varepsilon_j\Bigg|
	\le \frac{C}{\sqrt{N}}
	\Bigg(\sum_{j\in S} (z_j^{\!\top}\widehat\lambda)^4\Bigg)^{1/2}
	\Bigg(\sum_{j\in S}\varepsilon_j^2\Bigg)^{1/2}.
	\]
	
	We now bound the two factors. Write \(u:=\widehat\lambda/\|\widehat\lambda\|\) (when
	\(\widehat\lambda\neq 0\)); then
	\[
	(z_j^{\!\top}\widehat\lambda)^4
	= \big(z_j^{\!\top}(\|\widehat\lambda\|u)\big)^4
	= \big(\|\widehat\lambda\|\;z_j^{\!\top}u\big)^4
	= \|\widehat\lambda\|^4(z_j^{\!\top}u)^4.
	\]
	
	We also have 
	\[
	\E[(z^{\!\top}u)^4]
	\;\le\; \E\big[\|z\|^4\,\|u\|^4\big]
	\;=\; \E\|z\|^4
	\;<\;\infty,
	\]
	since by Cauchy–Schwarz, \(|z^{\!\top}u|\le \|z\|\,\|u\|\) and \(\|u\|=1\) by \(\E\|z\|^4<\infty\) (A1 strengthened to fourth moments). Thus, by the LLN,
	\[
	\frac{1}{n}\sum_{j\in S}(z_j^{\!\top}\widehat\lambda)^4
	=\|\widehat\lambda\|^4\cdot \frac{1}{n}\sum_{j\in S}(z_j^{\!\top}u)^4
	= \|\widehat\lambda\|^4\,\big\{\E[(z^{\!\top}u)^4]+o_p(1)\big\}
	= O_p\!\big(\|\widehat\lambda\|^4\big).
	\]
	
	Therefore
	\(\sum_{j\in S}(z_j^{\!\top}\widehat\lambda)^4 = n\,O_p(\|\widehat\lambda\|^4)\).
	Similarly, \(\sum_{j\in S}\varepsilon_j^2 = n\,\E[\varepsilon^2]+o_p(n)=O_p(n)\) since
	\(\E[\varepsilon^2]<\infty\).

	Putting the bounds together, we have
	\[
	|T_{2N}|
	\;\le\; \frac{C}{\sqrt{N}}
	\Big(n\,O_p(\|\widehat\lambda\|^4)\Big)^{1/2}
	\Big(n\,O_p(1)\Big)^{1/2}
	= \frac{C}{\sqrt{N}}\;\big(\sqrt{n}\,O_p(\|\widehat\lambda\|^2)\big)\,\big(\sqrt{n}\,O_p(1)\big).
	\]
	Thus,
	\[
	|T_{2N}|
	= \frac{C}{\sqrt{N}}\; n\, O_p(\|\widehat\lambda\|^2)
	= \sqrt{n}\,\sqrt{\frac{n}{N}}\; O_p(\|\widehat\lambda\|^2).
	\]
	
	Since \(n/N\to f\in(0,1)\), we have \(\sqrt{n/N}=\sqrt{f}+o(1)\), which can be absorbed into the
	\(O_p(\cdot)\) term; hence
	\[
	|T_{2N}| \;=\; \sqrt{n}\,O_p(\|\widehat\lambda\|^2).
	\]
	
	Under A4, \(\|\widehat\lambda\|=O_p(n^{-1/2})\), so \(\|\widehat\lambda\|^2=O_p(n^{-1})\), and
	\[
	|T_{2N}|\;=\;\sqrt{n}\,O_p(n^{-1})
	\;=\;O_p(n^{-1/2})
	\;=\;o_p(1).
	\]
	
	\medskip
	\emph{Conclusion for $(IV)$.}
	Both pieces vanish:
	\[
	(IV)=\sqrt{N}(D) =T_{1N}+T_{2N}=o_p(1).
	\]
	
	\paragraph{Conclusion.}
	Collecting the bounds established above, we have from
	\eqref{eq:normal_rootN_decomp} that
	\[
	\sqrt{N}\,\big(\widehat\theta_{\mathrm{MEC}}-\theta_0\big)
	=\underbrace{\sqrt{N}(P_N-P)m_0 + \sqrt{N}(P_n-P)(Y-m_0)}_{\xrightarrow{d}\ 
		\mathcal N\!\left(0,\ \Var\{m_0(X)\}+\tfrac{1}{f} \Var\!\big(Y-m_0(X)\big)\right)}
	\;+\;\underbrace{\sqrt{N}\{(C)-(E)\} + \sqrt{N}(D)}_{o_p(1)}.
	\]
	Thus, we proved
	\[
	\widehat\theta_{\mathrm{MEC}}
	\ \xrightarrow{d}\
	\mathcal N\!\left(\theta_0,\ \frac{1}{N}\Big\{\Var\{m_0(X)\}+\tfrac{1}{f} \Var\!\big(Y-m_0(X)\big)\Big\}\right).
	\]
\end{proof}

\begin{theorem}\label{thm:GEC-mean-proof-final}
	Assume the conditions of Lemma ~\ref{thm:cfppi_mean_consistency} and Lemma \ref{lemma:cal-mean-consistency}, and consider the MEC estimator
	\[
	\widehat\theta_{\mathrm{MEC}}
	=\frac{1}{N}\sum_{j\in S}\widehat\omega_j\,Y_j.
	\]
	Define the calibration subspace
	\[
	W := \mathrm{span}\{h\}= \mathrm{span}\{1,\widehat m^{(-)}(\cdot)\}
	= \{\,a + b\,\widehat m^{(-)}(\cdot): a,b\in\mathbb{R}\,\}
	\subset L_2(P_X),
	\]
	and let $\Pi_W$ denote the $L_2(P_X)$ projection onto $W$, $\Pi_W m_0 = \arg\min_{v\in W}\E\big[(m_0(X)-v(X))^2\big].$ Define the projection error $m_{0,\perp}:=m_0-\Pi_W m_0.$
	
	\noindent \textbf{Assumptions}
	\begin{itemize}[label={}, leftmargin=2pt, itemindent=2pt, labelsep=2pt]
		\item[]A1 Moments: $\E\|h(X)\|^2<\infty$ (equivalently, $\E[\widehat m^{(-)}(X)^2]<\infty$).
		\item[]A2 Weights stability: Calibrated weights satisfy the symmetric Bregman bound
		\[
		D_G(\widehat\omega\|d)\;+\;D_G(d\|\widehat\omega)\;=\;O_p(1).
		\]
		\item[]A3 Weighted Gram limit: With
		\[
		A_n:=\frac{1}{n}\sum_{j\in S} q_j\, z_j z_j^\top,
		\qquad q_j:=\frac{1}{g'(d_j)},
		\]
		one has $A_n \stackrel{P}{\to} \Sigma_q \succ 0$.
		
		\item[]A4 Small dual. Let $\widehat\lambda$ be the dual optimizer of the
		calibration program. Then $\|\widehat\lambda\|=O_p(n^{-1/2})$.
	\end{itemize}
	Then:
	
	\medskip
	\noindent\textup{\textbf{(i) Consistency.}}
	If A1--A2 hold and the projection error is stochastically bounded in $L_2(P_X)$,
	\[
	\|m_{0,\perp}\|_{L_2(P_X)}=\|m_0-\Pi_W m_0\|_{L_2(P_X)}=O_p(1),
	\]
	then $\widehat\theta_{\mathrm{MEC}}\xrightarrow{P}\theta_0$.
	
	\medskip
	\noindent\textup{\textbf{(ii) Asymptotic normality.}}
	If A1--A4 hold and the projection error is $L_2(P_X)$-consistent,
	\[
	\|m_{0,\perp}\|_{L_2(P_X)}=\|m_0-\Pi_W m_0\|_{L_2(P_X)}=o_P(1),
	\]
	then
	\[
	\sqrt{N}\,\big(\widehat\theta_{\mathrm{MEC}}-\theta_0\big)
	\;\xrightarrow{d}\;
	\mathcal{N}\!\Big(0,\ \sigma_f^2\Big),
	\qquad
	\sigma_f^2=\Var\!\big(m_0(X)\big)+\frac{1}{f}\,\Var\!\big(Y-m_0(X)\big).
	\]
\end{theorem}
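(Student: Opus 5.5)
This theorem repackages the two lemmas already established, so the plan is to assemble them rather than argue afresh. Part (i) is exactly Lemma~\ref{lemma:cal-mean-consistency}: under A1--A2 and $\|m_{0,\perp}\|_{L_2(P_X)}=O_p(1)$ it gives $\widehat\theta_{\mathrm{MEC}}\xrightarrow{p}\theta_0$. Part (ii) is exactly Lemma~\ref{lem:lemma:cal-mean-normality}. By Lemma~\ref{lem:regularity_condition_for_weight_calibration}, A2 is in fact implied by A3--A4, so the assumption sets match.

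For the reader, I would recall the engine of both lemmas. First, add and subtract the CF--PPI pieces and use the calibration balance $P_N v=P_n^{\omega}v$ for every $v\in W$. This yields the four-term decomposition \eqref{eq:GEC_decomposition_predictor_bases}:
\[
\widehat\theta_{\mathrm{MEC}}-\theta_0=(P_N-P)m_0+(P_n-P)(Y-m_0)+(P_n^{\omega}-P_N)m_{0,\perp}+(P_n^{\omega}-P_n)(Y-m_0).
\]
The predictor $\widehat m^{(-)}$ then enters only through the projection residual $m_{0,\perp}$.

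The first two terms are fixed-function empirical averages over independent samples. The CLT together with $\mathrm{Cov}(m_0(X),\varepsilon)=0$ gives the limit $\mathcal N(0,\sigma_f^2)$ after scaling by $\sqrt N$.

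For the third term, I would proceed in three steps:
\begin{itemize}
\item Split it as $(P_n^\omega-P_n)m_{0,\perp}+(P_n-P)m_{0,\perp}-(P_N-P)m_{0,\perp}$.
\item Handle the last two pieces with Lemma~\ref{lem:cf_external_sample_bound_n}. Cross-fitting makes $m_{0,\perp}$ $\mathcal T$-measurable, and $1\in W$ makes it mean zero, so these pieces are $O_p(\|m_{0,\perp}\|/\sqrt n)$.
\item Bound the first piece by weighted Cauchy--Schwarz with the curvature weights $\tilde g'_j$. Lemma~\ref{lem:curv_weighted_bd} turns $\sum_j\tilde g'_j(\widehat\omega_j-d_j)^2$ into the symmetric Bregman divergence, which is $O_p(1)$ by A2.
\end{itemize}
This gives $O_p(1)$ for consistency and $o_p(N^{-1/2})$ when $\|m_{0,\perp}\|=o_p(1)$.

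The fourth term, the reweighted noise, is the main obstacle at the $\sqrt N$ scale. Cauchy--Schwarz alone only gives $O_p(N^{-1/2})$ here, which suffices for (i) but not for (ii). For (ii), I would linearize the KKT map as $\widehat\omega_j-d_j=q_jz_j^\top\widehat\lambda+r_j$. The linear part equals $\widehat\lambda^\top n^{-1/2}\sum_j q_jz_j\varepsilon_j$ times a bounded factor. Since $\widehat\omega_j$ does not use $Y_j$, this normalized sum is $O_p(1)$ by a conditional CLT, and A4 then makes the linear part $o_p(1)$. The quadratic remainder is $O_p(\sqrt n\|\widehat\lambda\|^2)=o_p(1)$, using a fourth-moment bound on $z$.

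The delicate points to flag are these:
\begin{itemize}
\item the uniform lower bound on $\tilde g'_j$;
\item the independence of $\widehat\lambda$-driven weights from $\varepsilon_j$, which holds only through the cross-fitted basis;
\item the mild strengthening of A1 to fourth moments.
\end{itemize}
Slutsky's theorem then concludes the argument.
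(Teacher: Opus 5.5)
Your proposal is correct and follows the paper's route: the theorem is proved by invoking the consistency and normality lemmas. Your recap matches how those lemmas are proved: the four-term decomposition from calibration balance, the three-way split of $(P_n^{\omega}-P_N)m_{0,\perp}$ handled by curvature-weighted Cauchy--Schwarz and the cross-fitted empirical-process bound, and the KKT linearization of the reweighted-noise term under A4 with a fourth-moment remainder bound.

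One point you flag needs correcting. $\widehat\omega_j$ does depend on $Y_j$, through $\widehat\lambda$: the fits $\widehat m^{(k)}$ with $k\neq\kappa(j)$, which define the other $z_{j'}$, and the plug-in $\widehat m^\star$ are all trained using $Y_j$. This is harmless, because the bound $|\widehat\lambda^\top v|\le\|\widehat\lambda\|\,\|v\|$ only needs $v=n^{-1/2}\sum_{j\in S}q_jz_j\varepsilon_j=O_p(1)$. That in turn requires only that $z_j$, not $\widehat\omega_j$, be built without fold $\kappa(j)$.
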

\begin{proof}
	The theorem follows from Lemma~\ref{lemma:cal-mean-consistency} (consistency) together with Lemma~\ref{lem:lemma:cal-mean-normality} (asymptotic normality).    
\end{proof}

\newpage
\section{Settings of machine-learning predictors and supplemental plots and table for the simulation experiment in the main document}\label{sec:simulation_experiments_supp}
\subsection{Settings of ML predictors used in simulations}\label{subsec:Setting Machine learning predictors}
For MEC and CF--PPI \eqref{eq:cf_ppi_estimator}, all learners are trained only on the labeled folds and evaluated out of fold via $K{=}5$-fold cross-prediction; the unlabeled plug-in term uses the average of the $K$ fold-specific predictors. For vanilla PPI \eqref{eq:ppi-mean-sum-fitted}, each learner is fit once on all labeled data with no sample splitting, and the same fitted model is used to predict both labeled and unlabeled covariates (thereby reusing labels in the bias-correction term). Hyperparameters are held fixed across methods and label fractions \(f\) to ensure a fair comparison; any undercoverage of confidence intervals should therefore be attributed to the estimation procedure rather than to differences in the ML predictor.

\paragraph{Kernel ridge regression (KRR).}
We use a Gaussian kernel with an unpenalized intercept. The kernel is
\(k_\ell(x,x')=\exp\!\big(-\|x-x'\|_2^2/(2\ell^2)\big)\) with \(\ell=\sqrt{2d}\) (where \(d\) is the covariate dimension), and the ridge penalty scales as \(\lambda=c_\lambda n^{-\alpha}\) with \(c_\lambda=0.01\) and \(\alpha=0.5\). For simplicity, assume the labeled index set is \(S=\{1,\ldots,n\}\). Given labeled data \(\{(X_i,Y_i)\}_{i=1}^n\), let \(K\in\mathbb{R}^{n\times n}\) be the Gram matrix
\((K)_{ij}=k_\ell(X_i,X_j)\), set \(A=K+n\lambda I_n\), and write \(\mathbf{1}\in\mathbb{R}^n\) for the all-ones vector.
Following our implementation, the unpenalized intercept is imposed via the projection $w=\frac{A^{-1}\mathbf{1}}{\mathbf{1}^\top A^{-1}\mathbf{1}}$ and $H=K A^{-1}\big(I_n-\mathbf{1}w^\top\big)+\mathbf{1}w^\top$. The in-sample fitted values are \(\widehat m(X_{1:n})=HY\) with degrees of freedom \(\mathrm{df}=\mathrm{tr}(H)\).
For a new covariate \(x\), let \(k(x)=(k_\ell(x,X_1),\ldots,k_\ell(x,X_n))^\top\); the predictor is
\[
\widehat m(x)=k(x)^\top A^{-1}\big(I_n-\mathbf{1}w^\top\big)Y+w^\top Y.
\]
We use \(K{=}5\)-fold cross-prediction: each learner is trained on the \(K{-}1\) labeled folds and evaluated out of fold to produce \(\widehat m^{(-)}(X_i)\) for the held-out \(i\); for unlabeled covariates \(x\in U\), the plug-in term averages the \(K\) fold-specific predictions,
\(\bar\mu_U(x)=K^{-1}\sum_{k=1}^K \widehat m^{(-k)}(x)\).
For MEC, the predictor basis used in calibration is \(h(x)=(1,\widehat m^{(-)}(x)) \in \mathbb{R}^p\) ($p=2$), keeping the calibration step low-dimensional and numerically stable across $d$ and $n$. Hyperparameters \((\ell,\lambda)\) are fixed across label fractions \(f\) for comparability.

\paragraph{Random forest (RF).}
We fit regression forests (using the \texttt{ranger} package in \texttt{R}) for a continuous outcome with
\(T=500\) trees; at each split, a feature subset of size \(\lfloor\sqrt{d}\rfloor\) is considered (where \(d\) is the covariate dimension); minimum leaf size \(5\); unlimited depth; and sample fraction \(1.0\) (bootstrap sampling). Trees are grown to (approximate) purity subject to the leaf-size constraint; no post–pruning is applied. Out-of-bag variance estimation is disabled (variance is handled by the inference layer).

Given labeled data \(\{(X_i,Y_i)\}_{i=1}^n\), the forest predictor averages the tree predictors, $\widehat m(x)=(1/T)\sum_{t=1}^T \widehat m_t(x),$ where each \(\widehat m_t\) is a regression tree trained on a bootstrap sample of the labeled data while restricting candidate split variables at each node to \(\lfloor\sqrt{d}\rfloor\). We use \(K{=}5\)-fold cross–prediction exactly as in the KRR implementation.


\paragraph{Feedforward neural network (FNN).}
We fit a one–hidden–layer feedforward network for a continuous outcome using the \texttt{nnet} package in \texttt{R}. The network uses \(H=3\) hidden units with sigmoid activation and a linear output layer. We apply \(\ell_2\) weight decay with penalty \(\texttt{decay}=10\) and cap optimization at \(\texttt{maxit}=100\) iterations. Inputs are standardized using the training means and standard deviations, and the same transformation is applied at prediction. Hyperparameters are held fixed across all label fractions \(f\) for comparability. We use \(K{=}5\)-fold cross–prediction exactly as in the KRR implementation.

\paragraph{k-nearest neighbors (kNN).}
We use \(k\)-NN regression via the \texttt{kknn} package in \texttt{R}. We fix \(k=15\) neighbors, the Minkowski distance with exponent \(p=2\) (Euclidean), and a \texttt{rectangular} kernel (uniform weights over the \(k\) nearest neighbors). Features are standardized using the training means and standard deviations prior to distance computation, and the same transformation is applied at prediction time. Given labeled data \(\{(X_i,Y_i)\}_{i=1}^n\), the predictor is the locally weighted average $\widehat m(x)=(\sum_{i=1}^n w_i(x)\,Y_i)/\sum_{i=1}^n w_i(x),$ with $w_i(x)=K(\|x-X_i\|_2/r_k(x))$, where \(r_k(x)\) is the distance from \(x\) to its \(k\)-th nearest neighbor and \(K(u)=\mathbf{1}\{u\le 1\}\) for the rectangular kernel. Hyperparameters are held fixed across all label fractions \(f\) for comparability. We use \(K{=}5\)-fold cross–prediction exactly as in the KRR implementation.

\subsection{Supplemental plots for the simulation experiment in the main document}\label{subsec:Supplemental plots for the simulation experiment in the main document} 
Figure~\ref{fig:S1_res_N1000_d10_sigma5_supp_plot} presents the full results from the main simulation experiment (\(N=1000,\ f\in\{0.10,0.15,\ldots,0.50\},\ n=fN\in\{100,150,\ldots,500\},\ d=10,\ \sigma_y=5\)), including MEC with four generators—quadratic, Kullback–Leibler (KL), empirical likelihood (EL), and squared Hellinger. MEC variants are shown in dashed or dotted colored lines for clarity. Across all generators, MEC exhibits nearly identical performance: coverage remains close to the nominal \(95\%\) level, interval widths lie between the classical and oracle references, and bias is substantially reduced compared to vanilla PPI across label fractions \(f\). These results confirm that the proposed MEC framework is robust to the choice of generator and consistently improves finite-sample efficiency while maintaining valid inference.

\begin{figure}[!htbp]
	\vskip 0.2in
	\begin{center}
		\centerline{\includegraphics[width=\linewidth]{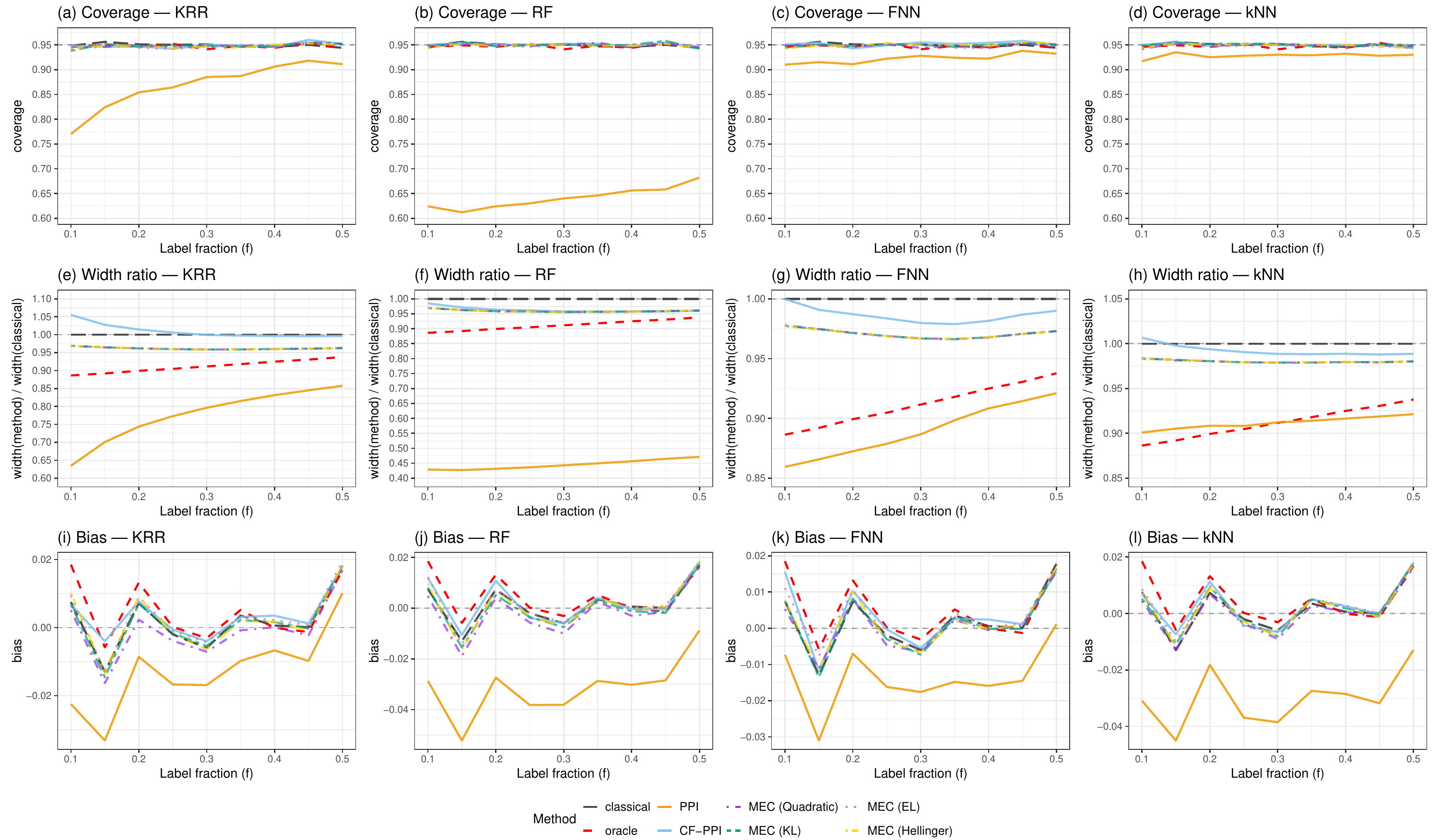 }}    
		\caption{Supplemental plots corresponding to Figure~\ref{fig:S1_res_N1000_d10_sigma5} in the main document, showing MEC with four generators—quadratic, Kullback--Leibler (KL), empirical likelihood (EL), and squared Hellinger. MEC performance is robust to the generator choice. Unlabeled sample size \(N=1000\); labeled size \(n=fN\) with \(f\in[0.1,0.5]\); covariate dimension \(d=10\) with i.i.d.\ \(X\sim\mathcal{N}(0,1)\); outcome noise standard deviation \(\sigma_y=5\).}
		\label{fig:S1_res_N1000_d10_sigma5_supp_plot}
	\end{center}    
	\vskip -0.2in
\end{figure}

\subsection{Supplemental table for the simulation experiment in the main document}\label{subsec:Supplemental table for the simulation experiment in the main document}
Table~\ref{tab:additional_simulation_results} reports additional simulation
results for the setting considered in Section~\ref{sec:Simulation experiments}
of the main document, with \(N=1000\), \(n=100\), and label fraction
\(f=n/N=0.1\). The predictor \(\widehat m(\cdot)\) is fitted using KRR. In addition to the methods reported in the main text, we
include PPI++ \citep{angelopoulos2024} with cross-fitting and calibration estimators based on the
11-dimensional moment-feature basis
\[
h(X_j) = (1, X_{j1}, X_{j2}, \ldots, X_{j10}) \in \mathbb{R}^{11}.
\]
We also report the effective sample size (ESS) and the coefficient of variation
(CV) of the calibrated weights as empirical diagnostics for weight stability,
where
\(\mathrm{ESS}=(\sum_{j\in S}\widehat\omega_j)^2/
\sum_{j\in S}\widehat\omega_j^2 \in [1,n]\)
and
\(\mathrm{CV}=\operatorname{sd}\{\widehat\omega_j:j\in S\}/\bar\omega\),
with \(\bar\omega=n^{-1}\sum_{j\in S}\widehat\omega_j\). Values of ESS close to \(n = |S| = 100\) and CV close to zero indicate stable, nearly
uniform calibrated weights, whereas smaller ESS and larger CV indicate more
dispersed or concentrated weights.

\begin{table}[h!]
	\centering
	\caption{Additional simulation results including PPI++ as a baseline and weight calibration using an 11-dimensional moment-feature basis. ESS and CV are reported only for the calibration methods.}
	\label{tab:additional_simulation_results}
	\renewcommand{\arraystretch}{1.2}
	\setlength{\tabcolsep}{4.5pt}
	\small
	\begin{tabular*}{\textwidth}{@{\extracolsep{\fill}}p{0.47\textwidth}cccccc@{}}
		\toprule
		\textbf{Method} & \textbf{Bias} & \textbf{RMSE} & \textbf{Coverage} & \textbf{Mean CI Len} & \textbf{ESS} & \textbf{CV} \\
		\midrule
		Classical
		& 0.008 & 0.561 & 0.948 & 2.237 & -- & -- \\
		PPI (oracle)
		& 0.019 & 0.501 & 0.946 & 1.983 & -- & -- \\
		PPI (no cross-fitting)
		& -0.023 & 0.583 & 0.770 & 1.420 & -- & -- \\
		CF-PPI
		& 0.007 & 0.597 & 0.948 & 2.361 & -- & -- \\
		Calibration via Moment Basis (Quadratic)
		& 0.013 & 0.608 & 0.958 & 2.507 & 89.004 & 0.348 \\
		Calibration via Moment Basis (KL)
		& 0.013 & 0.618 & 0.953 & 2.524 & 88.396 & 0.358 \\
		Calibration via Moment Basis (EL)
		& 0.013 & 0.626 & 0.958 & 2.552 & 86.410 & 0.392 \\
		Calibration via Moment Basis (Hellinger)
		& 0.011 & 0.620 & 0.958 & 2.532 & 87.627 & 0.371 \\
		MEC (Quadratic)
		& 0.005 & 0.554 & 0.944 & 2.168 & 98.934 & 0.084 \\
		MEC (KL)
		& 0.007 & 0.556 & 0.938 & 2.168 & 98.876 & 0.086 \\
		MEC (EL)
		& 0.010 & 0.557 & 0.940 & 2.166 & 98.839 & 0.087 \\
		MEC (Hellinger)
		& 0.010 & 0.557 & 0.943 & 2.168 & 98.836 & 0.087 \\
		PPI++ (with cross-fitting)
		& 0.004 & 0.556 & 0.944 & 2.171 & -- & -- \\
		\bottomrule
	\end{tabular*}
	\vspace{0.3em}
	
	\parbox{\textwidth}{\footnotesize
		\textit{Note:} Mean CI Len = mean length of the 95\% confidence interval; RMSE = root mean squared error; ESS = effective sample size; CV = coefficient of variation.
	}
\end{table}

The results have two main implications. First, PPI++ has operating
characteristics very close to those of MEC with the quadratic generator. For
example, the RMSE, coverage, and mean confidence-interval length of PPI++ are
nearly identical to those of MEC (Quadratic). This empirical finding is
consistent with Theorem~\ref{thm:dual-greg}, which shows that MEC admits
a dual GREG representation and is asymptotically equivalent to PPI++ when the
Bregman generator is quadratic. More specifically, in the quadratic case, the
MEC estimator reduces to a regression-adjusted form whose slope coincides with
the optimal PPI++ tuning coefficient up to the finite-sample scaling factor
\(1/(1+n/N)=N/(N+n)\), which is close to one in the semi-supervised regime
\(N\gg n\). Thus, the simulation results support the interpretation that MEC
generalizes PPI++ in a dual sense.

Second, the results illustrate why a crude moment-feature basis is not
recommended for the semi-supervised mean estimation problem considered in this
paper. Although moment-feature calibration attains near-nominal coverage, it
produces larger RMSEs and longer confidence intervals than both CF--PPI and MEC.
This deterioration is accompanied by less stable calibrated weights: the ESSs
for the moment-feature basis are around 86--89 and the CVs are around
0.35--0.39, whereas the MEC weights based on the two-dimensional predictor
basis have ESSs close to \(n=100\) and CVs close to zero. These diagnostics show
that calibrating on non-data-adaptive moment features can induce unnecessary
weight variability without improving efficiency. In contrast, MEC uses the
out-of-fold ML predictor as a low-dimensional, outcome-relevant calibration
basis, leading to substantially more stable weights and better finite-sample
performance.


\section{Additional simulation experiments}\label{sec:Additional simulation experiments}
\subsection{Simulation setup}\label{subsec:Simulation setup}
In this section, we conduct additional simulation experiments using the same synthetic-data procedure described in Section~\ref{sec:Simulation experiments} of the main document. Recall that we generate two i.i.d.\ samples: an unlabeled covariate set of size \(N\), \(\{X_i\}_{i=1}^N\), and a labeled sample \(\{(X_j,Y_j)\}_{j\in S}\) with \(|S|=n\), where
\[
Y_j = m_0(X_j) + \varepsilon_j,\qquad \varepsilon_j \sim \mathcal{N}(0,\sigma_y^2).
\]
The true regression function is \(m_0(x)=\sum_{k=1}^{d} g_k(x_k)\) with \(g_1(x)=e^{-x}\), \(g_2(x)=x^2\), \(g_3(x)=x\), \(g_4(x)=\mathbb{I}\{x>0\}\), \(g_5(x)=\cos x\), and \(g_k(x)\equiv 0\) for \(k=6,\ldots,d\).

We control (i) the unlabeled sample size \(N\); (ii) the labeled set \(S\) of size \(n=fN\); (iii) the labeling fraction \(f\); (iv) the covariate dimension \(d\); (v) the correlation \(\rho\), where the covariates \(X=(x_1,\ldots,x_d)^\top \in \mathbb{R}^d\) are generated as mean-zero Gaussian with AR(1) covariance \(\Sigma_{ij}=\rho^{|i-j|}\) (so \(\rho=0\) yields \(\Sigma=I_d\), i.e., independent coordinates); (vi) the outcome-noise standard deviation \(\sigma_y\); and (vii) the number of folds $K$ used for sample splitting.

Recall that, in Section~\ref{sec:Simulation experiments}, we set \(N=1000\), \(d=10\), \(\rho=0\), \(\sigma_y=5\), and varied \(n=fN\) with \(f\in[0.1,0.5]\), and we showed that MEC outperforms CF--PPI and vanilla PPI; see Figures~\ref{fig:S1_res_N1000_d10_sigma5} and~\ref{fig:S1_res_N1000_d10_sigma5_supp_plot}.

We consider the following setup for additional simulations:
\begin{itemize}[label={}, leftmargin=2pt, itemindent=2pt, labelsep=2pt]
	\item \emph{\textbf{$\bullet$ Additional simulation experiment 1.}} Vary the covariate dimension
	\(d\) from \(5\) to \(15\), fixing \(N=1000\), \(n=200\) (\(f=0.2\)), \(\rho=0\), and \(\sigma_y=5\).
	\item \emph{\textbf{$\bullet$ Additional simulation experiment 2.}} Vary the outcome-noise standard deviation
	\(\sigma_y\) from \(1\) to \(10\), fixing \(N=1000\), \(n=200\) (\(f=0.2\)), \(\rho=0\), and \(d=10\).
	\item \emph{\textbf{$\bullet$ Additional simulation experiment 3.}} Vary the covariate correlation \(\rho\) from \(0\) to \(0.8\), fixing \(N=1000\), \(n=200\) (\(f=0.2\)), \(\sigma_y=5\), and \(d=10\).
	\item \emph{\textbf{$\bullet$ Additional simulation experiment 4.}} Vary the number of folds \(K\) from \(5\) to \(10\), fixing \(N=1000\), \(n=200\) (\(f=0.2\)), \(\rho=0\), \(\sigma_y=5\), and \(d=10\).
\end{itemize}
The purpose of Additional Simulation Experiments~1--3 is to investigate how MEC, vanilla PPI, and CF--PPI behave as the data-generating setup becomes more challenging (e.g., larger \(d\), higher noise \(\sigma_y\), or stronger correlation \(\rho\)) and to assess each method’s robustness to these factors. Additional Simulation Experiment~4 examines sensitivity to the number of folds \(K\) used for cross-fitting in MEC and CF--PPI; ideally, performance should be insensitive to \(K\) (i.e., stable across reasonable choices of $K$).

Predictors are tuned identically across methods, as detailed in Subsection~\ref{subsec:Setting Machine learning predictors}. We report nominal 95\% coverage and the width ratio \(\mathrm{WR}_{\mathrm{method}}(f)\) relative to the classical label-only estimator. For MEC, we consider only the quadratic generator, since MEC’s results are robust to the choice of generator.

\subsection{Additional simulation experiment 1: varying covariate dimension $d$}\label{subsec:Additional simulation experiment 1}
We examine how the covariate dimension \(d\) affects the performance of MEC, CF--PPI, and vanilla PPI. The results are summarized in Figure~\ref{fig:S2_res_N1000_n200_varying_d_sigma5}.

Across all learners and values of \(d\), MEC maintains near-nominal coverage and consistently attains tighter confidence intervals than CF--PPI. As \(d\) increases, CF--PPI remains valid (maintaining near-nominal coverage, like MEC) but exhibits performance degradation for KRR, FNN, and kNN. In particular, for KRR and FNN, CF--PPI performs worse than the classical estimator when \(d=15\); this indicates that cross-prediction can ensure valid inference but does not automatically deliver efficiency gains. Vanilla PPI under-covers throughout, and the under-coverage becomes more pronounced as \(d\) grows because label reuse interacts unfavorably with higher-dimensional predictors.

The efficiency advantage of MEC is stable as \(d\) increases. The gap \(\mathrm{WR}_{\mathrm{MEC}}(f) - \mathrm{WR}_{\mathrm{oracle}}(f)\) remains relatively small even at \(d=15\), indicating that MEC continues to borrow effectively from predicted outcomes of unlabeled covariates and mitigates the efficiency shortfall that CF--PPI exhibits.

\begin{figure*}[!htbp]
	\vskip 0.2in
	\begin{center}
		\centerline{\includegraphics[width=\linewidth]{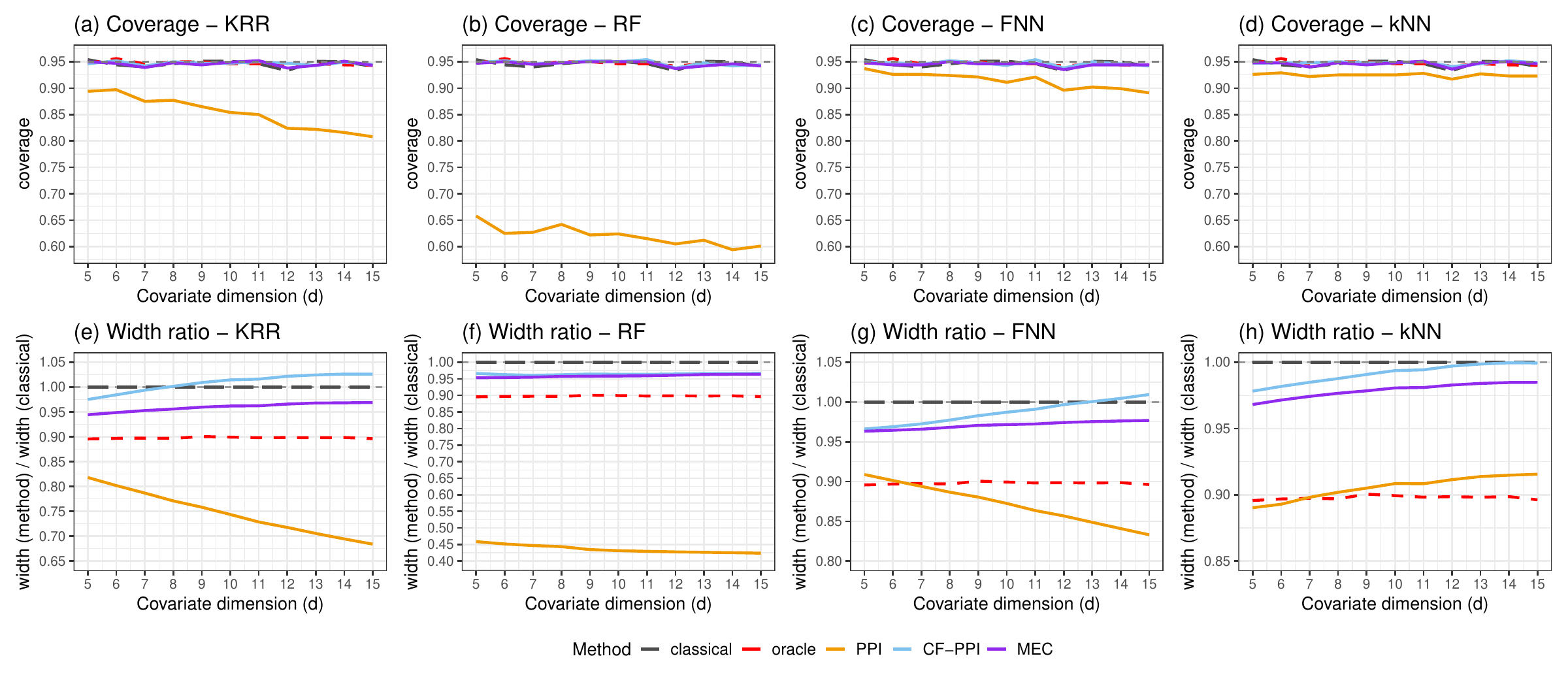}}    
		\caption{Additional Simulation Experiment~1: effect of covariate dimension \(d\).
			We vary \(d\) from \(5\) to \(15\) while fixing \(N=1000\), \(n=200\) (\(f=0.2\)), \(\rho=0\), and \(\sigma_y=5\).
			For MEC, we display only the quadratic generator for visual clarity, since MEC’s results are robust to the choice of generator.}
		\label{fig:S2_res_N1000_n200_varying_d_sigma5}
	\end{center}    
	\vskip -0.2in
\end{figure*}

Numerically, MEC’s robustness arises because its calibration operates in a fixed, two-dimensional basis \(h=(1,m^{(-)})\), independent of \(d\). The dual Newton solver (Section \ref{subsec:Efficient dual Newton solver for optimal weights}) therefore remains well-conditioned regardless of \(N\), \(n\), and \(d\); the calibrated weights are stable; and the optimization objective plateaus at a bounded level. By projecting onto \(\text{span}\{1,m^{(-)}\}\), MEC removes the component of the signal captured by the predictor and confines any variance inflation to the orthogonal remainder; thus, even when raw prediction error grows with dimension, efficiency is preserved so long as the predictor continues to capture a meaningful low-dimensional signal.

Overall, increasing \(d\) makes the problem harder for all methods, but MEC remains both valid and the most efficient among the competitors, whereas CF--PPI is moderately sensitive to dimension and vanilla PPI is invalid due to systematic under-coverage.

\subsection{Additional simulation experiment 2: varying standard deviation $\sigma_y$}\label{subsec:Additional simulation experiment 2}
We examine how the outcome-noise standard deviation \(\sigma_y\) affects the performance of MEC, CF--PPI, and vanilla PPI. The results are summarized in Figure~\ref{fig:S3_res_N1000_n200_varying_sigma_d_10}.

Across all learners and noise levels, MEC maintains near-nominal coverage and consistently yields tighter valid confidence intervals than CF--PPI. As \(\sigma_y\) increases, intervals inflate for all methods, but MEC’s width ratios remain well below those of CF--PPI. CF--PPI generally preserves validity but shows substantial efficiency deterioration at larger \(\sigma_y\), especially for KRR, FNN, and kNN, where efficiency can even fall below that of the classical estimator. Vanilla PPI under-covers across the board, with under-coverage worsening as \(\sigma_y\) grows due to label reuse interacting with noisier residuals.

The efficiency advantage of MEC is stable as \(\sigma_y\) increases: the gap \(\mathrm{WR}_{\mathrm{MEC}}(f) - \mathrm{WR}_{\mathrm{oracle}}(f)\) remains relatively small even at \(\sigma_y=10\), indicating that MEC continues to borrow effectively from unlabeled covariates while controlling misspecification-driven variance inflation. 

Overall, increasing \(\sigma_y\) makes the problem harder for all procedures, but MEC remains both valid and the most efficient among the competitors. CF--PPI is a reasonable baseline for validity yet can forfeit efficiency at high noise, whereas vanilla PPI remains invalid due to systematic under-coverage.

\begin{figure*}[!htbp]
	\vskip 0.2in
	\begin{center}
		\centerline{\includegraphics[width=\linewidth]{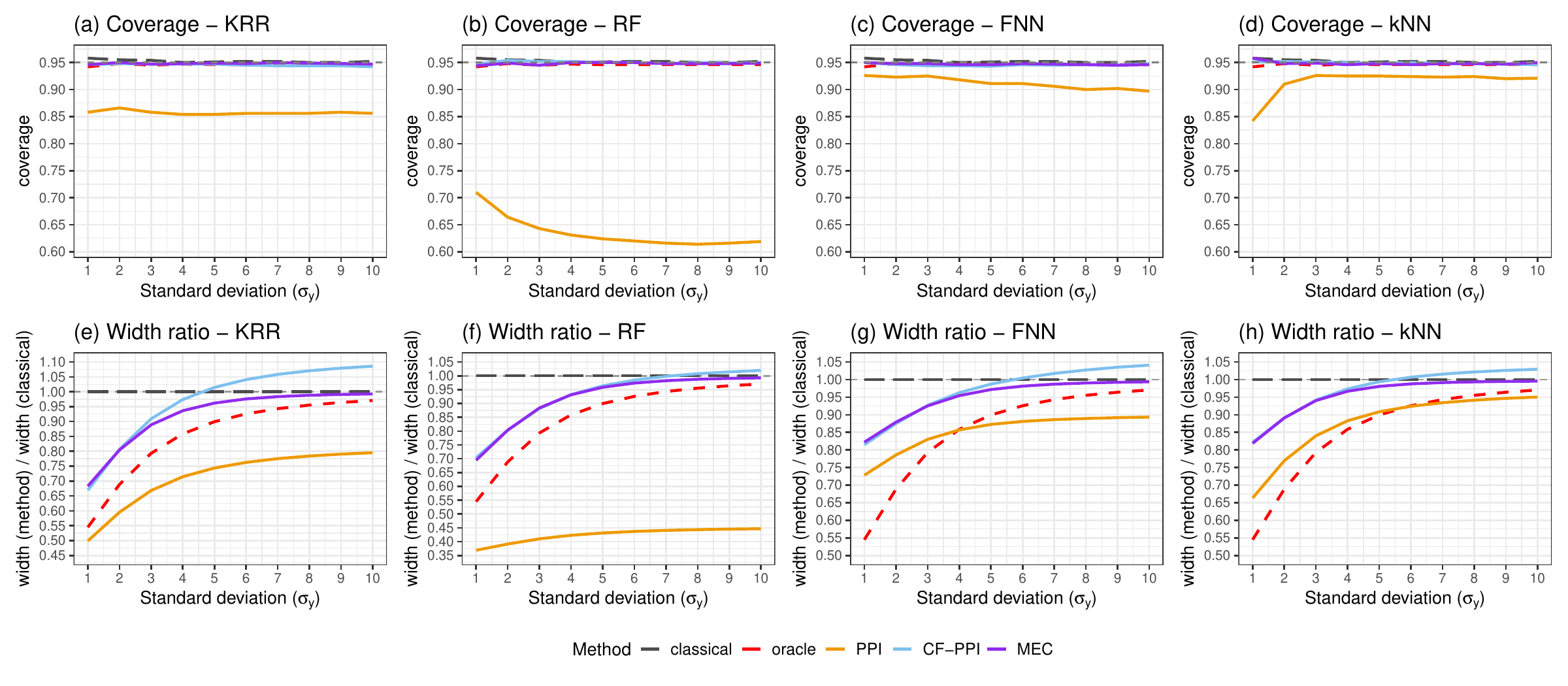}}    
		\caption{Additional Simulation Experiment~2: effect of standard deviation \(\sigma_y\). We vary \(\sigma_y\) from \(1\) to \(10\) while fixing \(N=1000\), \(n=200\) (\(f=0.2\)), \(\rho=0\), and \(d = 10\).}
		\label{fig:S3_res_N1000_n200_varying_sigma_d_10}
	\end{center}    
	\vskip -0.2in
\end{figure*}

\subsection{Additional simulation experiment 3: varying covariate correlation $\rho$}\label{subsec:Additional simulation experiment 3}
We study the effect of covariate correlation by varying the AR(1) parameter \(\rho\); results appear in Figure~\ref{fig:S4_res_N1000_n200_varying_rho_d_10_sigma5}. Across learners and correlation levels, MEC maintains near-nominal coverage and consistently delivers tighter valid intervals than CF--PPI. As \(\rho\) increases, intervals widen for all methods, yet MEC’s width ratios remain well below those of CF--PPI. CF--PPI generally preserves validity, whereas vanilla PPI under-covers throughout. Overall, MEC is the most robust and efficient method across correlation levels, CF--PPI is valid but less efficient under strong correlation, and vanilla PPI is invalid.

\begin{figure*}[!htbp]
	\vskip 0.2in
	\begin{center}
		\centerline{\includegraphics[width=\linewidth]{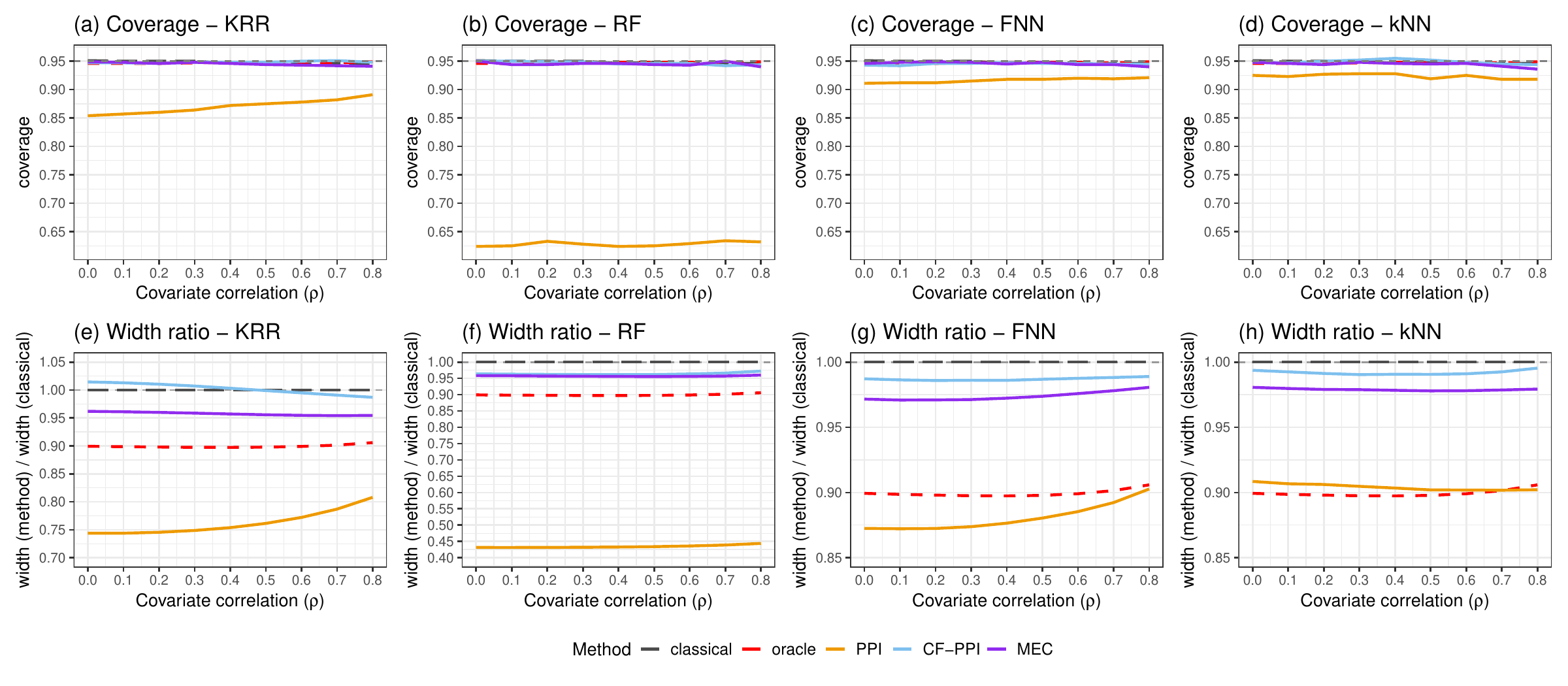}}    
		\caption{Additional Simulation Experiment~3: effect of covariate correlation \(\rho\). We vary \(\rho\) from \(0\) to \(0.8\) while fixing \(N=1000\), \(n=200\) (\(f=0.2\)), \(\sigma_y=5\), and \(d = 10\).}
		\label{fig:S4_res_N1000_n200_varying_rho_d_10_sigma5}
	\end{center}    
	\vskip -0.2in
\end{figure*}

\subsection{Additional simulation experiment 4: varying fold number $K$}\label{subsec:Additional simulation experiment 4}
We assess sensitivity to the number of folds \(K\) used for cross-prediction in MEC and CF--PPI. Results are summarized in Figure~\ref{fig:S5_res_N1000_n200_d_10_sigma5_varying_K}. Across learners and both choices of \(K\), MEC maintains near-nominal coverage and achieves tighter valid intervals than CF--PPI, with only negligible differences between \(K=5\) and \(K=10\). CF--PPI generally preserves validity across \(K\) values. Vanilla PPI under-covers regardless of \(K\) since it does not employ sample splitting. Overall, performances of MEC and CF--PPI are largely insensitive to \(K\). Given similar statistical behavior and lower computational cost, \(K=5\) is a practical default.

\begin{figure*}[!htbp]
	\vskip 0.2in
	\begin{center}
		\centerline{\includegraphics[width=\linewidth]{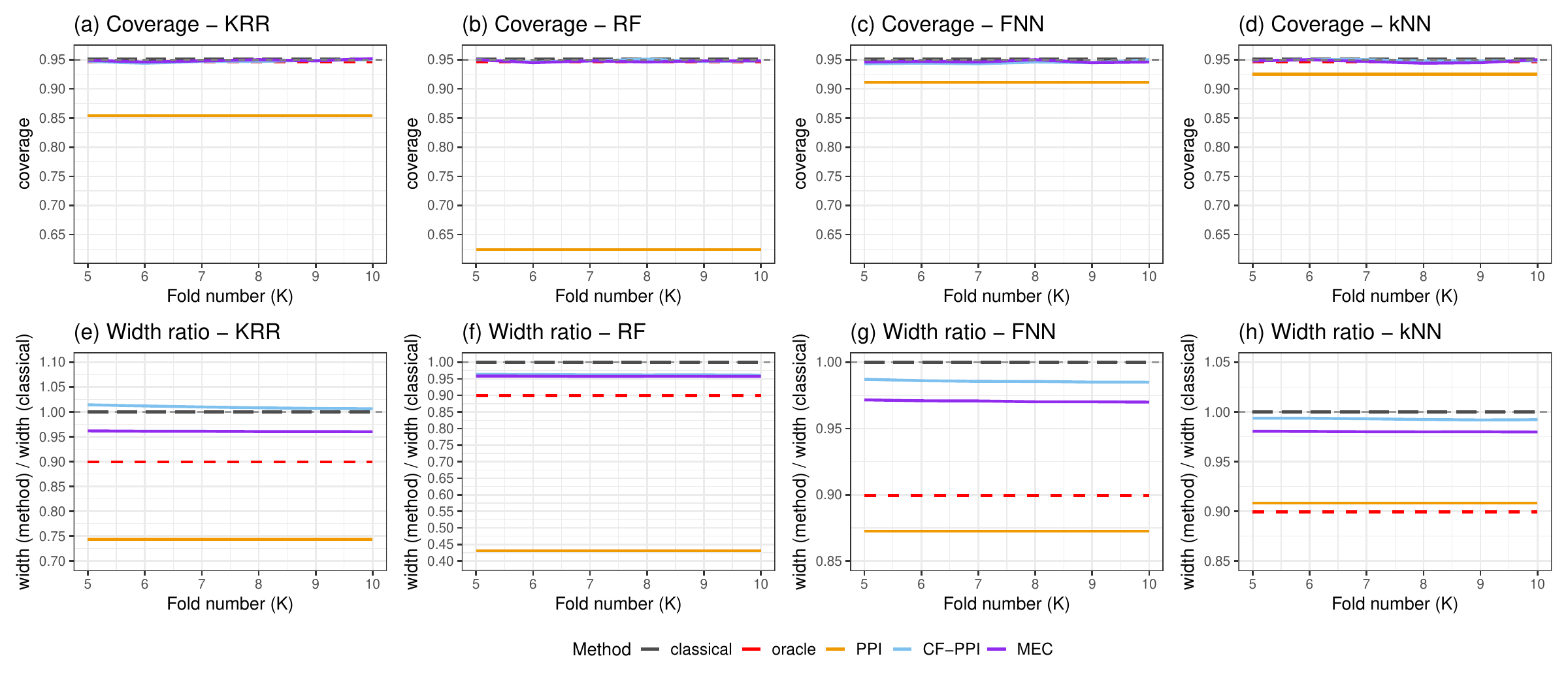}}    
		\caption{Additional Simulation Experiment~4: effect of fold number \(K\) for sample-splitting. We vary \(K\) from \(5\) to \(10\) while fixing \(N=1000\), \(n=200\) (\(f=0.2\)), $\rho =0$, \(\sigma_y=5\), and \(d = 10\).}
		\label{fig:S5_res_N1000_n200_d_10_sigma5_varying_K}
	\end{center}    
	\vskip -0.2in
\end{figure*}

\section{Real data application: Energy Efficiency dataset}\label{sec:Real data applications}
We apply the proposed MEC method to the \emph{Energy Efficiency} dataset, available from the \href{https://archive.ics.uci.edu/dataset/242/energy+efficiency}{UCI Machine Learning Repository}. The dataset comprises \(768\) building designs with eight continuous covariates: \(x_1\) = relative compactness, \(x_2\) = surface area, \(x_3\) = wall area, \(x_4\) = roof area, \(x_5\) = overall height, \(x_6\) = orientation, \(x_7\) = glazing area, and \(x_8\) = glazing area distribution. The data include two response variables, \emph{Heating Load} and \emph{Cooling Load}; in this application we set \(y\) to be \emph{Heating Load}. There are no missing values.

We follow a semi-supervised setup, where a subset of observations is randomly designated as labeled and the remainder as unlabeled (covariates only). Specifically, we take \(n=115\) labeled pairs \(\{(X_j, Y_j): j \in S\}\) and \(N=653\) unlabeled covariates \(\{X_i: i=1,\ldots,N\}\), so the labeling fraction is \(f = n/N = 115/653 \approx 0.176\). Our target parameter is the population mean of the response, \(\theta = \mathbb{E}[Y]\) (i.e., the population mean of Heating Load), under the working model \(Y = m(X) + \varepsilon\) with \(\mathbb{E}[\varepsilon \mid X] = 0\), where $m$ is unknown. Because this is a real dataset, the true value of \(\theta\) is unknown; as a reference, we compute the full-sample mean using all 768 observations, obtaining \(\bar{Y}_{\mathrm{full}} = 22.307\), which we use as the ground truth for this application.

We compare the classical estimator \(\bar{Y}_{n}\) based on the labeled subset (\(n=115\)), the classical estimator \(\bar{Y}_{\mathrm{full}}\) computed from the entire dataset (\(768\) observations; used as a reference/ground truth), vanilla PPI, CF--PPI, and MEC (quadratic, KL, EL, and Hellinger generators) across four learners—KRR, RF, FNN, and kNN. We report point estimates with 95\% confidence intervals. Learner hyperparameters follow Subsection~\ref{subsec:Setting Machine learning predictors}.

\begin{figure}[h!]
	\centering
	\includegraphics[width=\linewidth]{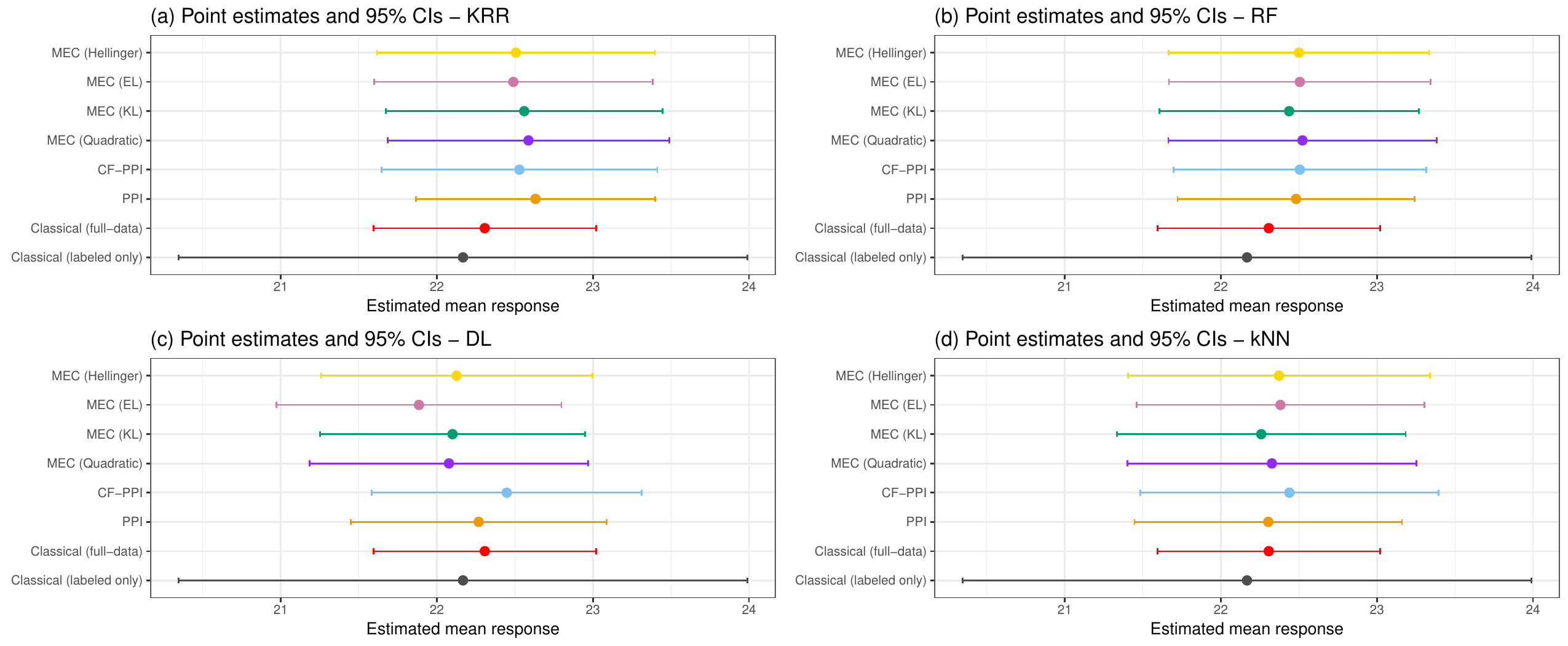}
	\caption{Real-data application (aligned case). Point estimates and 95\% confidence intervals for the classical, PPI, CF--PPI, and MEC estimators across four learners (KRR, RF, FNN, and kNN) using the Energy Efficiency dataset. The labeled mean \(\bar{Y}_{n}\) is close to the reference mean \(\bar{Y}_{\mathrm{full}} = 22.307\). All debiasing methods produce estimates consistent with the reference and yield tighter intervals than the classical labeled-only estimator, demonstrating improved efficiency by leveraging unlabeled covariates.}
	\label{fig:Real_Data_Application_Aligned_case}
\end{figure}

\begin{figure}[h!]
	\centering
	\includegraphics[width=\linewidth]{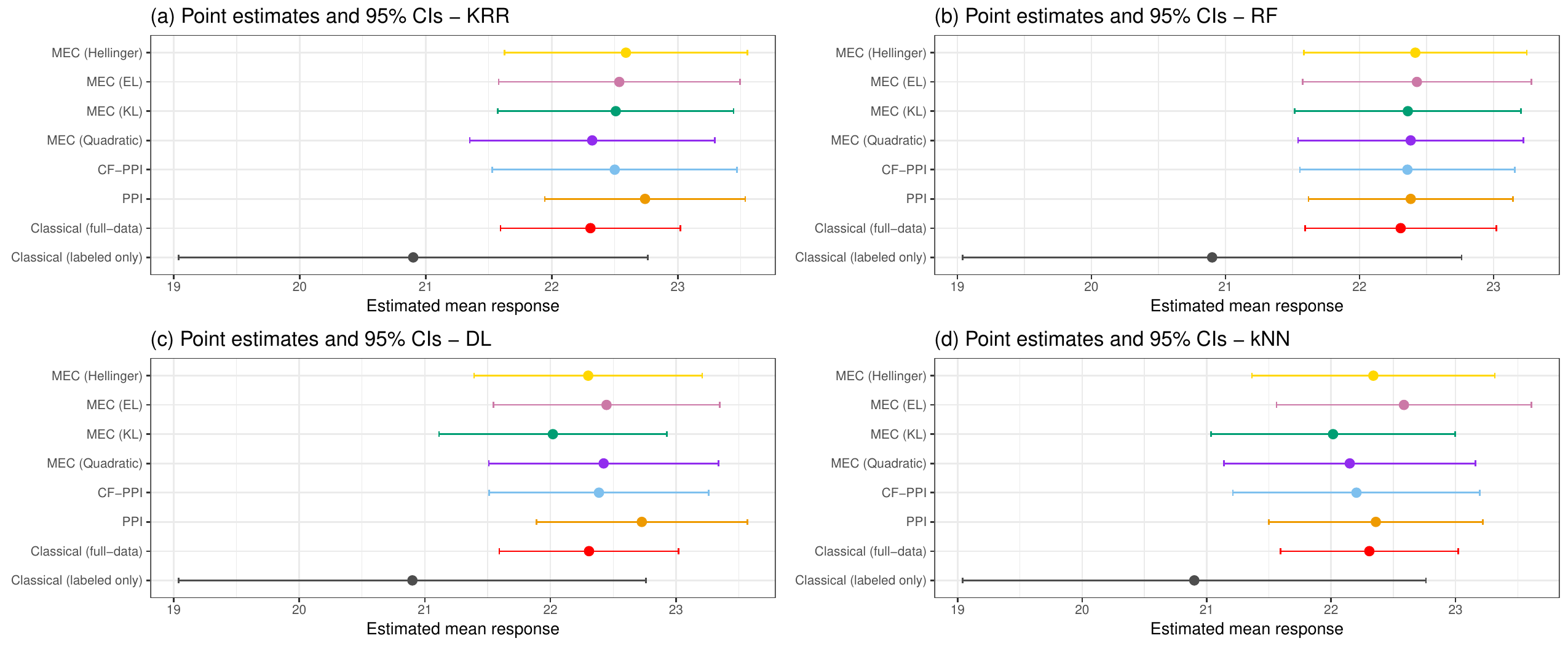 }
	\caption{Real-data application (off-aligned case). In this scenario, \(\bar{Y}_{n}\) differs noticeably from \(\bar{Y}_{\mathrm{full}}\). The bias-correction property of PPI, CF--PPI, and MEC is evident: point estimates are pulled toward the reference mean, and the resulting 95\% confidence intervals are tighter than those of the classical labeled-only estimator.}
	\label{fig:Real_Data_Application_Off_case}
\end{figure}

Because we construct the labeled set by random subsampling of the full data, \(\bar{Y}_{n}\) may differ from \(\bar{Y}_{\mathrm{full}}\) due to sampling variability. To assess the robustness of the bias–correction in PPI-based debiased methods (i.e., vanilla PPI, CF--PPI, and MEC), we present two scenarios: one in which \(\bar{Y}_{n}\) is close to \(\bar{Y}_{\mathrm{full}}\), and another in which they differ noticeably. 

A desirable debiased method should (i) produce a point estimate close to the reference value \(\bar{Y}_{\mathrm{full}} = 22.307\), and (ii) yield a 95\% confidence interval that is tighter than the labeled-only benchmark \(\bar{Y}_{n} \pm 1.96\,\widehat{\mathrm{SE}}_{n}\) yet not tighter than the full-data benchmark \(\bar{Y}_{\mathrm{full}} \pm 1.96\,\widehat{\mathrm{SE}}_{\mathrm{full}}\). Observing these patterns indicates that the debiasing mechanism is operating as intended. Figure~\ref{fig:Real_Data_Application_Aligned_case} and
Figure~\ref{fig:Real_Data_Application_Off_case} present the results. In the aligned
case (Figure~\ref{fig:Real_Data_Application_Aligned_case}), all estimators
deliver broadly consistent point estimates with only modest differences in
precision. The debiased methods (PPI, CF--PPI, and MEC) yield intervals that
are tighter than those of the classical labeled-only estimator. Vanilla PPI
attains the narrowest intervals among the debiased methods, which may be indicative
of overfitting due to the label-reuse. In the off-aligned case
(Figure~\ref{fig:Real_Data_Application_Off_case}), the bias correction of
CF--PPI and MEC is more pronounced than for vanilla PPI: their point estimates shift toward the reference mean. Because this is a real-data setting, comparisons between CF--PPI and MEC are difficult; see simulation experiments for comparison.

\end{document}